\documentclass[11pt]{article}
\usepackage[left=1in,top=1in,right=1in,bottom=1in,letterpaper]{geometry}

\usepackage[utf8]{inputenc} %
\usepackage[T1]{fontenc}    %
\usepackage{hyperref}       %
\usepackage{url}            %
\usepackage{booktabs}       %
\usepackage{amsfonts}       %
\usepackage{nicefrac}       %
\usepackage{microtype}      %
\usepackage{xcolor}         %
\usepackage[most]{tcolorbox}
\usepackage{enumitem}
\usepackage{caption}
\usepackage{subfigure}
\usepackage{subcaption}
\usepackage[export]{adjustbox}
\usepackage{algorithm, algpseudocode}
\usepackage{tikz}
\usetikzlibrary{matrix,decorations.pathreplacing}
\usetikzlibrary{positioning, arrows.meta, shapes.geometric, fit}
\usepackage{float}
\usepackage[most]{tcolorbox}
\definecolor{linkColor}{HTML}{E74C3C}
\definecolor{pearcomp}{HTML}{B97E29}
\definecolor{citeColor}{HTML}{2980B9}
\definecolor{urlColor}{HTML}{1D2DEC}
\definecolor{conjColor}{HTML}{9ab569}
\usetikzlibrary{shadows}

\newcounter{exa}
\renewcommand{\theexa}{\arabic{exa}} %
\definecolor{namecolor}{RGB}{173,216,230}      %
\definecolor{relationcolor}{RGB}{204,232,204}  %
\definecolor{locationcolor}{RGB}{255,218,185}  %

\tcbset{
  myexample/.style={
    enhanced,
    colback=yellow!10!white,
    colframe=red!50!black,
    fonttitle=\scshape,
    titlerule=0pt,
    attach boxed title to top center={yshift=-2mm},
    boxed title style={colback=yellow!10!white, colframe=red!50!black},
    coltitle=red!50!black,
    drop shadow,
    title={\refstepcounter{exa}Example~\theexa:\quad #1},
    highlight math style={reset,colback=LightBlue!50!white,colframe=Navy}
  }
}

\newtcolorbox{texample}[1][]{myexample={#1}}

\usepackage[utf8]{inputenc} %
\usepackage{natbib}
\usepackage[T1]{fontenc}    %
\usepackage{titletoc}
\usepackage{hyperref}       %
\usepackage{url}            %
\usepackage{booktabs}       %
\usepackage{amsfonts,amsmath,amssymb,amsthm}       %
\usepackage{nicefrac}       %
\usepackage{microtype}      %
\usepackage{xcolor}         %
\usepackage{enumitem}
\usepackage{mathtools}

\usepackage{graphicx}
\usepackage{subcaption}
\usepackage[normalem]{ulem}
\usepackage{ragged2e}
\usepackage{cleveref}

\usepackage{bbm, dsfont}
\usepackage{wrapfig,bm,comment,color}
\usepackage{breakurl,epsfig,epsf,fmtcount,semtrans,multirow,multicol,boldline}
\usepackage{tcolorbox}
\usepackage{tikz}
\usepackage{pifont}

\usepackage{appendix}
\usepackage{xcolor}

\definecolor{darkred}{RGB}{150,0,0}
\definecolor{darkgreen}{RGB}{0,150,0}
\definecolor{darkblue}{RGB}{0,0,200}
\hypersetup{colorlinks=true, linkcolor=darkred, citecolor=darkgreen, urlcolor=darkblue}

\newtheorem{theorem}{Theorem}%

\newtheorem{assumption}{Assumption}

\newtheorem{lemma}{Lemma}
\newtheorem{corollary}{Corollary}
\newtheorem{proposition}{Proposition}
\newtheorem{definition}{Definition}

\def \endprf{\hfill {\vrule height6pt width6pt depth0pt}\medskip}

\newcommand{\cln}[1]{\red{}}

\newcommand{\bxi}{\boldsymbol{\xi}}

\newcommand{\gbl}{\g^{(\ell)}}

\newcommand{\beq}{\begin{equation}}
\newcommand{\ba}{\begin{align}}
\newcommand{\ea}{\end{align}}

\newcommand{\eeq}{\end{equation}}

\newcommand{\A}{{\mtx{A}}}

\newcommand{\B}{{{\mtx{B}}}}

\newcommand{\Ib}{{{\mtx{I}}}}

\newcommand{\Sb}{{{\mtx{S}}}}

\newcommand{\diag}[1]{\text{diag}(#1)}

\newcommand{\Lc}{{\cal{L}}}

\newcommand{\Pb}{{\mtx{P}}}

\newcommand{\Eb}{{\mtx{E}}}

\newcommand{\one}[1]{{\mathbbm{1}}(#1)}

\newcommand{\Db}{{\mtx{D}}}

\newcommand{\onebb}{{\mathbf{1}}}

\newcommand{\z}{{\vct{z}}}

\newcommand{\dist}[1]{\texttt{dist}(#1)}

\newcommand{\Ac}{\mathcal{A}}

\newcommand{\bte}{{\boldsymbol{\theta}}}

\newcommand{\Sc}{\mathcal{S}}

\newcommand{\blkx}[2]{\x_{#1, [#2]}}

\newcommand{\w}{\vct{w}}

\newcommand{\ab}{\vct{a}}
\newcommand{\bb}{\vct{b}}

\newcommand{\g}{{\vct{g}}}

\newcommand{\Tc}{\mathcal{T}}

\newcommand{\x}{\vct{x}}

\newcommand{\y}{\vct{y}}

\newcommand{\W}{\mtx{W}}

\definecolor{emmanuel}{RGB}{255,127,0}

\newcommand{\pb}{{\vct{p}}}

\newcommand{\R}{\mathbb{R}}

\newcommand{\Z}{\mathbb{Z}}

\newcommand{\E}{\operatorname{\mathbb{E}}}

\newcommand{\eb}{\vct{e}}

\newcommand{\vct}[1]{\bm{#1}}
\newcommand{\mtx}[1]{\bm{#1}}

\newcommand{\X}{{\mtx{X}}}

\newif\ifdraft
\drafttrue

\ifdraft
    
    \newcommand{\shaw}[1]{\textcolor{violet}{Shaw:#1}}
    
    \newcommand{\red}{\textcolor{darkred}}

\else     
    \newcommand{\shaw}[1]{}
    \newcommand{\red}{}

\fi

\newcommand{\query}{\mathsf{Q}}
\newcommand{\key}{\mathsf{K}}
\newcommand{\val}{\mathsf{V}}
\newcommand{\outp}{\mathsf{O}}

\newcommand{\mQuery}{\W_\query}
\newcommand{\mKey}{\W_\key}
\newcommand{\mValue}{\W_\val}
\newcommand{\mOutput}{\W_\outp}

\newcommand{\mKQ}{\W_{\key\query}}

\newcommand{\test}{\mathsf{test}}

\newcommand{\attn}{\mathsf{Attn}}

\newcommand{\hc}{\mathsf{GC}}

\newcommand{\poly}{\mathsf{poly}}

\newcommand{\unif}{\mathsf{Unif}}

\newcommand{\bzero}{\mathbf{0}}

\title{Transformers Provably Learn to Internalize Chain-of-Thought}
\author{
Yixiao Huang$^{1}\thanks{Correspondence: yixiaoh@berkeley.edu}$ \qquad
Hanlin Zhu$^1$ \qquad  
Zixuan Wang$^2$
\\
Jiantao Jiao$^1$
\quad
Stuart Russell$^1$
\quad 
Somayeh Sojoudi$^1$
\quad
Song Mei$^1$
\\
\\
$^1$UC Berkeley
\qquad
$^2$Princeton University
}
\date{}

\begin{document}

\maketitle
\begin{abstract}
Chain-of-Thought (CoT) prompting substantially improves the sample efficiency 
of transformers, reducing the complexity of tasks like parity learning from 
exponential to polynomial in the input length. However, generating explicit 
reasoning steps at inference is computationally expensive. Implicit 
Chain-of-Thought (ICoT) has emerged as a promising empirical remedy that 
trains models to internalize intermediate steps within their hidden states, 
but its theoretical foundations remain poorly understood. We give the first 
theoretical analysis of ICoT, proving that an 
$L$-layer transformer trained under our proposed Log-ICoT curriculum learns 
$k$-parity with $\poly(n)$ samples and $L = \log_2 k$ training stages. This 
matches the sample efficiency of explicit CoT while eliminating its 
inference overhead, and extends prior one-layer parity guarantees to 
multi-layer architectures. Compared to standard ICoT, which removes 
thinking tokens one at a time, Log-ICoT removes them in geometric chunks, 
reducing the number of stages from linear in $k$ to logarithmic. 
Experiments on multi-layer transformers confirm the theory and visualize 
how reasoning is progressively absorbed into deeper layers.
\end{abstract}

\section{Introduction}
\label{sec:intro}

Chain-of-Thought (CoT) reasoning \citep{wei2022chain} has become a 
cornerstone for enabling Large Language Models (LLMs) to solve challenging 
tasks. By generating explicit intermediate tokens, CoT decomposes complex 
problems into manageable sub-steps, significantly boosting performance. 
However, this reasoning power comes at a high cost: the explicit generation 
of thinking tokens substantially increases inference latency and 
computational overhead.

To address this, Implicit CoT (ICoT) \citep{deng2024explicit} has emerged 
as a promising paradigm. ICoT trains models to internalize reasoning steps 
within their hidden states by progressively removing intermediate steps 
during fine-tuning, eliminating the need for explicit token generation at 
inference time. 
Despite promising empirical evidence, the underlying mechanics of when and how models internalize these steps remain poorly understood. Furthermore, the standard ICoT curriculum, which removes intermediate steps 
one by one, scales linearly with the length of the reasoning chain, leading 
to inefficient training.

\paragraph{Parity learning as a testbed.}
We study these questions in the setting of $k$-parity learning, a classic 
problem that has emerged as a canonical testbed for the role of intermediate 
supervision in transformer training \citep{kim2024transformers, wen2024sparse,li2026dissecting}. 
The task is provably hard for finite-precision gradient-based methods 
without intermediate supervision: any algorithm using $\poly(n)$ samples 
and queries cannot achieve non-trivial accuracy 
\citep{shalev2017failures, wies2022sub, kim2024transformers}. With explicit 
CoT supervision, however, even one-layer transformers can learn to solve parity 
efficiently \citep{kim2024transformers, wen2024sparse}. This makes parity an ideal 
task for studying how CoT reshapes optimization to make expressible 
solutions reachable, and how ICoT preserves this advantage while 
eliminating the inference-time cost of explicit reasoning.
\begin{figure}[!t]
    \centering
    \subfigure{\includegraphics[width=0.9\textwidth]{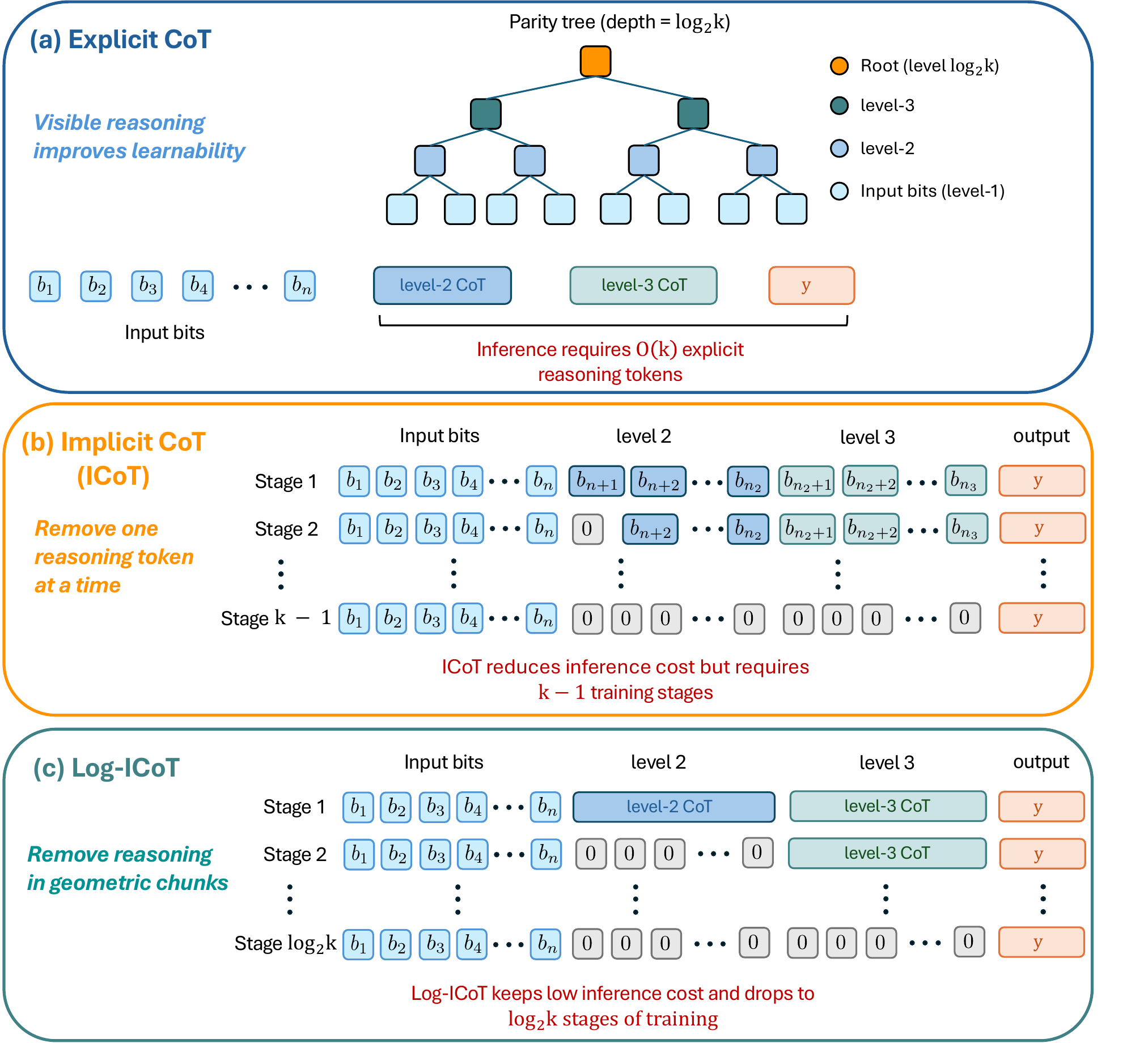}}
\caption{\small \textbf{Comparison of training paradigms on the $k$-parity task.} 
\textbf{(a)} Explicit CoT supervises every node of the parity tree, achieving 
sample-efficient learning at the cost of $\Omega(k)$ sequential reasoning 
tokens at inference. \textbf{(b)} Implicit CoT (ICoT)~\citep{deng2024explicit} 
internalizes reasoning into hidden states by removing intermediate tokens 
one at a time, eliminating the inference cost but requiring $k - 1$ training 
stages. \textbf{(c)} Our Log-ICoT curriculum removes tokens in geometric 
chunks aligned with the parity tree's levels, reducing the number of 
training stages to $\log_2 k$ while preserving inference efficiency. 
}
    \label{fig:teaser}
    \vspace{-3mm}
\end{figure}

\paragraph{Our contributions.}
We present the first theoretical analysis of ICoT, establishing that 
the sample efficiency of explicit CoT can be preserved when reasoning is 
internalized into hidden states. Specifically:
\begin{itemize}
    \item \textbf{Log-ICoT curriculum.} We introduce Log-ICoT, a new ICoT 
    curriculum that removes intermediate CoT steps in geometric increments 
    matching the parity tree's recursive structure (\Cref{fig:teaser}). 
    Compared to the linear-stage curriculum of standard 
    ICoT~\citep{deng2024explicit}, Log-ICoT requires only 
    $L = \log_2 k$ training stages.
    
    \item \textbf{Convergence guarantee for multi-layer ICoT.} 
    In \Cref{thm:icot-cvg}, we prove that an $L$-layer transformer trained 
    under Log-ICoT solves $k$-parity with $\poly(n)$ samples and $\log_2 k$ 
    gradient updates, matching the sample complexity of explicit 
    CoT~\citep{kim2024transformers, wen2024sparse} while producing the 
    answer in a single forward pass at inference (vs. $O(k)$ 
    sequential steps for autoregressive CoT generation). This extends 
    prior parity guarantees from one-layer to multi-layer architectures.
    \item \textbf{Resolving multi-layer training challenges.} 
Our analysis identifies two key challenges in analyzing the training 
dynamics of multi-layer transformers and resolves them via three design 
choices: \emph{gated connections} \citep{srivastava2015highway} to prevent 
representation collapse across layers, a \emph{customized causal attention mask} and 
\emph{integer quantization} of attention weights combined with a stage-wise 
learning rate to control error propagation across training stages.
    \item \textbf{Empirical verification.} 
    We verify our theoretical findings through numerical experiments on a 
    multi-layer transformer (\Cref{sec:exp}).
\end{itemize}

\subsection{Related Work}
\paragraph{Chain-of-thought and its variants.}

Chain-of-thought (CoT)~\citep{wei2022chain} can enhance LLMs' reasoning capabilities by letting models output the intermediate thought tokens. It can be encouraged only in the prompt~\citep{khot2022decomposed,zhou2022least} or be included in the training set~\citep{yue2023mammoth,yu2023metamath,wang2023math,shao2024deepseekmath}. A line of theoretical works also study the advantages of the CoT method via expressivity~\citep{liu2022transformers,feng2023towards,merrill2023expressive,li2024chain} or training dynamics~\citep{zhu2024towards,wen2024sparse,kim2024transformers}. A more recent line of work studies variants of CoT to improve the model performance or reduce the inference cost, including ICoT~\citep{deng2023implicit,deng2024explicit}, pause tokens~\citep{goyal2023think}, filler tokens~\citep{pfau2024let}, planning tokens~\citep{wang2023guiding}, token assorted~\citep{su2025token}, chain of continuous thought~\citep{hao2024training}, etc. Compared to CoT, its variants are much less explored, especially theoretically. \citet{london2025pause} theoretically show that the pause token can increase expressivity, and \citet{zhu2025emergence, zhu2025reasoning, gozeten2025continuous} theoretically show the advantage of continuous CoT. Our paper contributes to the field by providing the first theoretical results demonstrating the advantage of the ICoT method, i.e., it can bridge the gap between expressivity and learnability.

\paragraph{Training dynamics of transformers.} 
There is a very rich line of literature studying the optimization of transformer-based models~\citep{jelassi2022vision,bietti2023birth,mahankali2023one,fu2023can,zhang2024trained,li2024mechanics,huang2024context,ildiz2024self}. 
Many works focus on how certain attention patterns are formed during training~\citep{tian2023scan,tian2023joma,guo2024active}.
Another line of work focuses on various reasoning abilities or patterns of transformers through the lens of training dynamics~\citep{boix2023transformers,nichani2024transformers,wang2024transformers,renlearning,zhu2024towards,guo2025llms,huang2025transformers2,chen2024distributional,huang2025generalization, ma2026breaking}. 
The most related works to our setting are \citet{wen2024sparse,kim2024transformers}, which show that CoT enables much better sample efficiency when learning parity functions with secret indices. Our work follows the setting of \citet{kim2024transformers} and takes a further step to study the sample complexity of ICoT via training dynamics. Our work shows that ICoT enjoys both sample efficiency and low inference-time cost. Moreover, we provide an end-to-end analysis on multi-layer transformers, 
which is more practical and challenging than the one-layer transformer 
considered in \citet{wen2024sparse,kim2024transformers}. Our analysis also
builds on \citet{wang2025learning}, who studied attention-only multi-layer transformers. In contrast, the link function in our setting introduces 
additional cross-layer noise, which we suppress via gated 
connections.

\section{Preliminaries}
\label{sec:prelim}

\paragraph{Notations.} 
For any integer $N > 0$, $[N] := \{1, 2, \ldots, N\}$. We use lower-case 
and upper-case bold letters (e.g., $\ab$, $\A$) for vectors and matrices. 
Let $\eb_{d, i} \in \R^d$ denote the standard basis vector with a $1$ in 
the $i$-th coordinate and $\Ib_K \in \R^{K \times K}$ the identity. The 
multi-linear inner product of vectors $\x_1, \ldots, \x_r \in \R^d$ is 
defined as
\[
\langle \x_1, \ldots, \x_r \rangle := \sum_{i = 1}^d \prod_{j = 1}^r x_{j, i},
\]
which generalizes the standard inner product: 
$\langle \x_1 \rangle = \x_1^\top \onebb_d$ and 
$\langle \x_1, \x_2 \rangle = \x_1^\top \x_2$. Let \(q:\mathbb R\to\mathbb Z\) denote nearest-integer rounding and extend \(q\) entrywise to matrices.

\subsection{Task 
Settings} \label{sec:task-defn}

Assume $n \geq k \geq 2$ with $k = \Theta(n)$, and let $\Sc$ denote the 
set of all size-$k$ subsets of $[n]$. Following 
\citet{kim2024transformers, wen2024sparse}, we consider the $k$-parity 
problem: given a secret index set $S \sim \unif(\Sc)$ and an input 
$\bb \sim \unif(\{\pm 1\}^n)$, the label is the parity 
$y = p_S(\bb) := \prod_{j \in S} b_j$. Here $n$ is the input length and 
$k = |S|$ controls task hardness. We work in the finite-sample setting 
with a dataset of $B$ i.i.d.\ samples $\{(\bb^i, y^i)\}_{i = 1}^B$.

\paragraph{Task decomposition.}
We assume $k = 2^L$ for an integer $L$ and decompose the problem into a 
hierarchy of two-parities (\Cref{fig:teaser}). The task forms a 
complete binary tree of height $L$ with $2k - 1$ total nodes. Leaf nodes 
at level $1$ represent the input bits $\{b_j : j \in S \}$ in the secret set $S$, and the $k - 1$ internal 
nodes are indexed $b_{n+1}, \ldots, b_{n+k-1}$ from bottom to top, 
left to right. We write $n_\ell := n + k(1 - 2^{-(\ell - 1)})$ for the 
maximum index at level $\ell$ (so $n_1 = n$ and $n_{L+1} = T := n + k - 1$). For each position \(j\in[T]\), let \(p[j]\) be its parent in the parity tree if it exists, and set \(p[j]=\emptyset\) otherwise. For each internal node \(b_m\) (\(m>n\)), let \(c_1[m],c_2[m]\) denote its two children with \(c_1[m]<c_2[m]<m\). Set \(h[j]=1\) for \(j\le n\), and define \(h[m]\) for \(m>n\) by \(n_{h[m]-1}<m\le n_{h[m]}\). The recursive 
parity relation $b_m = b_{c_1[m]} \cdot b_{c_2[m]}$ holds at every 
internal node, and the root $b_T$ equals the target $y$. A concrete construction of the parity tree is given in \Cref{fig:tree-diagram} of \Cref{app:sec:parity}.

\paragraph{Loss and oracle.}
Let $f_\theta : \{\pm 1\}^n \to \R$ be a differentiable parametrized model. 
We consider the empirical squared loss
\[
\Lc_B(\theta) := \frac{1}{2B} \sum_{i = 1}^B \bigl(y^i - f_\theta(\bb^i)\bigr)^2,
\]
and evaluate performance by the population $L_2$ error
\[
\mathcal E(\theta) := \E_{S, \bb} \bigl[(p_S(\bb) - f_\theta(\bb))^2\bigr].
\]
An \emph{$\varepsilon$-approximate gradient oracle} for $\Lc_B$ is a 
(possibly randomized) map $\tilde \nabla \Lc_B$ satisfying 
$\|\tilde \nabla \Lc_B(\theta) - \nabla \Lc_B(\theta)\|_2 \leq \varepsilon$ 
for all queried $\theta$.

\begin{proposition}[\citet{kim2024transformers}, Theorem 2]\label{prop:sq}
Suppose $B = \Omega(n^{\nu})$ and $\|\nabla f_\theta\| = O(n^{\nu_1})$. 
Then there exists an $O(n^{-\nu_2})$-accurate gradient oracle $\tilde \nabla$ 
such that, with probability at least $1 - \exp(-\Omega(n))$ over the 
random sampling, the output $\theta(\Ac)$ of any iterative algorithm $\Ac$ 
making at most $O(n^{\nu_3})$ queries to $\tilde \nabla \Lc_B$ satisfies
$\mathcal E(\theta(\Ac)) \geq 1 - O(n^{-\nu_4})$, where 
$\nu = 4\nu_1 + 4\nu_2 + 2\nu_3 + 2\nu_4 + 1$.
\end{proposition}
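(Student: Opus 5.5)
The plan is the standard statistical-query argument: show that the gradient of $\Lc_B$ carries almost no information about $S$, so a suitably chosen oracle can return an $S$-independent answer.

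\textbf{Step 1: variance bound.} Expanding the loss gives
\[
\nabla \Lc_B(\theta) = \frac1B\sum_i \bigl(f_\theta(\bb^i)-y^i\bigr)\nabla f_\theta(\bb^i).
\]
The term $\frac1B\sum_i f_\theta\nabla f_\theta$ does not depend on $S$. The only $S$-dependent part is the correlation term
\[
g_S(\theta)=\frac1B\sum_i p_S(\bb^i)\nabla f_\theta(\bb^i).
\]
Its population counterpart is $\E_{\bb}[p_S(\bb)\nabla f_\theta(\bb)]$. Coordinatewise, this is the Fourier coefficient of $\nabla f_\theta$ at $S$. By Parseval, summing the squares over all $S\in\Sc$ gives at most $\E\|\nabla f_\theta\|^2=O(n^{2\nu_1})$. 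Since $|\Sc|=\binom nk$ is exponential in $n$ when $k=\Theta(n)$, we get
\[
\E_S\bigl\|\E_{\bb}[p_S\nabla f_\theta]\bigr\|^2\le O(n^{2\nu_1})/\binom nk.
\]
This is exponentially small.

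\textbf{Step 2: finite-sample control.} We must bound the deviation of the empirical correlation $g_S(\theta)$ from its mean. For a fixed $\theta$, each coordinate of $g_S$ is an average of $B$ bounded terms of size $O(n^{\nu_1})$. Hoeffding's inequality gives deviation about $n^{\nu_1}\sqrt{\log(1/\delta)/B}$.

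The main obstacle is that $\theta$ is chosen adaptively by $\Ac$, so we cannot fix it in advance. I would handle this as in Kim and Suzuki. The oracle answers every query with the $S$-independent value
\[
\tilde\nabla(\theta)=\frac1B\sum_i f_\theta(\bb^i)\nabla f_\theta(\bb^i)-\bar g(\theta),
\]
where $\bar g(\theta)$ is the empirical average of $y\nabla f_\theta$ under a surrogate label distribution independent of $S$ (uniformly random labels, or simply $0$). Then the whole trajectory of $\Ac$ is a deterministic function of the samples $\bb^i$ and the oracle's internal randomness, independent of $S$. Along that fixed trajectory there are only $O(n^{\nu_3})$ queries. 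By Markov/Chebyshev over the random $S$, combined with a second-moment bound in which the empirical Fourier mass over $S$ is at most $\frac1B\sum_i\|\nabla f_\theta(\bb^i)\|^2\cdot\binom nk/B$-type terms, the oracle error at each query is at most $n^{-\nu_2}$ except with small probability. A union bound over the queries then shows the oracle is valid for the true $S$ with high probability.

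\textbf{Step 3: lower bound on the error.} The output $\theta(\Ac)$ is independent of $S$. We can therefore write
\[
\mathcal E = \E_S\E_{\bb}\bigl[(p_S-f)^2\bigr]\ge 1-2\,\E_S\,\E_{\bb}[p_S f]+\E f^2 .
\]
By Parseval, $\E_S\E_{\bb}[p_S f]\le \sqrt{\E f^2/\binom nk}$. Minimizing over $\E f^2$ gives $\mathcal E\ge 1-O(1/\binom nk)$, which is at least $1-O(n^{-\nu_4})$.

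\textbf{Bookkeeping.} To get the stated probability $1-\exp(-\Omega(n))$, the failure probabilities must be routed through the sample-complexity requirement. Balancing the deviation $n^{\nu_1}/\sqrt B$ against the target accuracy $n^{-\nu_2}$, squared twice through Chebyshev and the union bound over the $n^{\nu_3}$ queries and the $n^{-\nu_4}$ slack, yields the requirement $B=\Omega(n^{4\nu_1+4\nu_2+2\nu_3+2\nu_4+1})$. Since the statement is cited from Kim and Suzuki, I would follow their constants rather than rederive them.
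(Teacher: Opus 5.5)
The paper gives no proof of this proposition; it only cites Kim and Suzuki. Your outline is the standard one: an oracle that hides $S$, Chebyshev over $S$ along the canonical $S$-independent trajectory, then Parseval for the final predictor. However, the step that carries all the finite-sample content is missing.

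\textbf{The finite-sample second-moment bound.} Step~1 is a population statement, but the oracle must approximate $\nabla\Lc_B$. On the sample, the $\binom nk$ vectors $(p_S(\bb^i))_{i\le B}$ lie in $\R^B$, so Parseval fails. Your ``$\binom nk/B$-type'' bound captures only the diagonal $i=i'$ terms of
\[
\E_S\Bigl\|\frac1B\sum_i p_S(\bb^i)\nabla f_\theta(\bb^i)\Bigr\|^2=\frac1{B^2}\sum_{i,i'}\E_S\bigl[p_S(\bb^i)p_S(\bb^{i'})\bigr]\bigl\langle\nabla f_\theta(\bb^i),\nabla f_\theta(\bb^{i'})\bigr\rangle .
\]
The off-diagonal terms cannot be bounded for every sample: if all $\bb^i$ coincide, then $\|g_S(\theta)\|=\|\nabla f_\theta(\bb^1)\|$ for every $S$. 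You therefore need an explicit event on $\bb^1,\dots,\bb^B$ that does not involve $\theta$. Only such an event survives the adaptivity you flag, and it is exactly where the probability $1-\exp(-\Omega(n))$ over sampling comes from.

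For instance, Hoeffding plus a union bound over at most $4^n$ pairs gives $\max_{S\neq S'}\bigl|\frac1B\sum_i p_S(\bb^i)p_{S'}(\bb^i)\bigr|=O(\sqrt{n/B})$ with that probability. Gershgorin then bounds the top eigenvalue of the normalized Gram matrix of the $p_S$ on the sample by $1+\binom nk\,O(\sqrt{n/B})$. Hence $\E_S\|g_S(\theta)\|^2=O(n^{2\nu_1}\sqrt{n/B})$ uniformly in $\theta$. Chebyshev at tolerance $n^{-\nu_2}$, a union bound over $n^{\nu_3}$ queries, and a failure budget $n^{-\nu_4}$ then give precisely $\nu=4\nu_1+4\nu_2+2\nu_3+2\nu_4+1$; ``squared twice'' does not actually derive this. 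Alternatively, requiring every $\bb^i\odot\bb^{i'}$ with $i\neq i'$ to have Hamming weight near $n/2$ makes the off-diagonal $\E_S$ factors exponentially small, giving the sharper $O(n^{2\nu_1}/B)$ for polynomial $B$. Your bookkeeping instead attributes the $\exp(-\Omega(n))$ to the Chebyshev-over-$S$ failure probability. That failure probability is only polynomially small and comes from a different source of randomness.

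\textbf{The oracle construction.} An oracle that always returns the $S$-independent vector is not $O(n^{-\nu_2})$-accurate. For the small set of $S$ with $\|g_S(\theta_q)\|>n^{-\nu_2}$ at some canonical query $\theta_q$, it violates the definition. Since $\mathcal E$ averages over all $S$, ``valid with high probability over $S$'' does not yield the statement. The fix is to return the $S$-independent value when it lies within $n^{-\nu_2}$ of $\nabla\Lc_B(\theta)$, and the true gradient otherwise. Then the run coincides with the canonical run (output $\theta_0$) only for good $S$, and Step~3 must be redone. With $a=\E f_{\theta_0}^2$ and $N=\binom nk$, dropping the bad $S$ gives
\[
\mathcal E\ \ge\ \Pr_S(\mathrm{good})(1+a)-2\sqrt{a/N}\ \ge\ 1-\Pr_S(\mathrm{bad})-O(1/N).
\]
This is where the $n^{-\nu_4}$ slack actually enters. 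Your conclusion $\mathcal E\ge 1-O(1/\binom nk)$, which never uses $\nu_4$, is an artifact of the invalid oracle and contradicts your own bookkeeping.
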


\subsection{Model}\label{sec:model}

\paragraph{Architecture.}
We consider an $L$-layer simplified transformer following 
\citet{wang2025learning}. {To process the entire dataset in parallel, 
we stack the $j$-th bit across all $B$ samples into a vector 
$\bb_j = (b_j^1, \ldots, b_j^B)^\top \in \R^B$, so the input data 
$\Db = [\bb_1, \ldots, \bb_T] \in \R^{B \times T}$ is processed as a 
single batch using a shared positional encoding.}

\paragraph{Embedding layer.} 
We set $T := n + k - 1$ and embedding dimension $d := T + B(L + 1)$. 
For each $j \in [T]$, the embedding is
\[
\x_j = \Eb(\Db)_j = 
\begin{bmatrix}
\pb_j \\ \bb_j \\ \bzero_{B \cdot L}
\end{bmatrix} \in \R^d,
\]
where $\pb_j := \eb_{T, j} \in \R^T$ is a one-hot positional encoding and 
the final $B \cdot L$ dimensions are reserved for the residual stream 
\citep{elhage2021mathematical}.

\paragraph{Attention layer.}
We use a single-head attention per layer, reparameterizing the key-query 
matrix as $\mKQ := \mKey \mQuery^\top$ following 
\citet{tian2023scan, zhu2024towards, li2024mechanics}. We optimize only 
the upper $T \times T$ block $\W \in \R^{T \times T}$, fixing other 
entries to zero so that attention scores depend only on positional 
encodings. The value matrix is the identity:
\[
\mKQ = \begin{bmatrix}
\W & \bzero_{T \times B_L} \\
\bzero_{B_L \times T} & \bzero_{B_L \times B_L}
\end{bmatrix}, \quad \mValue = \Ib_d, \quad B_L := B(L + 1).
\]

\begin{definition}[Causal Self-Attention]
\label{sec:attn}
Given $\W \in \R^{T \times T}$, the causal self-attention module 
$\mathrm{Attn}(\cdot; \W) : \R^{d \times T} \to \R^{d \times T}$ is
\[
\mathrm{Attn}(\X; \W) = \X \cdot \mathbb{S}(\X^\top \W_{\mathsf{KQ}} \X + C),
\]
where $\mathbb{S}(\cdot)$ is the column-wise softmax and $C \in \R^{T \times T}$ 
is the {level-restricted} causal mask
\[
C_{j, m} = \begin{cases}
0 & {\text{if } m \leq n \text{ and } j < m,} \\
0 & {\text{if } m > n \text{ and } j \leq n_{h[m] - 1},} \\
-\infty & \text{otherwise.}
\end{cases}
\]
\end{definition}
\noindent A detailed discussion of the causal mask is provided in \Cref{sec:challenges}

\paragraph{MLP layer.}
The MLP layer applies a fixed link function $\phi : [-1, 1] \to [-1, 1]$ 
pointwise.

\begin{definition}[Link function]\label{def:link}
Following \citet{kim2024transformers}, $\phi$ is symmetric and satisfies 
$\phi(0) = -1$, $\phi(\pm 1) = 1$, so that 
$\phi\bigl(\frac{a + b}{2}\bigr) = a b$ for $a, b \in \{\pm 1\}$. We 
further assume $\phi$ is sufficiently smooth with 
$\phi'(0) = \phi'(\pm 1) = 0$, allowing the local Taylor expansion 
$\phi(t) = -1 + ct^2 + O(|t|^4)$ and $\phi'(t) = 2ct + O(|t|^3)$ around $0$ 
for some constant $c > 0$.
\end{definition}
\paragraph{Gated connections.}
We use gated connections \citep{srivastava2015highway} as a structured 
alternative to standard residual connections. Conventional residual updates 
can suffer a trade-off between representation collapse and gradient vanishing 
\citep{xie2023residual,zhu2024hyper}, and several variants modify the residual 
pathway to resolve this, including ResiDual~\citep{xie2023residual} and 
hyper-connections~\citep{zhu2024hyper}, both of which expand the residual 
stream into multiple parallel paths. We instead use a \emph{position-wise} 
gated connection in the spirit of GTrXL~\citep{parisotto2020stabilizing}, 
where each layer $\ell$ has a gate vector $\gbl \in \R^T$ controlling how 
the layer mixes the block update with the existing representation at each 
token position:
\[
\hc(\A, \B; \g) := \A \diag{\g} + \B (\Ib_T - \diag{\g}).
\]
\begin{definition}[Multi-layer Transformer]\label{def:transformer}
Let $L$ be the depth. The trainable parameters are 
$\bte := \{\W^{(\ell)}\}_{\ell \in [L]}$. 
Initializing $\X^{(0)} = \Eb(\Db) \in \R^{d \times T}$, the layer update is
\[
\X^{(\ell)} = \X^{(\ell - 1)} + \mOutput^{(\ell)} \hc\!\left(\phi(\attn(\X^{(\ell - 1)}; \W^{(\ell)})), \X^{(\ell - 1)}; \gbl\right), \quad \ell \in [L].
\]
The network output is $\Tc_\bte(\Db) = \X^{(L)}$. For $m \in [T]$, the 
output at position $m$ is
\[
\Tc_\bte(\Db)_m = \x_m + \sum_{\ell = 1}^L \mOutput^{(\ell)} \!\left( g_m^{(\ell)} \phi\Bigl({\sum_{j = 1}^{m - 1}} \sigma_j(\w_m^{(\ell)}) \x_j^{(\ell - 1)}\Bigr) + (1 - g_m^{(\ell)}) \x_m^{(\ell - 1)} \right),
\]
where $w_{j, m}^{(\ell)} = \pb_j^\top \W^{(\ell)} \pb_m$, and 
$\sigma_j(\w_m^{(\ell)})$ is the corresponding attention score.
The readout layer $\Psi_\ell \in \R^{d \times B}$ decodes the output as 
$f_\bte^{(\ell)}(\Db) = \Psi_\ell^\top \Tc_\bte(\Db)$.
\end{definition}

\paragraph{Block-shift output and readout.}
We fix $\mOutput^{(\ell)}$ as a block-shift operator that moves content 
from residual block $[\ell]$ to block $[\ell + 1]$:
\[
\mOutput^{(\ell)} = \begin{bmatrix}
\bzero_{T \times T} & \bzero_{T \times B_L} \\
\bzero_{B_L \times T} & \eb_{L + 1, \ell + 1} \eb_{L + 1, \ell}^\top \otimes \Ib_B
\end{bmatrix},
\]
and the readout $\Psi_\ell$ as a selector for block $[\ell + 1]$:
\[
\Psi_\ell = \begin{bmatrix}
\bzero_{T \times B} \\
\eb_{L + 1, \ell + 1} \otimes \Ib_B
\end{bmatrix}.
\]
{With this construction, $\Psi_{\ell'}^\top \mOutput^{(\ell)}$ is 
non-zero iff $\ell = \ell'$, so the readout at stage $\ell$ extracts 
exactly the content written by layer $\ell$. By 
\Cref{lemma:grad-upper-bounds}, gradients at well-trained layers are 
exponentially small, so we can focus on training one layer per stage.}

\section{Theoretical Results}
\label{sec:theory}
\subsection{Training Scheme}\label{sec:training-scheme}

\paragraph{Chain-of-Thought (CoT) for parity.}
The parity tree described in \Cref{sec:task-defn} gives a natural sequence 
of intermediate computations. We expand the input vector $\bb \in \{\pm 1\}^n$ 
to a length-$T$ sequence $\bb \in \{\pm 1\}^T$ by appending the internal 
node values $\bb_{n+1}, \ldots, \bb_T$, where each $\bb_m$ ($m > n$) is 
computed via the recursion $\bb_m = \bb_{c_1[m]} \cdot \bb_{c_2[m]}$. The 
CoT supervision trains the model to predict each $\bb_m$ at position $m$ \citep{kim2024transformers, wen2024sparse}.

\paragraph{Implicit Chain-of-Thought (ICoT).}
ICoT \citep{deng2024explicit} progressively removes intermediate CoT steps 
during training, internalizing the reasoning into hidden states so that at 
inference the model produces the answer directly without explicit reasoning. {In the original formulation, removed CoT steps shorten the sequence. We instead pad with zeros, keeping the length fixed at $T$. This does not 
sacrifice inference efficiency, since the model still produces the 
answer in a single forward pass. This design is theoretically grounded by \citet{merrill2026exact}, who show that padding tokens expand transformer expressivity by serving as extra parallel scratch positions for intermediate computation. Concretely, each parity-tree level is computed in 
parallel across positions with one level per transformer layer. This 
contrasts with autoregressive CoT generation, which would require 
$O(k)$ sequential steps.}
\paragraph{Log-ICoT curriculum.}
We adopt a Log-ICoT curriculum as in \Cref{fig:teaser} (c) that removes CoT steps in 
\emph{geometric increments}, mirroring the parity tree's level structure:
\begin{itemize}
    \item Stage $1$: Full CoT (all positions $\bb_{n+1}, \ldots, \bb_T$ shown).
    \item Stage $t$ ($2 \leq t \leq L$): Replace positions $\bb_{n+1}, \ldots, \bb_{n_t}$ 
    with padding (zeros), keeping $\bb_{n_t + 1}, \ldots, \bb_T$ shown.
    \item Stage $L$: All CoT steps except the final output are padded.
\end{itemize}
Since $L = \log_2 k$, the curriculum contains only a logarithmic number of 
stages, in contrast to the linear number used in standard ICoT 
\citep{deng2024explicit}.

\paragraph{Loss function.}
At stage $t$, we use the squared loss summed over the predicted positions:
\begin{align}\label{eq:loss-t}
\Lc^{(t)}(\bte) = \frac{1}{2B} \sum_{m = n_t + 1}^T 
\| \Psi_t^\top \Tc_\bte(\Db)_m - \bb_m \|^2.
\end{align}
This can be equivalently written as
\[
\Lc^{(t)}(\bte) = \frac{1}{2B} \sum_{i = 1}^B \| \bar y^i - f_\bte^\circ(\bb^i) \|^2,
\]
where $\bar y^i = (b_{n_t + 1}^i, \ldots, b_T^i)^\top$ collects the 
predicted positions for sample $i$ and 
$f_\bte^\circ(\bb^i)_m = (\Psi_t^\top \Tc_\bte(\Db)_m)_i$. This recovers 
the abstract form of \Cref{prop:sq} with a vector-valued predictor 
$f^\circ_\bte : \{\pm 1\}^n \to \R^{T - n_t}$.

\paragraph{Algorithm.}
At each stage \(t\), we draw an independent fresh batch of size \(B\) and
take one gradient step on \(\Lc^{(t)}\), followed by integer quantization
(Algorithm~\ref{alg:training}). Training proceeds sequentially through all
\(L=\log_2 k\) stages. Thus \(B\) denotes the per-stage batch size. Since
fresh samples are used at each stage, the total number of training samples
used by the curriculum is \(BL=O(B\log k)\).

\begin{algorithm}[t]
\caption{Training Algorithm (Log-ICoT Curriculum)}
\label{alg:training}
\begin{algorithmic}[1]
\Require Per-stage batch size \(B\), learning rate
$\eta_t = {K_n n_t^2} /({2c})$
\State Initialize \(\W^{(\ell)}(0)=0_{T\times T}\) for all \(\ell\in[L]\), $\bte: = \{\W^{(\ell)}\}_{ \ell \in [L]} $
\For{\(t=1,\ldots,L\)}
    \State Draw an independent fresh batch of \(B\) samples and construct the
    stage-\(t\) padded CoT data.
    \State $\bte(t) \leftarrow q \left (\bte(t - 1) - \eta \tilde \nabla_{\bte}\mathcal{L}^{(t)}(\bte{(t-1)}) \right) $ 
\EndFor
\State \Return \(\widehat\theta=\{\W^{(\ell)}(L)\}_{\ell=1}^L\)
\end{algorithmic}
\end{algorithm}

\paragraph{Evaluation.}
At test time, we use the Stage $L$ inference setup: random input bits 
filled in for positions $1, \ldots, n$, and zeros for the remaining 
$T - n = k - 1$ CoT positions. Concretely, given an independent test batch 
$\Db_{\test} = [\bb_1, \ldots, \bb_n, \bzero, \ldots, \bzero] \in \R^{B \times T}$ 
with $\bb_j \sim \unif(\{\pm 1\}^{B})$, the target is 
$(\y_{\test})_i = \prod_{j \in S} b_j^i$ for $i \in [B]$. Performance is 
measured by the population $L_2$ error $\mathcal E(\theta)$.

\subsection{Key Challenges} \label{sec:challenges}
There are two primary challenges in training multi-layer transformers. We discuss them in turn.
\paragraph{Representation Collapse. }
First, the input states collapse to a near-uniform value in late layers, 
causing the gradient to lose the ability to distinguish between token positions. For brevity, we drop the layer superscript in the following lemma. The result applies to 
any attention layer with weight matrix $\W$.
\begin{lemma} [Representation collapse of input states] \label{lemma:constant-input} 
For $B =\poly(n)$, with probability $1 - \exp(-n^{\epsilon / 16})$  over random sampling of the input data $\bb_1, \dots, \bb_n$, it holds that, for all $n^{\epsilon / 8} \leq m \leq n$:
\[
    \| \phi(\frac{1}{m -1}\sum_{\alpha \in [m - 1]} \bb_\alpha ) + \onebb_B \|_\infty \leq O(n^{-\epsilon/16}).
\]
In particular, when $\W$ is $\bzero_{T \times T}$, for all $n^{\epsilon / 8} \leq m \leq n$, we have
\[
   \|  \phi(\sum_{j = 1}^{m - 1} \sigma_j(\w_m) \bb_j) + \onebb_B \|_\infty =  \| \phi(\frac{1}{m - 1}\sum_{j \in [m - 1]} \bb_j ) + \onebb_B \|_\infty \leq O(n^{-\epsilon/16})
\]
\end{lemma}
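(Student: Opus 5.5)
We argue entrywise and then take a union bound. Fix a sample index $i \in [B]$ and a position $m$ with $n^{\epsilon/8} \le m \le n$, and set
\[
Z_{i,m} := \frac{1}{m-1}\sum_{\alpha \in [m-1]} b_\alpha^i .
\]
The bits $b_\alpha^i$ are i.i.d.\ Rademacher, so Hoeffding's inequality gives
\[
\Pr\bigl(|Z_{i,m}| > \delta\bigr) \le 2\exp\bigl(-(m-1)\delta^2/2\bigr).
\]
We take $\delta := n^{-\epsilon/32}$. Since $m-1 \ge n^{\epsilon/8}-1 \gtrsim n^{\epsilon/8}$, the exponent is at least of order $n^{\epsilon/8}\cdot n^{-\epsilon/16} = n^{\epsilon/16}$, so the failure probability is $2\exp(-\Omega(n^{\epsilon/16}))$.

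Next we convert concentration of $Z_{i,m}$ into a bound on $\phi$. By \Cref{def:link}, $\phi(t) = -1 + ct^2 + O(|t|^4)$ near $0$. On the event $|Z_{i,m}|\le\delta$ this gives
\[
|\phi(Z_{i,m}) + 1| \le c\delta^2 + O(\delta^4) = O(n^{-\epsilon/16}),
\]
which is exactly the target rate. This is why $\delta$ is chosen as the square root of the desired accuracy: the quadratic vanishing of $\phi+1$ at $0$ (from $\phi(0)=-1$ and $\phi'(0)=0$) halves the exponent we need from concentration.

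We then take a union bound over all $i\in[B]$ and all admissible $m\le n$. There are at most $Bn = \poly(n)$ pairs, so the total failure probability is
\[
\poly(n)\cdot 2\exp\bigl(-\Omega(n^{\epsilon/16})\bigr).
\]
This is at most $\exp(-n^{\epsilon/16})$ for large $n$, after adjusting the constant in the choice of $\delta$; for instance, take $\delta = C n^{-\epsilon/32}$ with $C$ large so that the Hoeffding exponent is at least $2n^{\epsilon/16}$, which absorbs the polynomial factor. The main point of care is this bookkeeping of constants, so that the stated probability holds without an $\Omega$.

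For the second claim, $\W=\bzero_{T\times T}$ makes every score equal to zero. Under the mask $C$, position $m\le n$ attends exactly to the positions $j<m$, so the softmax is uniform: $\sigma_j(\w_m) = 1/(m-1)$ for $j\in[m-1]$. The attended input is therefore exactly $\frac{1}{m-1}\sum_{j\in[m-1]}\bb_j$, and the first bound applies directly.
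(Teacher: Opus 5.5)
Your proof is correct and follows the paper's argument: Hoeffding's inequality for the uniform average of the Rademacher bits, a union bound over the $B$ coordinates and the admissible positions $m$, and the quadratic vanishing of $\phi+1$ at $0$ (from $\phi(0)=-1$, $\phi'(0)=0$), which squares the deviation. The only difference is how you parametrize Hoeffding, and the two are equivalent: the paper fixes the failure probability $p_1=\exp(-n^{\epsilon/16})$ and takes the deviation to be $\sqrt{(2/m)\log(2TB/p_1)}$, while you fix the deviation at $Cn^{-\epsilon/32}$ and choose $C$ large enough to absorb the polynomial union-bound factor, which also correctly keeps $m\le n$ so that the averaged bits are genuinely i.i.d.
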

The proof is given in \Cref{app:sec:proof-constant-input}, which simply adopts a concentration bound argument as the random input data has mean $0$ and is bounded. By \Cref{lemma:constant-input}, with uniform attention the input to the next 
layer collapses to approximately $-\onebb_B$. In a vanilla residual transformer, 
this collapse propagates: every CoT position's residual stream contains the 
noisy near-constant $-1$, drowning out the $O(n^{-2})$ token-distinguishing 
signal in subsequent layers' gradients with $\Omega(n^{-1})$ noise from 
contractions that should cancel by parity but instead accumulate.

To resolve this, we incorporate gated connections \citep{srivastava2015highway, parisotto2020stabilizing} to replace the residual 
connections. In the original 
formulation, the gating weights are input-dependent. In our setting, we instead \emph{prescribe} them 
from the parity tree's recursive structure. Specifically, for any 
$\ell \in [L]$, we set the gating weights as: 
\begin{align} \label{eq:gating-weights}
g_m^{(\ell)} =
\begin{cases}
1, & n_\ell + 1 \le m \le n_{\ell+1},  \\
0, & \text{otherwise}.
\end{cases}
\end{align}
Intuitively, layer $\ell$'s gating activates only at level-$(\ell + 1)$ 
positions $m \in [n_\ell + 1, n_{\ell + 1}]$, where the layer is responsible 
for producing the corresponding CoT predictions. At all other positions, 
the gating mutes the block update, leaving the residual stream to 
propagate prior content via the block-shift in $\mOutput^{(\ell)}$. 
As a consequence, training at each stage focuses gradient signal on a 
single level of the parity tree, avoiding representation collapse from 
uniform attention (\Cref{lemma:constant-input}) at lower-level positions 
where the loss should not yet be computed. Fixing the gates rather than 
learning them is what makes the per-stage training dynamics tractable. Each layer's gradient signal is structurally isolated to its assigned 
parity-tree level, enabling the layer-wise convergence analysis in 
\Cref{thm:icot-cvg}.

\begin{figure}[t]
    \centering

    \begin{minipage}[t]{0.34\textwidth}
        \centering
        \vspace{0pt}
        \includegraphics[height=3.5cm]{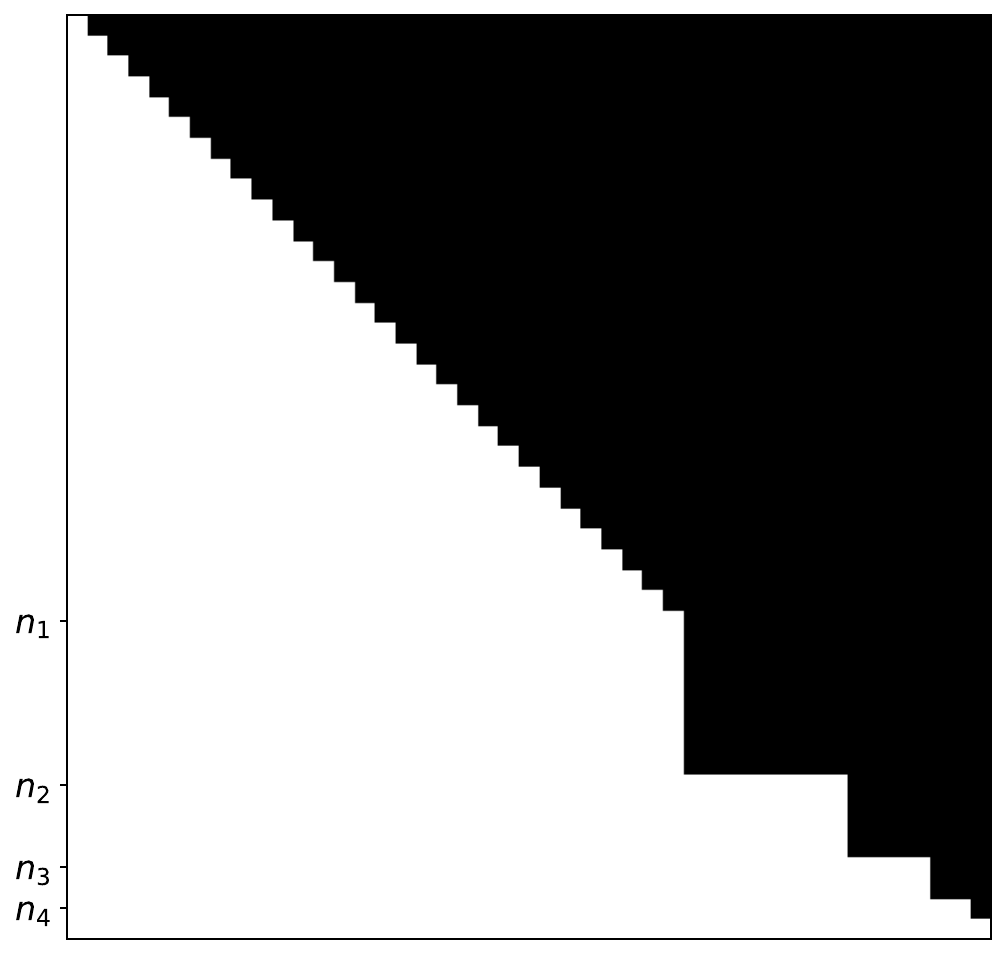}

        \vspace{2mm}
        {\small {(a) Customized attention mask.}}
    \end{minipage}
    \hfill
    \begin{minipage}[t]{0.62\textwidth}
        \centering
        \vspace{0pt}
        \includegraphics[height=3.5cm]{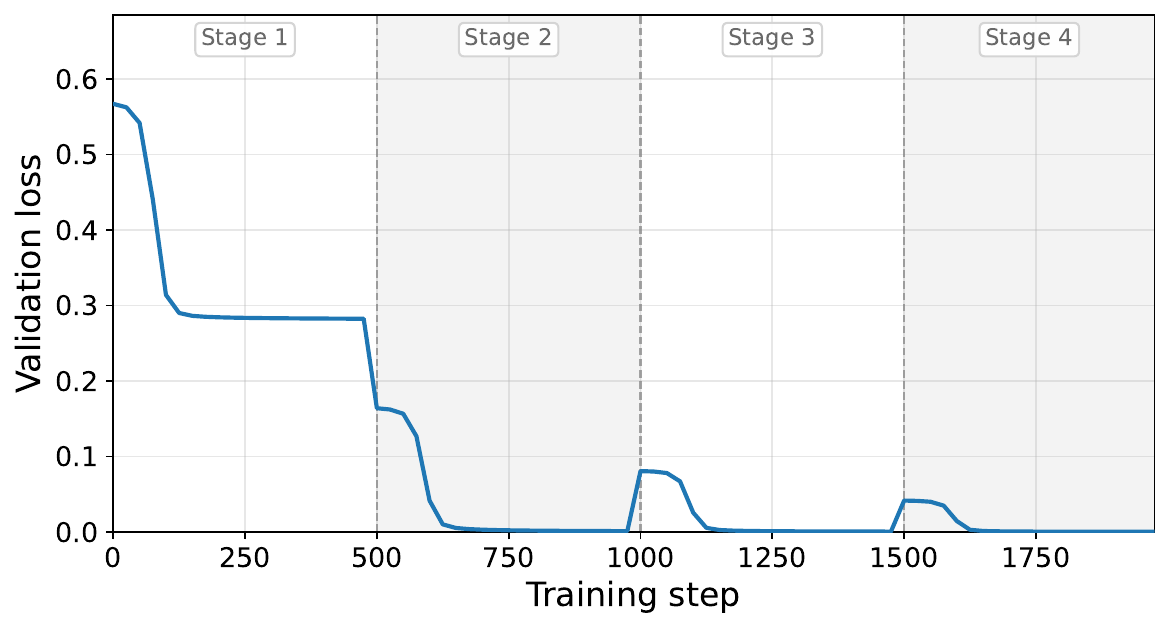}

        \vspace{2mm}
        {\small {(b) Validation loss under the Log-ICoT curriculum.}}
    \end{minipage}

    \vspace{2mm}
    \caption{\small
    \textbf{Attention mask and training dynamics.}
    Left: illustration of the customized attention mask, where each intermediate state
    $\x_m$ at CoT positions $(m > n)$ only depends on tokens $\x_j$ up to the
    previous level, i.e., $j \leq n_{h[m] - 1}$.
    Right: validation loss of a $4$-layer transformer trained under the Log-ICoT
    curriculum on the $k$-parity task $(n=30, k=16)$; dashed vertical lines mark
    the four training stages.
    }
    \label{fig:mask-loss}
\end{figure}

\paragraph{Error propagation. }As we adopt a multi-stage training curriculum, small approximation errors in earlier-stage predictions can compound through the residual stream and overwhelm the gradient signals. To address this, we introduce two modifications to the vanilla transformer model, following \citet{kim2024transformers}. 

{First, we revise the causal attention mask as in \Cref{sec:attn} so that each intermediate 
state $\x_m$ depends only on tokens at strictly lower levels of the parity 
tree. This isolates the 
gradient signal at the correct child positions for our analysis
(\Cref{lemma:base,lem:inductive-step}). A visualization of the revised mask can be found in \Cref{fig:mask-loss}(a).}
Second, we quantize the attention weights after every gradient update by rounding each 
entry of $\{\W^{(\ell)}\}_{\ell \in [L]}$ to the nearest integer:
\[
\W^{(\ell)}(t + 1) = q\!\left( \W^{(\ell)}(t) - \eta \nabla_{\W^{(\ell)}} \Lc^{(t)}(\bte(t)) \right),
\]
where $q : \R \to \Z$ is the nearest-integer operator. Integer-based 
quantization methods are widely used in practice to improve training and 
inference efficiency \citep{jacob2018quantization, wu2020integer}, and have 
also been used in theory to control error propagation in autoregressive generation
\citep{kim2024transformers}. {In our multi-stage setting, quantization 
plays an additional role: it locks the weights of previously trained layers. 
Once a layer is trained, the gradients with respect to its weights at 
subsequent stages are exponentially small (\Cref{lemma:grad-upper-bounds}), 
so quantization rounds each update back to the trained value. This allows 
us to analyze stage $t$ in isolation by treating layers $1, \ldots, t-1$ as 
fixed.}

\subsection{Main Theorem}\label{sec:main-theorem}

{We work in the regime $k = \Theta(n)$, so $L = \log_2 k = \Theta(\log n)$ 
and $T = n + k - 1 = \Theta(n)$. The bounded-gradient guarantee 
$\|\nabla_\bte f_\bte^{(t)}(\bb)\| = O(n^g)$ (\Cref{lemma:bounded-gd}) 
verifies the assumptions of \Cref{prop:sq}. Without intermediate 
supervision, no iterative algorithm using $\poly(n)$ samples and queries 
can achieve non-trivial accuracy on $k$-parity. The following theorem 
shows that Log-ICoT circumvents this barrier.}

\begin{theorem}[Log-ICoT]\label{thm:icot-cvg}
Consider an $L = \log_2 k$-layer transformer. Suppose the per-stage batch size satisfies $B = \Omega(n^{2 + \epsilon})$ for some constant $\epsilon > 0$, the 
gradient oracle is $O(n^{-2 - \epsilon/8})$-accurate, and at each stage 
$t \in [L]$, the learning rate is set to $\eta_t = \frac{K_n n_t^2}{2c}$, where 
$K_n := \lceil n^{\epsilon/16} \rceil$ and \(c>0\) is the link function constant in \Cref{def:link}. Then for sufficiently large $n$, with 
probability $1 - \exp(-\Omega(n^{\epsilon/2}))$ over the random sampling of 
the training data, the output $\hat \bte = \bte(L)$ of \Cref{alg:training} 
satisfies
\[
\| f_{\hat \bte}^{L} (\Db_{\test})_T - \y_{\test} \|_\infty 
\;\leq\; \exp(-\Omega(n^{\epsilon/16})).
\]
\end{theorem}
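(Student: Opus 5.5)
We argue by induction over the stages $t=1,\dots,L$. The inductive hypothesis $\mathcal H_t$ says: after stage $t$, for every $\ell\le t$ and every level-$(\ell+1)$ position $m\in[n_\ell+1,n_{\ell+1}]$, the quantized column $\w^{(\ell)}_m$ equals $K_n$ exactly at the two children $c_1[m],c_2[m]$ and $0$ at every other unmasked entry. The remaining entries of $\W^{(\ell)}$ stay $0$, and layers $\ell>t$ are untouched, so they remain $\bzero$. Given $\mathcal H_t$, the attention of layer $\ell$ at position $m$ puts mass $1-O(n\,e^{-K_n})=1-\exp(-\Omega(n^{\epsilon/16}))$ on the two children. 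By the level-restricted mask in \Cref{sec:attn}, position $m$ only sees indices $j\le n_{h[m]-1}$, which makes $K_n$ the correct logit gap. The output at $m$ is then $\phi\bigl((\hat b_{c_1}+\hat b_{c_2})/2\bigr)+O(\text{err})$. Using $\phi\bigl(\frac{a+b}{2}\bigr)=ab$ together with $\phi'(\pm1)=0$ (so errors near $\pm1$ inputs are damped quadratically), the level-$(\ell+1)$ values are reproduced up to $\exp(-\Omega(n^{\epsilon/16}))$. The gates \eqref{eq:gating-weights} and the block-shift $\mOutput^{(\ell)}$ route exactly this quantity into residual block $\ell+1$ and nothing else. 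Chaining over $\ell=1,\dots,L$ at test time, with padded zeros that the mask never lets a higher level read except through the computed residual content, gives the bound claimed in the theorem for $f^{L}_{\hat\bte}(\Db_\test)_T$.

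\textbf{Base case ($t=1$).} All weights are zero, so attention at a level-2 position $m$ is uniform over $[n]$ with weights $1/n$. We compute $\nabla_{\w^{(1)}_m}\Lc^{(1)}$ using $\phi(t)=-1+ct^2+O(t^4)$ and $\phi'(t)=2ct+O(t^3)$ around the near-zero average (\Cref{lemma:constant-input}). In expectation the $j$-th coordinate is proportional to $-\frac{1}{n^2}\cdot 2c\,\E[b_m b_j\bar b]$-type terms. Only $j\in\{c_1[m],c_2[m]\}$ correlate with $b_m=b_{c_1}b_{c_2}$, which yields a signal of size $\approx 2c/n_1^2$ at the children. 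All other coordinates have mean zero, up to a uniform offset common to all $j$ that the softmax ignores; we handle it by noting that quantization of a common shift of magnitude $<1/2$ gives $0$. Then we concentrate. With $B=\Omega(n^{2+\epsilon})$ i.i.d.\ samples, Hoeffding plus a union bound over $O(n^2)$ entries makes each empirical entry deviate by $O(n^{-2-\epsilon/4})$ with probability $1-\exp(-\Omega(n^{\epsilon/2}))$. The oracle adds $O(n^{-2-\epsilon/8})$. Multiplying by $\eta_1=K_nn_1^2/(2c)$ places the children entries at $K_n\pm o(1)$ and the others at $o(1)$, so $q$ returns exactly $K_n$ and $0$.

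\textbf{Inductive step and locking.} Assume $\mathcal H_{t-1}$. At stage $t$ the positions $\le n_t$ are padded, so layer $t$'s inputs at level-$t$ positions are supplied through the residual stream by layers $<t$. By $\mathcal H_{t-1}$ and the argument above, these equal the true $\bb_j$ up to $\exp(-\Omega(n^{\epsilon/16}))$. Layer $t$ currently has $\W^{(t)}=\bzero$, so its attention at $m\in[n_t+1,n_{t+1}]$ is uniform over $j\le n_t$. The same Taylor and correlation computation as in the base case, with $n_1$ replaced by $n_t$, explains the choice of $\eta_t$ and gives entries $K_n\pm o(1)$ at the children and $o(1)$ elsewhere. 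The $\exp(-\Omega(n^{\epsilon/16}))$ input errors are absorbed into the $o(1)$ terms. For the earlier layers $\ell<t$, we invoke \Cref{lemma:grad-upper-bounds}. Their gradients are exponentially small because they are saturated softmaxes and $\phi'\approx0$ at $\pm1$, so $\eta_t\cdot$gradient is $\ll1/2$ and $q$ leaves their weights unchanged. Layers $>t$ receive zero gradient, since $\Psi_t^\top\mOutput^{(\ell)}=0$ for $\ell\neq t$. This establishes $\mathcal H_t$. A final union bound over the $L=O(\log n)$ stages preserves the probability $1-\exp(-\Omega(n^{\epsilon/2}))$.

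\textbf{Main obstacle.} The main obstacle is the gradient computation in the inductive step. We must show that the non-child coordinates, including the position-dependent components from the $O(t^4)$ Taylor remainder and cross terms from lower-level residual content, are all $o(1/n_t^2)$ relative to the signal $2c/n_t^2$. This is exactly where the collapse in \Cref{lemma:constant-input} interacts with the gates. We would first show that gating zeroes every contribution from positions outside level $t+1$. We would then bound the remainder terms by $O(n_t^{-4}\cdot n_t)$ via moment computations for sums of independent signs.
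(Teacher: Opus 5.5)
Your proposal is correct and follows the paper's route. The shared steps are: stage-by-stage induction; a one-step Taylor expansion at uniform attention that isolates the signal $-2c\,n_t^{-2}\one{p[j]=m}$; the learning rate $\eta_t$ scaling that signal to the integer $K_n$, so quantization gives exactly $K_n$ on the two children and $0$ elsewhere; the frozen-gradient lemma for earlier layers; forward-error chaining; and a union bound over stages. Your concentration bookkeeping differs (population gradient plus Hoeffding on the bounded per-sample gradient, versus the paper's $\kappa$ bound on each non-trivial multilinear inner product plus a trivial-tuple count), and in the inductive step your moment bounds need that trivial-tuple count rather than independence, since the visible positions include internal nodes that are parities of input bits.
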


\subsection{Proof Sketch}
\label{sec:proof-sketch}

The proof proceeds by induction on the training stage $t = 1, \ldots, L$, 
showing that after stage $t$, layers $1, \ldots, t$ are \emph{well-trained}, 
i.e., their attention weights concentrate on the correct two children of 
each parity node, and the corresponding hidden states approximate the 
ground-truth parities up to additive error $\exp(-Cn^{\epsilon/16})$. We 
outline the five ingredients below. \Cref{fig:tree-diagram} (Right)
in \Cref{app:imp-details} illustrates the residual-stream evolution on a 
worked example with $n = 8$, $k = 4$.

\paragraph{Concentration of cross-correlations.}
We begin with a probabilistic ingredient that underpins the entire analysis. 
The choice $B = \Omega(n^{2+\epsilon})$ is dictated by a Hoeffding-type 
bound (\Cref{lemma:trivial-noise}, following \citet{kim2024transformers}): 
with high probability, every non-trivial empirical cross-correlation 
$\langle \bm{b}_{j_1}, \ldots, \bm{b}_{j_r} \rangle / B$ between parity 
terms of order $r \leq 4$ is at most $\kappa = O(n^{-1-\epsilon/4})$. 
This controls the noise in every gradient computation that follows.

\paragraph{Stage 1 (\Cref{lemma:base}).}
With concentration in hand, we turn to the base case. All weights are 
initialized to zero, so layer 1's attention is uniform. 
A direct expansion of 
$\partial \mathcal{L}^{(1)} / \partial w^{(1)}_{j, m}$ for 
$m \in [n + 1, n_2]$ using the link function's Taylor series isolates a 
signal term of order $n^{-2}$ at the two children indices 
$c_1[m], c_2[m]$, with all other 
contributions bounded by $O(n^{-2-\epsilon/8})$ via the concentration 
above. With learning rate $\eta_1 = \Theta(K_n n^2)$, 
one gradient step followed by integer quantization yields 
$w^{(1)}_{c_1[m], m} = w^{(1)}_{c_2[m], m} = K_n$ and 
zero elsewhere, producing softmax weights concentrated near $1/2$ on the 
two correct children. The link function then maps the resulting average to 
the correct parity at every level-2 position, with error $\exp(-Cn^{\epsilon/16})$.
\paragraph{Stage $t+1$ (\Cref{lem:inductive-step}).}
The inductive step extends this argument upward. Assume the well-trained state holds at stages $1, \ldots, t$. The same gradient expansion goes through at layer $t+1$, but now operating on the \emph{hidden states} $\bm{x}^{(t)}_{j, [t+1]}$ produced by the previous layers rather than raw inputs. \Cref{lemma:hidden-state} characterizes these hidden states: by the gated connection design (Eq.~\eqref{eq:gating-weights}), $\bm{x}^{(t)}_{j,[t+1]}$ equals $\bm{b}_j$ uniformly up to exponentially small error, so the concentration argument carries over with controlled propagated error.
\paragraph{Layer freezing across stages.}
For the induction to compose, we further need the earlier stages to remain 
undisturbed. A key feature of Log-ICoT is that once a layer has been trained, 
it stays trained: at well-trained positions, 
$\phi'(\hat{\bm{z}}) = O(\exp(-Cn^{\epsilon/16}))$ because $\hat{\bm{z}}$ 
is exponentially close to a point where $\phi'$ vanishes 
(Definition~\ref{def:link}). This makes the gradient at every previously 
trained weight exponentially small, so the quantization operator $q(\cdot)$ 
rounds each update back to zero, leaving the weight unchanged. This argument is  
formalized in \Cref{lemma:grad-upper-bounds}. Consequently, we can 
analyze each layer in isolation, treating earlier layers as fixed.

\paragraph{Bounding the prediction error.}
With the induction in place, it remains to control the propagated error at 
inference. Let $\epsilon_t = \max_m \| \bm{x}^{(t)}_{m, [t+1]} - \bm{b}_m \|_\infty$ 
denote the worst-case approximation error at stage $t$. The forward pass 
induces the recursion
\[
\epsilon_{t+1} \leq C_2 \bigl( 2 \exp(-Cn^{\epsilon/16}) + \epsilon_t \bigr)^2,
\]
which preserves $\epsilon_t \leq \exp(-Cn^{\epsilon/16})$ across all 
$L = \log_2 k$ stages. At the final stage $t = L$, the readout at position 
$m = T$ recovers $\y_{\text{test}}$ up to error $\exp(-Cn^{\epsilon/16})$, giving the claimed test error.

\section{Experiments}\label{sec:exp}
\begin{figure}[!t]
    \centering
    \subfigure
    {\includegraphics[width=0.7\textwidth]{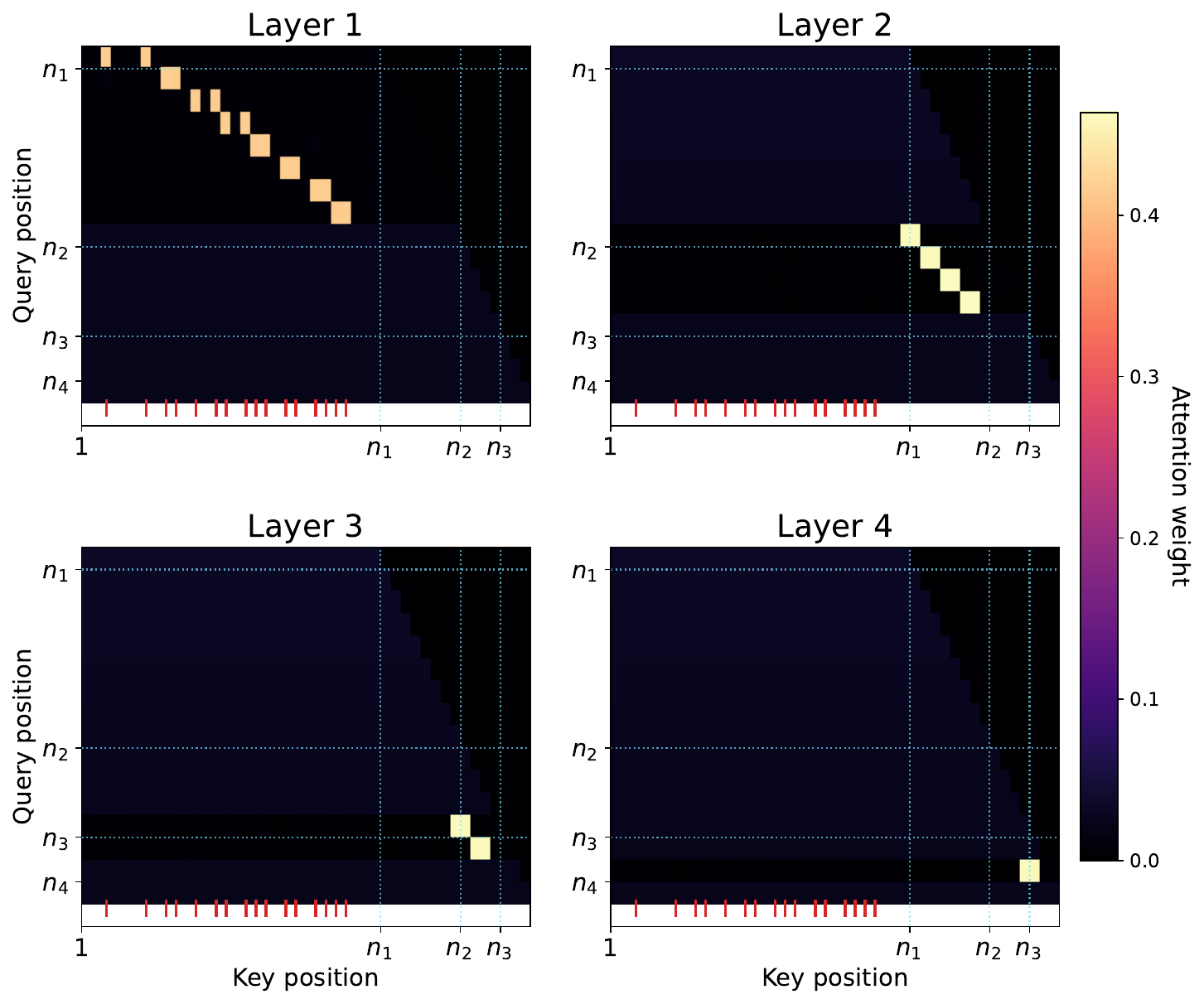}}
    \vspace{-2mm}
\caption{\small \textbf{Layer-wise attention maps of the trained 4-layer 
transformer at the final stage.} Each panel shows the softmax attention 
weights at one layer, with rows indexing query positions and columns 
indexing key positions. Dotted gridlines mark the parity-tree level 
boundaries $n_1, n_2, n_3$. Red ticks on the $x$-axis mark the indices 
in the secret set $S \subset [n]$. Each layer's attention concentrates 
sharply on the two children of every parity node at its assigned level.}
    \label{fig:attn-map}
    \vspace{-3mm}
\end{figure}

We empirically verify the results of \Cref{thm:icot-cvg} on the 
$k$-parity task, confirming that a multi-layer transformer trained under 
Log-ICoT converges to the correct parity computation and visualizing how 
reasoning is progressively internalized into deeper layers. We train a 4-layer simplified transformer (\Cref{def:transformer}) 
on the $k$-parity task with $n = 30$ and $k = 16$, giving $L = \log_2 k = 4$ 
training stages. Full details are in 
\Cref{app:imp-details}.

\paragraph{Training dynamics.}
\Cref{fig:mask-loss}(b) shows the validation loss across the four stages of 
Log-ICoT. The first stage trains with full reasoning traces visible, 
after which the loss is already small. At each subsequent stage 
transition, half of the remaining CoT positions become padded, causing 
a transient loss spike as the model internalizes one additional 
parity-tree level into hidden states. The loss then rapidly returns to 
near zero, consistent with \Cref{lemma:base} and 
\Cref{lem:inductive-step}. By the end of stage~4, all CoT positions are 
padded and the model produces the correct parity from input bits alone, 
reaching $100\%$ validation accuracy.

\paragraph{Layer-wise internalization.}
\Cref{fig:attn-map} visualizes the per-layer attention maps at the final 
stage. Each layer's attention concentrates sharply on exactly two key 
positions per query: the two children of the queried parity node in the 
tree decomposition. This 
matches \Cref{lem:inductive-step}(a), which predicts that softmax weights 
at each well-trained layer concentrate on the two children of every 
parity node.

\section{Conclusion}
In this paper, we provide a rigorous theoretical foundation for Implicit 
Chain-of-Thought (ICoT), demonstrating that multi-layer transformers can 
internalize complex reasoning processes within their hidden states without 
sacrificing the sample efficiency gains of explicit CoT. By analyzing the 
parity learning task, we prove that a multi-layer transformer can achieve 
polynomial sample complexity while significantly reducing inference 
latency. Compared to standard ICoT, which removes thinking tokens one at 
a time, Log-ICoT removes them in geometric chunks, reducing the number of 
training stages from linear in $k$ to $\log_2 k$. We discuss our limitations and 
future directions in \Cref{app:sec:limitations}.

\section*{Acknowledgements}

This work was partially supported by a gift from Open Philanthropy to the Center for Human-Compatible AI (CHAI) at UC Berkeley and by NSF Grants IIS-1901252 and CCF-2211209. This work was also supported by NSF grants DMS-2210827, CCF-2315725, CAREER DMS-2339904, ONR grant N00014-24-S-B001, DARPA AIQ grant HR001124S0029-AIQ-FP-003, an Amazon Research Award, a Google Research Scholar Award, an Okawa Foundation Research Grant, and a Sloan Research Fellowship. Y.H. and S.S. were supported by the U.S. Army Research Laboratory and the U.S. Army Research Office under Grant W911NF2010219, Office of Naval Research, and NSF. This work used Jetstream2 at Indiana University through allocation CIS240832 from the Advanced Cyberinfrastructure Coordination Ecosystem: Services \& Support (ACCESS) program, which is supported by National Science Foundation grants \#2138259, \#2138286, \#2138307, \#2137603, and \#2138296.

\bibliography{references}
\bibliographystyle{plainnat}

\newpage
\appendix
\section{Limitations and Future Directions}\label{app:sec:limitations}

Our analysis relies on several simplifications. To make the multi-layer 
training dynamics tractable, we train only the attention matrix while 
fixing the other components: identity value matrices, a link function 
in place of MLPs, and prescribed output matrices and gating weights. Making the gates input-dependent or trainable, as in \citet{parisotto2020stabilizing, 
zhu2024hyper}, is a natural next step. Our experiments are also limited to 
the synthetic parity task and a four-layer transformer. Although \citet{deng2024explicit} demonstrate the effectiveness of
ICoT on practical reasoning tasks, implementing Log-ICoT in LLMs would require a heuristic for dividing the curriculum 
into stages, which is non-trivial without the explicit hierarchical 
structure that parity provides. Finally, we view connecting ICoT to 
self-distillation \citep{zhao2026self, hubotter2026reinforcement} as a 
promising direction. Empirical work \citep{hubotter2026reinforcement} in that line observes that reasoning 
traces shorten substantially after training, suggesting that a similar 
internalization mechanism may be at play.

\section{Additional Details}\label{app:imp-details}
\begin{figure}[t]
    \centering
    \subfigure{\includegraphics[width=0.95\textwidth]{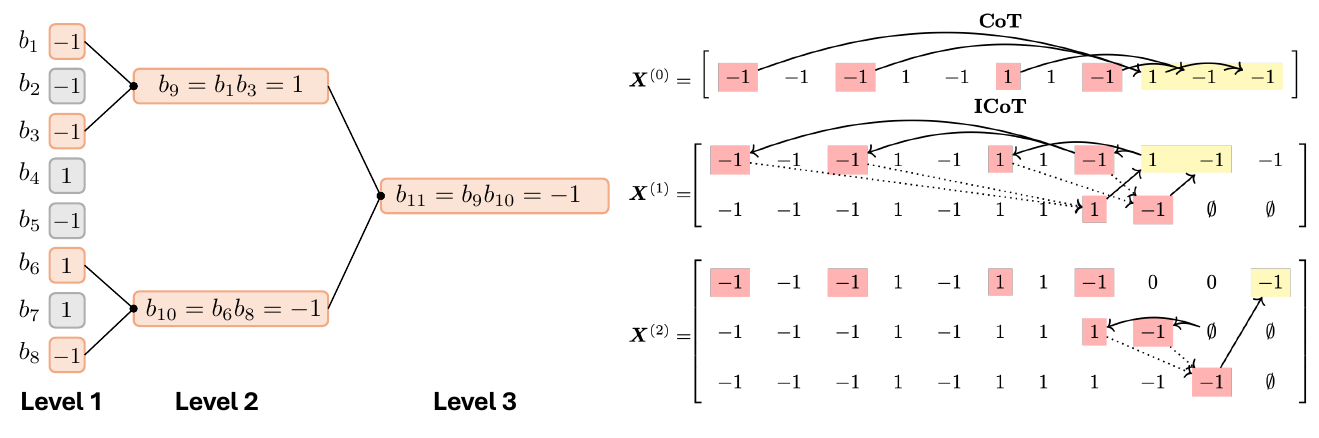}}
    \caption{\textbf{Illustration of parity learning task with input length $n = 8$ and secret set size $k =4$.} 
    \textit{Left:} The task can be decomposed into a hierarchical two-parity computation.
    \textit{Right:} Comparison of the training curriculum of Chain-of-Thought (CoT) and Implicit CoT (ICoT). 
    Both methods initially leverage complete thinking traces derived from the hierarchical decomposition. As the ICoT curriculum progresses, these intermediate reasoning steps are replaced by padding tokens ($0$), forcing the model to internalize the computation within its hidden states.
    }
    \label{fig:tree-diagram}
\end{figure}

\subsection{Example of the Parity-Learning Decomposition}\label{app:sec:parity}

To complement the abstract definitions in \Cref{sec:task-defn} and the 
proof sketch in \Cref{sec:proof-sketch}, \Cref{fig:tree-diagram} provides 
a concrete worked example with $n = 8$ input bits and secret set size 
$k = 4$. The left panel illustrates the hierarchical decomposition of 
the $k$-parity task into a binary tree of two-parity computations: each 
internal node $b_m$ ($m > n$) computes the parity of its two children, 
with the root $b_{11}$ recovering the target label $y$. The right panel 
illustrates the residual-stream evolution $\X^{(0)}, \X^{(1)}, \X^{(2)}$ 
across layers, contrasting explicit CoT (top) with the ICoT curriculum 
(bottom), where intermediate reasoning positions are progressively replaced 
by padding tokens ($0$) and the model must internalize the 
computation in its hidden states.

\subsection{Implementation Details}\label{app:experiment-details}                                                    We provide additional implementation details of \Cref{sec:exp}.  
  \paragraph{Link function.}                                                               
  Following \Cref{def:link}, we require a smooth $\phi$ satisfying
  $\phi(0) = -1$, $\phi(\pm 1) = 1$, and                                                                                                                      
  $\phi'(0) = \phi'(\pm 1) = 0$. We use the cosine link                                                                                                       
  $\phi(t) = -\cos(\pi t)$, which satisfies all three conditions exactly:                                                                                     
  $\phi'(t) = \pi \sin(\pi t)$ vanishes at $t \in \{0, \pm 1\}$. Its local                                                                                    
  Taylor expansion around the origin,                                                                                                                         
  $\phi(t) = -1 + \tfrac{\pi^2}{2} t^2 + O(t^4)$, matches the form                                                                                            
  $\phi(t) = -1 + c t^2 + O(t^4)$ used in our analysis with constant                                                                                          
  $c = \pi^2 / 2$.  
  
\paragraph{Data generation.} 
Inputs are sampled uniformly from $\{\pm 1\}^n$ with $n = 30$. The secret 
set $S \subset [n]$ with $|S| = k = 16$ is sampled uniformly from all 
size-$k$ subsets of $[n]$ at the start of training and held fixed thereafter. 
The label $y = \prod_{j \in S} b_j$ is computed deterministically from 
the input. CoT positions $b_{n+1}, \ldots, b_T$ are computed via the 
recursion $b_m = b_{c_1[m]} \cdot b_{c_2[m]}$ on the parity tree.

\paragraph{Training and evaluation.} 
{We optimize with AdamW at learning 
rate $10^{-1}$ and batch size $500$, run $500$ steps per stage for a total 
of $2{,}000$ training steps, and evaluate on $2{,}000$ held-out samples 
every $25$ steps.}

\section{Proof of \Cref{sec:theory}}  
\label{app:sec:theory}
\paragraph{Notations. }
Recall from \Cref{sec:task-defn} that $b_1, \ldots, b_n$ are i.i.d. 
uniform on $\{\pm 1\}^n$ and that $b_{n+1}, \ldots, b_T$ are determined 
by the parity-tree recursion based on the secret set $S$. For a tuple of indices $(j_1, \dots, j_r) \in [T]^r$, we say a parity product $b_{j_1} \cdots 
b_{j_r}$ is \emph{trivial} if it always equals $1$. E.g., the parity $b_1 b_3 b_9$ is trivial in \Cref{fig:tree-diagram}. We define the set of non-trivial tuples of length $r$ over the first $m$ indices as:
\[
    I_{r, m}  = \{(j_1, \dots, j_r) \in [m]^r , b_{j_1} \cdots b_{j_r} \not \equiv 1 \}.
\]
For $r = 1$, every index is non-trivial so $I_{1,m} = [m]$.
As defined in \Cref{sec:model}, $\Psi_{t'}^\top \mOutput^{(t)} = \one{t = t '} \begin{bmatrix}
        \bzero_{B \times T} , \eb_{L + 1, t}^\top \otimes \Ib_B
    \end{bmatrix} \in \R^{B \times d}$. Let $\Sb_t = \eb_{L + 1, t}^\top \otimes \Ib_B \in \R^{B \times B_L}$, we denote $\Pb_t = \Psi_{t}^\top \mOutput^{(t)} = \begin{bmatrix}
        \bzero_{B \times T} , \Sb_t
    \end{bmatrix}$, which extracts the $t$-th $B$-dimensional block.
    For any $\x \in \R^{d}$, we define the $t$-th $B$-block (after the first $T$ coordinates) by
    \[
        \x_{[t]} := S_t \x_{T:} \in \R^{B}.
    \]
    Equivalently, we have $(\x_{[t]})_i = x_{T + B(t - 1) + i}$. Finally, for $x \in \R$ and $S \subseteq \R$, we write 
$\dist{x, S} := \inf_{s \in S} |x - s|$.

\subsection{Proof of \Cref{lemma:constant-input}} \label{app:sec:proof-constant-input}
\begin{proof}
    Fix any $n^{\epsilon / 8} \leq m \leq n$.
    Since $\bb_\alpha \sim \unif(\{\pm 1\}^B)$ for $\alpha \in [m]$. By Hoeffding's inequality, we have that:
    \[
       Pr ( \|\frac{1}{m - 1}\sum_{\alpha \in [m - 1]} \bb_\alpha \|_\infty \geq t) \leq 2B \exp (-\frac{mt^2}{2}).
    \]
    Thus with probability $1 - p_1$, we have:
    \[
        \|\frac{1}{m - 1}\sum_{\alpha \in [m - 1]} \bb_\alpha \|_\infty \leq \sqrt{\frac{2}{m} \log (2B / p_1)}.
    \]
    Since $\phi$ behaves like a quadratic near $0, \pm 1$, there exists some constant $C_2 > 0$ that depends only on $\phi$ such that
    \begin{align*}
        \|\phi(\frac{1}{m - 1}\sum_{\alpha \in [m - 1]} \bb_\alpha ) + \onebb_B \|_\infty = \|\phi(\frac{1}{m - 1}\sum_{\alpha \in [m - 1]} \bb_\alpha ) -\phi(\bzero_B) \|_\infty 
        & \leq C_2 \left(\sqrt{\frac{2}{m} \log (2B / p_1)} \right)^2 
        \\ & \leq C_2 \frac{2}{m} \log (2B / p_1).
    \end{align*}
    {Finally, assume the failure probability for each $m$ be $\delta_m = p_1 / T$. With probability $1 - p_1$, for all $n^{\epsilon / 8} \leq m \leq T < 2n$}, we have
    \[
        \|\phi(\frac{1}{m - 1}\sum_{\alpha \in [m - 1]} \bb_\alpha ) + \onebb_B \|_\infty \leq C_2 \frac{2}{m} \log (2TB / p_1).
    \]
    Recall $B = \poly (n)$ and set $p_1 =\exp(-n^{\epsilon / 16}) $, we have
    \[
        \|\phi(\frac{1}{m - 1}\sum_{\alpha \in [m - 1]} \bb_\alpha ) + \onebb_B \|_\infty \leq O(n^{- \epsilon / 16}).
    \]
\end{proof}

\subsection{Residual stream and gradient propagation}
\begin{lemma}[Gated Connection Propagation]\label{lemma:hc-propagation}
Using the gating weights in Eq.~\eqref{eq:gating-weights}, for any position $\beta \in [T]$ with $\nu := h[\beta]$. The 
residual stream satisfies:
\[
\x^{(\ell)}_{\beta, [\ell + 1]} = 
\begin{cases}
\x^{(0)}_{\beta, [1]}, & \nu = 1 \text{ or } 0 \leq \ell < \nu - 1, \\
\phi(\hat \z^{(\nu - 1)}_{\beta, [\nu - 1]}), & \nu \geq 2 \text{ and } \ell \geq \nu - 1.
\end{cases}
\]
\end{lemma}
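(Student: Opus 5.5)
We prove the statement by induction on $\ell$, reading off the block structure of the update rule in \Cref{def:transformer}. The first step is to compute $\x^{(\ell)}_{\beta,[\ell+1]}$ from the layer update. The block-shift $\mOutput^{(\ell)}$ has zero top-left $T\times T$ block. Its lower block is $\eb_{L+1,\ell+1}\eb_{L+1,\ell}^\top\otimes \Ib_B$, acting on the $B_L$ residual coordinates (blocks indexed $1,\dots,L+1$). So for any $\vct v\in\R^d$, $\mOutput^{(\ell)}\vct v$ is supported only on block $[\ell+1]$, and there it equals $\vct v_{[\ell]}$. Combined with $\x^{(\ell)} = \x^{(\ell-1)} + \mOutput^{(\ell)}\hc(\cdot)$, this gives the block identity
\[
\x^{(\ell)}_{\beta,[\ell+1]} = \x^{(\ell-1)}_{\beta,[\ell+1]} + g^{(\ell)}_\beta\, \phi(\hat\z^{(\ell)}_\beta)_{[\ell]} + (1-g^{(\ell)}_\beta)\, \x^{(\ell-1)}_{\beta,[\ell]},
\]
where $\hat\z^{(\ell)}_\beta$ denotes the attention output at position $\beta$. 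I would also fix the convention, consistent with the statement, that $\phi(\hat\z^{(\nu-1)}_{\beta,[\nu-1]})$ means the $[\nu-1]$ block of the attention output of layer $\nu-1$, with $\phi$ applied pointwise.

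Two auxiliary facts are needed.

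First, block $[\ell+1]$ of $\x^{(\ell-1)}$ is zero. Initially only block $[1]$, which holds $\bb_\beta$, is nonzero. Each layer $\ell'$ writes only into block $[\ell'+1]$, so after $\ell-1$ layers the blocks $[\ell+1],\dots,[L+1]$ remain zero.

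Second, lower blocks are never modified afterwards. Since layer $\ell'$ only touches block $[\ell'+1]$, we have $\x^{(\ell-1)}_{\beta,[\ell]} = \x^{(\ell-1)}_{\beta,[\ell]}$ as first written at layer $\ell-1$. In particular $\x^{(\ell-1)}_{\beta,[\ell]}$ equals the quantity the induction hypothesis describes at index $\ell-1$, with the base case $\x^{(0)}_{\beta,[1]}$.

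The recursion therefore simplifies to
\[
\x^{(\ell)}_{\beta,[\ell+1]} = g^{(\ell)}_\beta\,\phi(\hat\z^{(\ell)}_{\beta,[\ell]}) + (1-g^{(\ell)}_\beta)\,\x^{(\ell-1)}_{\beta,[\ell]}.
\]

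Now use the gate prescription \eqref{eq:gating-weights}: $g^{(\ell)}_\beta=1$ iff $n_\ell<\beta\le n_{\ell+1}$, which holds iff $h[\beta]=\ell+1$, i.e.\ $\ell=\nu-1$. (This uses the definition of $h$ and that positions $\beta\le n$ have $h=1$ and $\beta\le n_1$.) Split into cases.

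If $\nu=1$, every gate at $\beta$ is zero. The recursion then just copies, and induction gives $\x^{(\ell)}_{\beta,[\ell+1]}=\x^{(0)}_{\beta,[1]}$ for all $\ell$.

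If $\nu\ge2$ and $\ell<\nu-1$, the gates are again zero for all layers up to $\ell$, so the same copying argument applies.

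At $\ell=\nu-1$ the gate is one, giving $\phi(\hat\z^{(\nu-1)}_{\beta,[\nu-1]})$.

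For $\ell>\nu-1$ the gates are zero again, so the value is copied forward unchanged. This finishes the induction.

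The main obstacle is bookkeeping rather than analysis. One point is pinning down the convention that the full-vector $\phi$ output at block $[\ell]$ corresponds to the $[\ell]$ block of $\hat\z^{(\ell)}$; this requires checking that $\phi$ acts coordinatewise, so block extraction commutes with $\phi$. The other is checking the endpoint identity $\{m: g^{(\ell)}_m=1\}=\{m: h[m]=\ell+1\}$ from the definitions of $n_\ell$ and $h$, including the edge positions $m=n_{\ell+1}$.
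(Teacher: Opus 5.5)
Your proposal is correct and follows essentially the same route as the paper: you extract the block-$[\ell+1]$ update from the block-shift structure of $\mOutput^{(\ell)}$, use that block $[\ell+1]$ of $\x^{(\ell-1)}$ is still zero, identify $g^{(\ell)}_\beta = 1$ iff $\ell = h[\beta]-1$, and then run the same copy/write/copy induction as the paper's four cases. Your second auxiliary fact is not actually needed, because the recursion already involves exactly $\x^{(\ell-1)}_{\beta,[\ell]}$, which is the quantity the induction hypothesis controls.
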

\begin{proof}
Recall the gated connection update at layer $\ell$:
\[
\x^{(\ell)}_{\beta} = \x^{(\ell - 1)}_{\beta} + \mOutput^{(\ell)} \cdot 
\hc^{(\ell)}_{\beta},
\]
where $\hc^{(\ell)}_{\beta} = g^{(\ell)}_\beta \phi(\hat \z^{(\ell)}_{\beta}) 
+ (1 - g^{(\ell)}_\beta) \x^{(\ell - 1)}_{\beta}$ and $\mOutput^{(\ell)}$ 
is the block-shift operator that places content from residual block $[\ell]$ to block $[\ell + 1]$.

Componentwise, the block-$[\ell + 1]$ update at position $\beta$ reads
\begin{equation}\label{eq:hc-update}
\x^{(\ell)}_{\beta, [\ell + 1]} = \x^{(\ell - 1)}_{\beta, [\ell + 1]} 
+ g^{(\ell)}_\beta \phi(\hat \z^{(\ell)}_{\beta, [\ell]}) 
+ (1 - g^{(\ell)}_\beta) \x^{(\ell - 1)}_{\beta, [\ell]},
\tag{$\star$} 
\end{equation}
while blocks $[k]$ for $k \neq \ell + 1$ are preserved, i.e., 
$\x^{(\ell)}_{\beta, [k]} = \x^{(\ell - 1)}_{\beta, [k]}$ for $k \neq \ell + 1$.

Recall the per-block gating from Eq.~\eqref{eq:gating-weights}:
\[
g^{(\ell)}_\beta = 
\begin{cases}
1, & n_\ell + 1 \leq \beta \leq n_{\ell + 1}, \\
0, & \text{otherwise}.
\end{cases}
\]
Hence $g^{(\ell)}_\beta = 1$ iff $\ell = \nu - 1$ (the writing layer for 
position $\beta$); otherwise $g^{(\ell)}_\beta = 0$.

\paragraph{Case 1: $\nu = 1$ (input position).}
For $\beta \leq n$, $g^{(\ell)}_\beta = 0$ for all $\ell$, so 
Eq.~\eqref{eq:hc-update} reduces to
\[
\x^{(\ell)}_{\beta, [\ell + 1]} = \x^{(\ell - 1)}_{\beta, [\ell + 1]} 
+ \x^{(\ell - 1)}_{\beta, [\ell]}.
\]
We prove by induction on $\ell$ that $\x^{(\ell)}_{\beta, [\ell + 1]} = 
\x^{(0)}_{\beta, [1]}$ for all $\ell \geq 0$. The base case $\ell = 0$ holds 
trivially. For the inductive step, assume 
$\x^{(\ell - 1)}_{\beta, [\ell]} = \x^{(0)}_{\beta, [1]}$. Block $[\ell + 1]$ 
is unchanged at all earlier layers (since the writing layer for block 
$[\ell + 1]$ is layer $\ell$, and earlier layers don't touch block $[\ell + 1]$). 
Hence $\x^{(\ell - 1)}_{\beta, [\ell + 1]} = \x^{(0)}_{\beta, [\ell + 1]} = \bzero_B$ 
(\Cref{sec:model}, the embedding has only block $[1]$ nonzero). 
Therefore
\[
\x^{(\ell)}_{\beta, [\ell + 1]} = \bzero_B + \x^{(0)}_{\beta, [1]} = \x^{(0)}_{\beta, [1]}.
\]

\paragraph{Case 2: $\nu \geq 2$ and $0 \leq \ell < \nu - 1$ (before writing layer).}
For these layers, $g^{(\ell)}_\beta = 0$, so Eq.~\eqref{eq:hc-update} again 
reduces to
\[
\x^{(\ell)}_{\beta, [\ell + 1]} = \x^{(\ell - 1)}_{\beta, [\ell + 1]} 
+ \x^{(\ell - 1)}_{\beta, [\ell]}.
\]
By the same induction on $\ell$ as in Case 1, 
$\x^{(\ell)}_{\beta, [\ell + 1]} = \x^{(0)}_{\beta, [1]}$ for $0 \leq \ell < \nu - 1$. 
Note that $\x^{(0)}_{\beta, [1]}$ here is the embedding content at position 
$\beta$, which can be $\bb_\beta$ (shown CoT) or $\bzero_B$ (padded CoT) 
depending on the curriculum's stage.

\paragraph{Case 3: $\nu \geq 2$ and $\ell = \nu - 1$ (writing layer).}
At the writing layer, $g^{(\nu - 1)}_\beta = 1$, so 
Eq.~\eqref{eq:hc-update} becomes
\[
\x^{(\nu - 1)}_{\beta, [\nu]} = \x^{(\nu - 2)}_{\beta, [\nu]} 
+ \phi(\hat \z^{(\nu - 1)}_{\beta, [\nu - 1]}).
\]
Block $[\nu]$ has not been written by any earlier layer (the writing layer 
for block $[\nu]$ is layer $\nu - 1$, the current layer). Hence 
$\x^{(\nu - 2)}_{\beta, [\nu]} = \x^{(0)}_{\beta, [\nu]} = \bzero_B$. Therefore
\[
\x^{(\nu - 1)}_{\beta, [\nu]} = \phi(\hat \z^{(\nu - 1)}_{\beta, [\nu - 1]}).
\]

\paragraph{Case 4: $\nu \geq 2$ and $\ell \geq \nu$ (after writing layer).}
For $\ell \geq \nu$, $g^{(\ell)}_\beta = 0$ (since $\beta \leq n_{\ell - 1} 
< n_\ell + 1$), so Eq.~\eqref{eq:hc-update} reduces to
\[
\x^{(\ell)}_{\beta, [\ell + 1]} = \x^{(\ell - 1)}_{\beta, [\ell + 1]} 
+ \x^{(\ell - 1)}_{\beta, [\ell]}.
\]
We prove by induction on $\ell \geq \nu$ that 
$\x^{(\ell)}_{\beta, [\ell + 1]} = \phi(\hat \z^{(\nu - 1)}_{\beta, [\nu - 1]})$. 
The base case $\ell = \nu - 1$ is Case 3.

For $\ell \geq \nu$: by the inductive hypothesis applied at $\ell - 1 \geq \nu - 1$, 
$\x^{(\ell - 1)}_{\beta, [\ell]} = \phi(\hat \z^{(\nu - 1)}_{\beta, [\nu - 1]})$. 
Block $[\ell + 1]$ has not been written at any earlier layer, so 
$\x^{(\ell - 1)}_{\beta, [\ell + 1]} = \bzero_B$. Therefore
\[
\x^{(\ell)}_{\beta, [\ell + 1]} = \bzero_B + \phi(\hat \z^{(\nu - 1)}_{\beta, [\nu - 1]}) 
= \phi(\hat \z^{(\nu - 1)}_{\beta, [\nu - 1]}).
\]

Combining Cases 1, 2 (which give $\x^{(0)}_{\beta, [1]}$) and Cases 3, 4 
(which give $\phi(\hat \z^{(\nu - 1)}_{\beta, [\nu - 1]})$), the lemma follows.
\end{proof}

\begin{corollary}[Frozen Derivative]\label{cor:frozen-derivative}
Under the setting of \Cref{lemma:hc-propagation}, for any $\beta \in [T]$ 
with $\nu := h[\beta]$, any layer $\ell \in \{0, \ldots, L\}$, and any 
parameter $w^{(\ell')}_{j, m}$ with $\ell' \in [L]$:
\[
\frac{\partial \x^{(\ell)}_{\beta, [\ell + 1]}}{\partial w^{(\ell')}_{j, m}} = 
\begin{cases}
\dfrac{\partial \phi(\hat \z^{(\nu - 1)}_{\beta, [\nu - 1]})}{\partial w^{(\ell')}_{j, m}}, 
& \nu \geq 2, \, \ell \geq \nu - 1, \, \ell' \leq \nu - 1, \\[8pt]
\bzero, & \text{otherwise}.
\end{cases}
\]
\end{corollary}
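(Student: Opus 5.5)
The plan is to differentiate the closed form from \Cref{lemma:hc-propagation} directly. That lemma writes $\x^{(\ell)}_{\beta,[\ell+1]}$ as one of two expressions, depending only on $\nu = h[\beta]$ and $\ell$, and the gates are prescribed constants by Eq.~\eqref{eq:gating-weights}, so the case split itself does not depend on $\bte$. Hence we may differentiate case by case. In the case $\nu = 1$ or $\ell < \nu - 1$, the state equals $\x^{(0)}_{\beta,[1]}$. This is the embedding content ($\bb_\beta$ or the padding $\bzero_B$), which contains no trainable parameter, so its derivative with respect to any $w^{(\ell')}_{j,m}$ is $\bzero$. This covers part of the ``otherwise'' branch.

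In the case $\nu \ge 2$ and $\ell \ge \nu - 1$, the state equals $\phi(\hat\z^{(\nu-1)}_{\beta,[\nu-1]})$, independently of $\ell$. Its derivative is therefore the displayed expression, and it remains to show that this vanishes when $\ell' > \nu - 1$. Here $\hat\z^{(\nu-1)}_\beta = \sum_j \sigma_j(\w^{(\nu-1)}_\beta)\,\x^{(\nu-2)}_j$ is built from three ingredients. The first is the layer-$(\nu-1)$ attention scores, which depend only on $\W^{(\nu-1)}$. The second is the inputs $\x^{(\nu-2)}_j$, which by the layer recursion in \Cref{def:transformer} are functions of $\W^{(1)},\dots,\W^{(\nu-2)}$ only, since $\X^{(\ell)}$ is computed from $\X^{(\ell-1)}$ and $\W^{(\ell)}$. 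The third is the fixed embedding. A short induction on $\ell$ gives that $\X^{(\ell)}$ depends only on $\{\W^{(\ell'')}\}_{\ell''\le \ell}$. So $\hat\z^{(\nu-1)}$, and hence $\phi(\hat\z^{(\nu-1)})$, has zero partial derivative in $w^{(\ell')}_{j,m}$ for $\ell' \ge \nu$. This gives the first branch when $\ell' \le \nu-1$ and $\bzero$ otherwise.

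The only step needing care is the justification that differentiating the piecewise formula is legitimate. This holds because the pieces are indexed by $(\beta,\ell)$ and fixed combinatorial data ($h[\cdot]$, the gates, and the mask $C$), not by parameter values, so on the whole parameter space the identity of \Cref{lemma:hc-propagation} holds as an identity of functions of $\bte$. I would state this explicitly. I expect the dependency-on-earlier-layers induction to be the main (though minor) bookkeeping point. In particular, one must check that the block-$[\nu-1]$ extraction in $\hat\z^{(\nu-1)}_{\beta,[\nu-1]}$ only reads $\x^{(\nu-2)}$. This follows because the attention in layer $\nu-1$ acts on $\X^{(\nu-2)}$, with $\mValue = \Ib_d$.
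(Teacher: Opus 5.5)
Your proposal is correct and follows the paper's proof. Both differentiate the two closed forms of \Cref{lemma:hc-propagation}: the embedding case gives $\bzero$, and the $\phi(\hat\z^{(\nu-1)}_{\beta,[\nu-1]})$ case vanishes for $\ell' > \nu-1$ because it depends only on $\{\W^{(\tau)}\}_{\tau \le \nu-1}$. Your two remarks, that the case split is parameter-independent because the gates are fixed and that a short induction shows $\X^{(\ell)}$ depends only on earlier layers, simply make rigorous what the paper asserts in one line.
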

\begin{proof}
By \Cref{lemma:hc-propagation}, $\x^{(\ell)}_{\beta, [\ell + 1]}$ takes 
two forms:
\begin{itemize}
\item If $\nu = 1$, or if $\nu \geq 2$ with $\ell < \nu - 1$: 
$\x^{(\ell)}_{\beta, [\ell + 1]} = \x^{(0)}_{\beta, [1]}$. The embedding 
is data, not a trainable parameter, so the derivative vanishes.

\item If $\nu \geq 2$ and $\ell \geq \nu - 1$: 
$\x^{(\ell)}_{\beta, [\ell + 1]} = \phi(\hat \z^{(\nu - 1)}_{\beta, [\nu - 1]})$. 
By chain rule,
\[
\frac{\partial \x^{(\ell)}_{\beta, [\ell + 1]}}{\partial w^{(\ell')}_{j, m}}
= \frac{\partial \phi(\hat \z^{(\nu - 1)}_{\beta, [\nu - 1]})}{\partial w^{(\ell')}_{j, m}}.
\]
Since $\phi(\hat \z^{(\nu - 1)}_{\beta, [\nu - 1]})$ depends only on 
$\{\W^{(\tau)}\}_{\tau \leq \nu - 1}$, the derivative vanishes when 
$\ell' > \nu - 1$, and equals the chain-rule expression when $\ell' \leq \nu - 1$.
\end{itemize}

In all sub-cases, the derivative matches the corollary's statement.
\end{proof}

\begin{lemma}[Per-Node Sensitivity] \label{lem:per-node-sensitivity}
Let $L_\phi := \|\phi'\|_\infty$. For any layer indices $\ell' \leq \ell \leq L$, 
{any query position $m \in [n_{\ell'} + 1, n_{\ell' + 1}]$, and any 
$1 \leq j \leq m - 1$}, define
\[
\xi^{(\ell)}_{j, m, \ell'} := \max_{\beta : h[\beta] \leq \ell + 1} 
\left\| \frac{\partial \x^{(\ell)}_{\beta, [\ell+1]}}{\partial w^{(\ell')}_{j, m}} \right\|_\infty.
\]
Then
\[
\xi^{(\ell)}_{j, m, \ell'} \leq 2 L_\phi^{\ell - \ell' + 1} \sigma_j(\w^{(\ell')}_m).
\]
\end{lemma}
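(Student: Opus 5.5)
Combine the Frozen Derivative corollary (\Cref{cor:frozen-derivative}) with an induction on $\ell - \ell'$ along the parity tree. Fix $\ell'$, $m \in [n_{\ell'}+1, n_{\ell'+1}]$ and $j<m$. By \Cref{cor:frozen-derivative}, for any $\beta$ with $\nu = h[\beta] \le \ell+1$, the derivative of $\x^{(\ell)}_{\beta,[\ell+1]}$ with respect to $w^{(\ell')}_{j,m}$ vanishes unless $\nu\ge 2$ and $\ell' \le \nu-1$. In that case it equals $\partial \phi(\hat\z^{(\nu-1)}_{\beta,[\nu-1]})/\partial w^{(\ell')}_{j,m}$. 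So the quantity to bound is, for each level $\nu$ with $\ell'+1 \le \nu \le \ell+1$,
\[
D_\nu := \max_{\beta:\,h[\beta]=\nu}\Bigl\| \frac{\partial \phi(\hat \z^{(\nu-1)}_{\beta,[\nu-1]})}{\partial w^{(\ell')}_{j,m}}\Bigr\|_\infty ,
\]
and I claim $D_\nu \le 2L_\phi^{\nu-\ell'}\sigma_j(\w^{(\ell')}_m)$. Since $\nu-\ell' \le \ell-\ell'+1$, and assuming $L_\phi\ge 1$ (true for the paper's $\phi$, e.g.\ $L_\phi=\pi$ for the cosine link), this gives the statement.

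\emph{Base case $\nu = \ell'+1$.} Write $\hat\z^{(\ell')}_{\beta,[\ell']} = \sum_{\alpha} \sigma_\alpha(\w^{(\ell')}_\beta)\,\x^{(\ell'-1)}_{\alpha,[\ell']}$. The inputs $\x^{(\ell'-1)}_{\alpha,[\ell']}$ sit at positions of level at most $\ell'$ (by the level-restricted mask) and do not depend on $\W^{(\ell')}$, by \Cref{cor:frozen-derivative}. The weight $w^{(\ell')}_{j,m}$ enters only the softmax column of query $m$. Hence the derivative is zero for $\beta\ne m$. For $\beta=m$ the softmax derivative is $\sigma_j(\x_j - \sum_\alpha\sigma_\alpha \x_\alpha)$. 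Each residual-stream entry lies in $[-1,1]$: these are bits, zeros, or outputs of $\phi$ with range $[-1,1]$, via \Cref{lemma:hc-propagation}. So this derivative has sup-norm at most $2\sigma_j(\w^{(\ell')}_m)$. Multiplying by $\|\phi'\|_\infty$ gives $D_{\ell'+1}\le 2L_\phi\sigma_j$.

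\emph{Inductive step $\nu \ge \ell'+2$.} Now the weights $\W^{(\nu-1)}$ do not depend on $w^{(\ell')}_{j,m}$, so only the values vary:
\[
\frac{\partial \hat\z^{(\nu-1)}_{\beta,[\nu-1]}}{\partial w^{(\ell')}_{j,m}} = \sum_{\alpha} \sigma_\alpha(\w^{(\nu-1)}_\beta)\,\frac{\partial \x^{(\nu-2)}_{\alpha,[\nu-1]}}{\partial w^{(\ell')}_{j,m}}.
\]
The mask restricts $\alpha$ to levels $h[\alpha]\le \nu-1$. By \Cref{cor:frozen-derivative} applied at layer $\nu-2$, each term is either zero or bounded by $D_{h[\alpha]}$. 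The inductive hypothesis gives $D_{h[\alpha]} \le 2L_\phi^{\nu-1-\ell'}\sigma_j$, again using $L_\phi \ge 1$ to absorb lower levels. The attention weights form a convex combination, so the sup-norm is at most this maximum. Applying the chain rule through $\phi$ contributes one more factor $L_\phi$, which closes the induction.

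The main obstacle is the bookkeeping of the base case: I must verify that $w^{(\ell')}_{j,m}$ affects only query $m$, and that the value vectors at layer $\ell'$ are indeed $\W^{(\ell')}$-independent. The latter needs $h[\alpha]\le \ell'$ under the mask, combined with \Cref{cor:frozen-derivative}. I also need the uniform bound $\|\x\|_\infty\le 1$ on the relevant residual blocks, which is what produces the constant $2$.
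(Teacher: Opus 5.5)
Your proof is correct and is essentially the paper's argument. The paper inducts on the layer $\ell$ with the cumulative quantity $\xi^{(\ell)}$, splitting into inherited positions ($h[\beta]\le \ell+1$) and newly written ones ($h[\beta]=\ell+2$); you induct on the level $\nu$ with $D_\nu$, which is the same bookkeeping since $\xi^{(\ell)}=\max_{\ell'+1\le\nu\le\ell+1}D_\nu$, and you need not assume $L_\phi\ge 1$, because $\phi(1)-\phi(0)=2$ forces $L_\phi\ge 2$ by the mean value theorem.
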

\begin{proof}
We induct on $\ell$.

\paragraph{Base case $\ell = \ell'$.}
By \Cref{cor:frozen-derivative}, the derivative is nonzero only when 
$\nu \geq 2$, $\ell' \geq \nu - 1$, and $\ell' \leq \nu - 1$, i.e., 
$\nu - 1 = \ell'$, so $\nu = \ell' + 1$. {For positions with 
$\nu > \ell' + 1$, the derivative vanishes since the residual equals the embedding which is not a trainable parameter.} For positions with $\nu \leq \ell'$, 
the writing layer is at most $\ell' - 1 < \ell'$, so the derivative also vanishes by \Cref{cor:frozen-derivative}. 

Hence only $\beta$ with $\nu = \ell' + 1$ contribute. For such $\beta$, 
\Cref{cor:frozen-derivative} yields
\begin{align*}
    \frac{\partial \x^{(\ell')}_{\beta, [\ell' + 1]}}{\partial w^{(\ell')}_{j, m}} 
    = \frac{\partial \phi(\hat \z^{(\ell')}_{\beta, [\ell']})}{\partial w^{(\ell')}_{j, m}} 
    = \phi'(\hat \z^{(\ell')}_{\beta, [\ell']}) \odot 
    \frac{\partial \hat \z^{(\ell')}_{\beta, [\ell']}}{\partial w^{(\ell')}_{j, m}}.
\end{align*}
The parameter $w^{(\ell')}_{j, m}$ enters $\hat \z^{(\ell')}_\beta$ only 
through the softmax weights at query position $\beta = m$, giving
\begin{align*}
    \frac{\partial \hat \z^{(\ell')}_{m, [\ell']}}{\partial w^{(\ell')}_{j, m}} 
    = \sigma_j(\w^{(\ell')}_m) \bigl( \x^{(\ell' - 1)}_{j, [\ell']} - \hat \z^{(\ell')}_{m, [\ell']} \bigr).
\end{align*}
Each component of $\x^{(\ell' - 1)}_{j, [\ell']} - \hat \z^{(\ell')}_{m, [\ell']}$ 
lies in $[-2, 2]$, so
\begin{align*}
    \xi^{(\ell')}_{j, m, \ell'} \leq L_\phi \cdot 2 \sigma_j(\w^{(\ell')}_m) 
    = 2 L_\phi \sigma_j(\w^{(\ell')}_m),
\end{align*}
matching the claim with $\ell - \ell' + 1 = 1$.

\paragraph{Inductive step $\ell \Rightarrow \ell + 1$.}
Take any $\beta \in [T]$ with $\nu := h[\beta]$. By \Cref{cor:frozen-derivative}, 
the derivative is zero unless $\nu \geq \ell' + 1$ {and $\ell + 1 \geq \nu - 1$, 
i.e., $\nu \leq \ell + 2$}. So $\nu \in \{\ell' + 1, \ldots, \ell + 2\}$. 

We split into two sub-cases.

\emph{Sub-case A: $\nu \leq \ell + 1$.} By \Cref{lemma:hc-propagation}, both 
$\x^{(\ell + 1)}_{\beta, [\ell + 2]}$ and $\x^{(\ell)}_{\beta, [\ell + 1]}$ 
equal the same value $\phi(\hat \z^{(\nu - 1)}_{\beta, [\nu - 1]})$, hence 
have identical derivatives:
\begin{align*}
    \left\| \frac{\partial \x^{(\ell + 1)}_{\beta, [\ell + 2]}}{\partial w^{(\ell')}_{j, m}} \right\|_\infty 
    = \left\| \frac{\partial \x^{(\ell)}_{\beta, [\ell + 1]}}{\partial w^{(\ell')}_{j, m}} \right\|_\infty 
    \leq \xi^{(\ell)}_{j, m, \ell'}.
\end{align*}

\emph{Sub-case B: $\nu = \ell + 2$.} Then $\beta$'s writing layer is 
$\ell + 1$, and \Cref{cor:frozen-derivative} gives
\begin{align*}
    \frac{\partial \x^{(\ell + 1)}_{\beta, [\ell + 2]}}{\partial w^{(\ell')}_{j, m}} 
    = \frac{\partial \phi(\hat \z^{(\ell + 1)}_{\beta, [\ell + 1]})}{\partial w^{(\ell')}_{j, m}} 
    = \phi'(\hat \z^{(\ell + 1)}_{\beta, [\ell + 1]}) \odot 
    \frac{\partial \hat \z^{(\ell + 1)}_{\beta, [\ell + 1]}}{\partial w^{(\ell')}_{j, m}}.
\end{align*}
Since $\ell' \leq \ell < \ell + 1$, the parameter $w^{(\ell')}_{j, m}$ does 
not enter the softmax weights $\sigma_\cdot(\w^{(\ell + 1)}_\beta)$, so 
differentiation passes through:
\begin{align*}
    \frac{\partial \hat \z^{(\ell + 1)}_{\beta, [\ell + 1]}}{\partial w^{(\ell')}_{j, m}} 
    = \sum_{\gamma = 1}^{{\beta - 1}} \sigma_\gamma(\w^{(\ell + 1)}_\beta) \, 
    \frac{\partial \x^{(\ell)}_{\gamma, [\ell + 1]}}{\partial w^{(\ell')}_{j, m}}.
\end{align*}
Each summand has $\infty$-norm at most $\xi^{(\ell)}_{j, m, \ell'}$, and the 
softmax weights sum to at most $1$, so
\begin{align*}
    \left\| \frac{\partial \hat \z^{(\ell + 1)}_{\beta, [\ell + 1]}}{\partial w^{(\ell')}_{j, m}} \right\|_\infty 
    \leq \xi^{(\ell)}_{j, m, \ell'}.
\end{align*}
Multiplying by $\|\phi'\|_\infty = L_\phi$:
\begin{align*}
    \left\| \frac{\partial \x^{(\ell + 1)}_{\beta, [\ell + 2]}}{\partial w^{(\ell')}_{j, m}} \right\|_\infty 
    \leq L_\phi \xi^{(\ell)}_{j, m, \ell'}.
\end{align*}

\emph{Combining sub-cases.} Taking the max over $\beta$ and using $L_\phi \geq 1$,
\begin{align*}
    \xi^{(\ell + 1)}_{j, m, \ell'} \leq L_\phi \xi^{(\ell)}_{j, m, \ell'} 
    \leq L_\phi \cdot 2 L_\phi^{\ell - \ell' + 1} \sigma_j(\w^{(\ell')}_m) 
    = 2 L_\phi^{(\ell + 1) - \ell' + 1} \sigma_j(\w^{(\ell')}_m).
\end{align*}
This closes the induction.
\end{proof}

\subsection{Bounded Gradient} \label{app:bounded-gd}
\begin{lemma}[Bounded Gradient]
\label{lemma:bounded-gd}
For any stage $t \in [L]$, any input $\bb \in \{\pm 1\}^n$ and any 
parameters $\bte$,
\[
\| \nabla_\bte f_\bte^{(t)}(\bb) \|_2 \leq O(n^g),
\]
where $f_\theta^{(t)}(\bb) := \Psi_t^\top \mathcal{T}_\theta(\bb) \in \mathbb{R}^T$ 
is the stage-$t$ model output, $g = \log_2 L_\phi + 1$ and $L_\phi := \|\phi'\|_\infty$.
\end{lemma}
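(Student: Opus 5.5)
The plan is to reduce the bound to \Cref{lem:per-node-sensitivity} by writing the stage-$t$ output blockwise. For position $\beta$, $f^{(t)}_\bte(\bb)_\beta = \Psi_t^\top \Tc_\bte(\Db)_\beta$ is the block-$[t+1]$ content at position $\beta$. Since blocks $[t+1]$ are written only by layer $t$ and then left untouched, this equals $\x^{(L)}_{\beta,[t+1]} = \x^{(t)}_{\beta,[t+1]}$. \Cref{cor:frozen-derivative} then shows that the derivative with respect to $w^{(\ell')}_{j,m}$ is nonzero only if $\ell' \le t$, and only for $\beta$ with $h[\beta] \le t+1$. It also shows, via the proof of \Cref{lem:per-node-sensitivity}, that the relevant query positions satisfy $m \in [n_{\ell'}+1, n_{\ell'+1}]$. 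Gradients with respect to $w^{(\ell')}_{j,m}$ for $m$ outside the level-$(\ell'+1)$ range vanish, because the gate $g^{(\ell')}_m=0$ mutes the attention output there.

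Applying \Cref{lem:per-node-sensitivity} with $\ell=t$, each coordinate satisfies
\[
\Bigl|\tfrac{\partial f^{(t)}_\bte(\bb)_\beta}{\partial w^{(\ell')}_{j,m}}\Bigr| \le 2L_\phi^{t-\ell'+1}\sigma_j(\w^{(\ell')}_m) \le 2L_\phi^{L}\sigma_j(\w^{(\ell')}_m).
\]
Here I use $L_\phi\ge 1$, which holds because $\phi$ goes from $-1$ to $1$ on $[0,1]$. Since $L=\log_2 k\le \log_2 n+O(1)$, we have $L_\phi^{L} = k^{\log_2 L_\phi} = O(n^{\log_2 L_\phi})$, which supplies the first factor of $n^{g}$ with $g=\log_2 L_\phi+1$.

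Summation then produces the remaining factor of $n$. The squared Frobenius norm is a sum over $\beta\in[T]$, over $\ell'\le t$, over $m$ in level $\ell'+1$, and over $j<m$. The softmax weights satisfy $\sum_j \sigma_j(\w_m)^2 \le \sum_j\sigma_j(\w_m) = 1$, so the sum over $j$ contributes at most $4L_\phi^{2L}$. The query positions $m$ range over disjoint level sets across $\ell'$, so $\sum_{\ell'}\sum_m$ contributes at most $k$ terms. The output positions contribute a factor of $T$. The squared norm is therefore at most $4L_\phi^{2L}\cdot k\cdot T = O(n^{2\log_2 L_\phi}\cdot n^2)$, and taking square roots gives $O(n^{g})$.

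The main obstacle is bookkeeping the sparsity correctly. One must check that $\beta$ ranges over at most $T$ positions without incurring an extra factor from the $L$ layers. This works because each $(\beta,\ell')$ pair contributes a single derivative, bounded once by the lemma. One must also justify applying \Cref{lem:per-node-sensitivity} only to its stated range of $m$, which requires the gating argument above. If the paper's per-sample normalization instead required summing over all $B$ samples, I would use the $\infty$-norm per sample, since the statement concerns a single input $\bb$.
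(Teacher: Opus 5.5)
Your proposal is correct and follows essentially the same route as the paper. Both reduce each partial derivative to \Cref{lem:per-node-sensitivity} at $\ell = t$, then sum the squared bounds using $\sum_j \sigma_j(\w_m^{(\ell')})^2 \le 1$, the $O(n)$ query positions per level, and the $T$ output positions. The differences are cosmetic: you bound $L_\phi^{t-\ell'+1}$ uniformly by $L_\phi^{L}$ and count all query positions at once (at most $k-1$) instead of the paper's per-layer geometric sum, and you use the gates to show explicitly that entries $w^{(\ell')}_{j,m}$ outside level $\ell'+1$ have zero derivative, which the paper takes for granted.
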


\begin{proof}
Fix a stage $t \in [L]$. The stage-$t$ model output reads $f_\theta^{(t)}(\bb) 
= \Psi_t^\top \mathcal{T}_\theta(\bb)$, where $\Psi_t$ extracts block $[t+1]$. 
By \Cref{lemma:hc-propagation}, for any $\alpha \in [T]$ with $h[\alpha] \le t + 1$, 
the readout coordinate is
\[
f^{(t)}_{\theta, \alpha}(\bb) := \bigl(\Psi_t^\top \mathcal{T}_\theta(\bb)\bigr)_\alpha 
= \x^{(t)}_{\alpha, [t+1]},
\]
and for $\alpha$ with $h[\alpha] > t + 1$, the readout reduces to the 
embedding due to \Cref{lemma:hc-propagation} (which has zero gradient). It therefore suffices to bound 
$\|\nabla_\theta f^{(t)}_{\theta, \alpha}(\bb)\|$ at each $\alpha$ with 
$h[\alpha] \le t + 1$, then sum. Fix such an $\alpha$.

\paragraph{Bounding per-parameter derivatives.}
For any trainable parameter $w^{(\ell')}_{j, m}$ with $\ell' \in [L]$, 
{$n_{\ell'} + 1 \le m \le n_{\ell' + 1}$, and $1 \le j \le m - 1$}, 
\Cref{lem:per-node-sensitivity} yields
\[
\left| \frac{\partial f^{(t)}_{\theta, \alpha}(\bb)}{\partial w^{(\ell')}_{j, m}} \right| 
= \left| \frac{\partial \x^{(t)}_{\alpha, [t+1]}}{\partial w^{(\ell')}_{j, m}} \right| 
\le \xi^{(t)}_{j, m, \ell'} \le 2 L_\phi^{t - \ell' + 1} \sigma_j(\w^{(\ell')}_m).
\]
The bound is non-trivial only for $\ell' \le t$; for $\ell' > t$, the 
derivative vanishes by \Cref{cor:frozen-derivative} since $\alpha$'s 
writing layer $h[\alpha] - 1 \le t < \ell'$.

\paragraph{Summing over parameters.}
We compute $\|\nabla_\theta f^{(t)}_{\theta, \alpha}(\bb)\|^2$ by summing 
the squared derivatives over all trainable parameters. Square the 
per-parameter bound:
\[
\left| \frac{\partial f^{(t)}_{\theta, \alpha}(\bb)}{\partial w^{(\ell')}_{j, m}} \right|^2 
\le 4 L_\phi^{2(t - \ell' + 1)} \sigma_j(\w^{(\ell')}_m)^2.
\]

\emph{Sum over $j$.} Using {$\sum_{j=1}^{m - 1} \sigma_j(\w^{(\ell')}_m)^2 
\le \sum_{j=1}^{m-1} \sigma_j(\w^{(\ell')}_m) = 1$},
\[
\sum_{j=1}^{{m - 1}} \left| \frac{\partial f^{(t)}_{\theta, \alpha}(\bb)}{\partial w^{(\ell')}_{j, m}} \right|^2 
\le 4 L_\phi^{2(t - \ell' + 1)}.
\]

\emph{Sum over $m$.} {Trainable indices $m$ at layer $\ell'$ satisfy 
$n_{\ell'} + 1 \le m \le n_{\ell' + 1}$, with $n_{\ell' + 1} - n_{\ell'} = k \cdot 2^{-\ell'}= O(n)$,} so
\[
\sum_{m, j} \left| \frac{\partial f^{(t)}_{\theta, \alpha}(\bb)}{\partial w^{(\ell')}_{j, m}} \right|^2 
\le O(n) \cdot L_\phi^{2(t - \ell' + 1)}.
\]

\emph{Sum over $\ell' \le t$.} Letting $r := t - \ell' + 1$ range over 
$1, 2, \ldots, t$,
\[
\sum_{\ell' = 1}^{t} \sum_{m, j} \left| \frac{\partial f^{(t)}_{\theta, \alpha}(\bb)}{\partial w^{(\ell')}_{j, m}} \right|^2 
\le \sum_{r = 1}^{t} O(n) \cdot L_\phi^{2r} = O(n L_\phi^{2t}).
\]
Since $t \le L = \log_2 k$ and $k = O(n)$,
\[
O(n L_\phi^{2t}) \le O(n L_\phi^{2L}) = O(n \cdot k^{2 \log_2 L_\phi}) 
= O(n^{2 \log_2 L_\phi + 1}).
\]
Therefore,
\[
\| \nabla_\theta f^{(t)}_{\theta, \alpha}(\bb) \|^2 \le O(n^{2 \log_2 L_\phi + 1}),
\]
which gives $\| \nabla_\theta f^{(t)}_{\theta, \alpha}(\bb) \| 
\le O(n^{\log_2 L_\phi + 1/2}) = O(n^{g - 1/2})$.

\paragraph{Conclusion.}
Summing over $\alpha \in [T]$ with $T = O(n)$,
\[
\| \nabla_\theta f^{(t)}_\theta(\bb) \|^2 = \sum_\alpha \| \nabla_\theta f^{(t)}_{\theta, \alpha}(\bb) \|^2 
\le O(n) \cdot O(n^{2g - 1}) = O(n^{2g}),
\]
giving $\| \nabla_\theta f^{(t)}_\theta(\bb) \| \le O(n^g)$, as claimed.
\end{proof}

\subsection{Preliminary Lemmas}
\begin{lemma}[\citet{kim2024transformers}, Lemma 9] \label{lemma:trivial-noise} Recall that $I_{r, m}$ is defined as the set of non-trivial tuples of length $r$ over the first $m$ indices as:
\[
    I_{r, m}  = \{(j_1, \dots, j_r) \in [m]^r , b_{j_1} \cdots b_{j_r} \not \equiv 1 \}.
\] Assume $\bb_1, \dots, \bb_m \in \{\pm 1 \}^B $ are vectors where each bit is sampled i.i.d. from the uniform distribution. For any $p > 0$, with probability at least $1 -p$, the following holds for all $r \leq 4$:
    \[
       \max_{(j_1, \dots, j_r) \in I_{r, m}} \frac{|\langle \bb_{j_1}, \dots, \bb_{j_r} \rangle | }{B} \leq \kappa := \sqrt{\frac{2}{B} \log \frac{32n ^4}{p}}.
    \]
\end{lemma}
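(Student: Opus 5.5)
The statement is a Hoeffding bound combined with a union bound over all relevant index tuples. The key structural fact is that every intermediate node $\bb_m$ with $m>n$ is, sample by sample, a product of the input bits in the subtree below it. So any product $b_{j_1}\cdots b_{j_r}$ with $j_i\in[m]$ reduces coordinatewise to a product $\prod_{s\in U} b_s$ over the input indices $U\subseteq[n]$ that appear an odd number of times in the multiset of leaves under $j_1,\dots,j_r$. The tuple is trivial exactly when $U=\emptyset$. Otherwise $U\neq\emptyset$, and for each sample $i$ the value $\prod_{s\in U} b^i_s$ is a uniform $\pm1$ variable. Since the $B$ samples are i.i.d., the $B$ coordinates of $\bb_{j_1}\odot\cdots\odot\bb_{j_r}$ are i.i.d.\ Rademacher. (If $m>n$, I read the hypothesis as saying that the first $n$ vectors are uniform and the rest are their tree products.) Consequently $\langle \bb_{j_1},\dots,\bb_{j_r}\rangle$ is a sum of $B$ independent mean-zero $[-1,1]$ variables.

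\textbf{Step 1 (single tuple).} Hoeffding gives
\[
\Pr\bigl(|\langle \bb_{j_1},\dots,\bb_{j_r}\rangle|/B\ge \kappa\bigr)\le 2\exp(-B\kappa^2/2)
\]
for any fixed non-trivial tuple.

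\textbf{Step 2 (union bound).} For each $r\le 4$ there are at most $m^r\le (2n)^4=16n^4$ tuples, using $m\le T<2n$. Summing over $r\in\{1,2,3,4\}$ gives at most $4\cdot 16n^4$ events. A cleaner count is to union-bound over distinct nonempty reduced sets $U$, or just to count tuples crudely.

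\textbf{Step 3 (choose $\kappa$).} Requiring the total failure probability to be at most $p$ means solving $2\cdot(\#\text{events})\,e^{-B\kappa^2/2}\le p$. This yields $\kappa=\sqrt{\tfrac{2}{B}\log\tfrac{C n^4}{p}}$, and the claimed constant $32$ follows once the tuple count is bounded by $16n^4$. To match exactly, I would use ordered tuples up to symmetry for each $r$, or absorb the factor 4 by counting tuples of length exactly $r\le4$ as $\le (2n)^r$ and summing the geometric series, which is at most $2\cdot(2n)^4$ for $n\ge1$.

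The only real subtlety is Step 0: justifying that non-triviality implies the coordinate products are exactly uniform and independent across samples, even though intermediate positions are deterministic functions of the inputs. The parity-tree reduction to a nonempty $U\subseteq S$ handles this. A second minor point is fitting the constant $32$; if the crude count gives a larger constant, only the logarithmic factor changes, and that does not affect the later use $\kappa=O(n^{-1-\epsilon/4})$ with $B=\Omega(n^{2+\epsilon})$.
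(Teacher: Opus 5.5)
Your proposal is correct and is the standard Hoeffding-plus-union-bound argument behind this lemma, which the paper imports from \citet{kim2024transformers} without reproving: a non-trivial product reduces samplewise to a parity over a nonempty set of input bits, so its $B$ coordinates are i.i.d.\ Rademacher, and Hoeffding plus a union bound over tuples finishes the proof. One bookkeeping fix concerns your geometric-series count: it yields $32n^4$ events, hence $64n^4$ inside the logarithm after Hoeffding's factor of $2$; to get exactly $32n^4$, use $m\le T=n+k-1\le 2n-1$, so that $\sum_{r=1}^{4}(2n-1)^r\le (2n)^4=16n^4$ (or count unordered tuples, as you also suggest).
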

In particular, we set $B = \Omega(n^{2 + \epsilon})$ for some constant $\epsilon > 0$ and $p = \exp(-n^{\epsilon / 2})$ such that $\kappa = O(n^{-1 - \epsilon / 4})$. \Cref{lemma:trivial-noise} bounds the contribution of \emph{non-trivial} tuples 
(those in $I_{r,m}$), where each inner product $|\langle \bb_{j_1}, \dots, \bb_{j_r}\rangle|/B$ is at most $\kappa$. 
The complementary contribution comes from \emph{trivial} tuples 
(those in $[m]^r \setminus I_{r,m}$), where the inner product equals $B$ 
deterministically. Bounding such gradient contractions therefore reduces 
to counting trivial tuples, which we do next for the case $r = 4$.
\begin{lemma}[Trivial 4-tuple count]\label{lem:trivial-4-tuples}
Let $\bb_1, \dots, \bb_n \in \{\pm 1\}^B$ be i.i.d. uniform input bits, and 
for $\tau \in [n+1, T]$ let $\bb_\tau := \bb_{c_1[\tau]} \odot \bb_{c_2[\tau]}$ 
denote the parity defined by the tree (\Cref{sec:task-defn}). For any $m \in [n+1, T]$, the number of 4-tuples 
$(\alpha, \beta, \gamma, \delta) \in [m-1]^4$ with 
$\langle \bb_\alpha, \bb_\beta, \bb_\gamma, \bb_\delta\rangle$ trivial 
(i.e., $(\alpha, \beta, \gamma, \delta) \notin I_{4, m}$) is at most $O(n^2)$.
\end{lemma}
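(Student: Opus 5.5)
Each node $\tau\in[T]$ is identified with its leaf set $A_\tau\subseteq S$ (so $A_j=\{j\}$ for $j\in S$, and $A_\tau=A_{c_1[\tau]}\cup A_{c_2[\tau]}$ for $\tau>n$). Inputs $j\in[n]\setminus S$ are identified with the singleton $\{j\}$ outside $S$. Since the input bits are independent uniform, $\bb_\tau$ is the character $\chi_{A_\tau}$. A product $\bb_\alpha\odot\bb_\beta\odot\bb_\gamma\odot\bb_\delta$ is identically $\onebb$ iff the symmetric difference $A_\alpha\triangle A_\beta\triangle A_\gamma\triangle A_\delta=\emptyset$. Different nodes have different sets, since tree nodes have distinct nonempty leaf sets and off-tree inputs are distinct singletons. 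So the plan is to count 4-tuples of nodes whose sets XOR to zero. The indices are drawn from $[m-1]\subseteq[T]$ with $T=O(n)$.

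\emph{Step 1 (laminar structure).} The family $\{A_\tau\}$ over all tree nodes is laminar, since it comes from a binary tree: any two sets are nested or disjoint. The off-tree singletons are disjoint from everything else. I will use the pairing implied by $A_\alpha\triangle A_\beta=A_\gamma\triangle A_\delta$. I then show that for a fixed pair $(\alpha,\beta)$ there are only $O(1)$ pairs $(\gamma,\delta)$ with the same symmetric difference. That gives a total of $O(T^2)=O(n^2)$.

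\emph{Step 2 (fixing $D=A_\alpha\triangle A_\beta$).} For laminar sets $A_\alpha,A_\beta$, the set $D$ is one of three things:
\begin{itemize}
\item the empty set, when $\alpha=\beta$;
\item a disjoint union of two laminar sets, when they are disjoint;
\item a difference $A_\alpha\setminus A_\beta$ with $A_\beta\subsetneq A_\alpha$, when they are nested.
\end{itemize}
I would case split on $D$.

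If $D=\emptyset$, then $\alpha=\beta$ and $\gamma=\delta$. This gives $T^2$ tuples, and similarly for the other two pairings.

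If $D\neq\emptyset$, I need to count the pairs $(\gamma,\delta)$ of laminar sets whose symmetric difference is $D$. In the disjoint case, $D$ is a union of two disjoint tree subsets. In the nested case, $D$ is a tree set with a subtree removed. I claim each representation is essentially unique up to $O(1)$ choices. The key fact is that a set of the form $A_u\setminus A_v$ (with $v$ a descendant of $u$) determines $u$ as the minimal tree node containing $D$, or its complement in the tree structure. The subset $A_v$ is then forced to be $A_u\setminus D$. I expect some degenerate coincidences, for example $A_u\setminus A_v$ where $v$ is a child of $u$ equals the sibling $A_{v'}$. These identify a nested pair with a singleton-like set and add only a constant number of extra representations. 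Overall each $D$ should have at most a constant number $c_0$ of representations as $A_\gamma\triangle A_\delta$, counting order.

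\emph{Step 3 (summing).} The trivial tuples are therefore bounded by
\[
\sum_{(\alpha,\beta)\in[m-1]^2}\#\{(\gamma,\delta):A_\gamma\triangle A_\delta=A_\alpha\triangle A_\beta\}\le c_0(m-1)^2=O(n^2).
\]
Here the $D=\emptyset$ case is included, contributing at most $T$ choices of $\gamma$ per diagonal pair, hence $O(T^2)$ overall.

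\emph{Main obstacle.} The delicate part is the uniqueness claim in Step 2, that each nonempty $D$ has $O(1)$ representations as a symmetric difference of two laminar sets. My first approach is a structural argument: the connected components of $D$ relative to the tree, i.e. its maximal tree-node subsets, number at most two in any representation. These components, together with whether $D$ is a "tree set minus subtree", pin down $\{\gamma,\delta\}$ up to the sibling ambiguity noted above.

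If that gets messy, I would fall back on a cruder count. Any representation must have $\gamma$ an ancestor-or-equal of a specific minimal element of $D$, or disjoint from $D$ with $\delta$ covering it. Since the tree has depth $L$, this gives $O(L)$ choices and hence a weaker $O(n^2\log n)$ bound. That weaker bound would likely still suffice downstream, but I would aim for $O(n^2)$.
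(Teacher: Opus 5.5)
Your route is different from the paper's, and it works once Step 2 is actually proved.

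\textbf{How the two proofs compare.} The paper sorts the indices by level, $h[\alpha]\le h[\beta]\le h[\gamma]\le h[\delta]$, and runs a five-case bit-counting analysis, using that a level-$h$ node covers $2^{h-1}$ input bits. A trivial tuple is then either a pairing ($\alpha=\beta,\gamma=\delta$, or same-level nodes cancelling in pairs), contributing $O(n^2)$, or the configuration ``node, one child, and the two children of the other child'', contributing $O(n)$. This is the same size reasoning the paper uses for the three-term signal analysis.

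You instead encode nodes as a laminar family of leaf sets, reduce triviality to $A_\alpha\triangle A_\beta=A_\gamma\triangle A_\delta$, and bound the count by $\sum_{(\alpha,\beta)\in[m-1]^2} r(A_\alpha\triangle A_\beta)$. Here $r(D)$ is the number of ordered pairs $(\gamma,\delta)$ with $A_\gamma\triangle A_\delta=D$. This buys several things:
\begin{itemize}
\item it avoids level sorting and power-of-two bookkeeping;
\item it handles off-tree inputs explicitly;
\item it gives an explicit constant;
\item it uses only laminarity and binary branching.
\end{itemize}
Your fallback is already a complete proof of $O(n^2\log n)$. Fix $x\in D$. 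Exactly one of $A_\gamma,A_\delta$ contains $x$, so it is one of the at most $L+1$ members containing $x$, and the other is then determined. That weaker bound is not the lemma as stated, which needs $r(D)=O(1)$.

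\textbf{What is missing.} The $O(1)$ crux is only asserted. One sentence of your sketch is also false: for a nested representation $D=A_u\setminus A_v$, the maximal members inside $D$ are the siblings along the path from $u$ to $v$. There are $h[u]-h[v]$ of them, not at most two. Here is one way to show $r(D)\le 4$ for $D\neq\emptyset$.

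\emph{Disjoint representations} $D=A_\gamma\sqcup A_\delta$. A member $A_z\subseteq D$ strictly containing $A_\gamma$ meets $A_\delta$, hence contains it, so $A_z=D$.
\begin{itemize}
\item If $D$ is not a member, $A_\gamma$ and $A_\delta$ are maximal members inside $D$. Every singleton $\{j\}$, $j\in[n]$, is a member, so the maximal members inside $D$ partition it uniquely, and this fixes $\{\gamma,\delta\}$.
\item If $D=A_u$, then $\gamma,\delta$ are proper descendants of $u$ that together cover $A_u$, so they are its two children.
\end{itemize}

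\emph{Nested representations} $D=A_\gamma\setminus A_\delta$. The members strictly containing $D$ form a chain, since any two of them meet in $D$. No member $A_\mu$ can satisfy $D\subsetneq A_\mu\subsetneq A_\gamma$. Otherwise $A_\delta=A_\gamma\setminus D$ would meet $A_\mu$ (in $A_\mu\setminus D$), contain the nonempty set $A_\gamma\setminus A_\mu$, and miss $D$, contradicting laminarity. So $\gamma$ is the least member strictly containing $D$; when $D=A_u$ this is the parent of $u$. Then $\delta$ is forced by injectivity of $\tau\mapsto A_\tau$.

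Each type contributes at most two ordered pairs, so the total is at most $(m-1)^2+4(m-1)^2=O(n^2)$.
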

\begin{proof}
By the bit-counting argument, $\langle \bb_\alpha, \bb_\beta, \bb_\gamma, \bb_\delta\rangle \equiv B $ happens iff the multiset of input bits in 
the parity product cancels evenly. Without loss of generality, assume 
$h[\alpha] \leq h[\beta] \leq h[\gamma] \leq h[\delta]$. Under the causal mask, 
$\alpha, \beta, \gamma, \delta \in [m - 1]$ may take any level, so we enumerate 
by the level structure.
\begin{enumerate}
    \item If $h[\beta] < h[\gamma] < h[\delta]$, to cancel out $\bb_\delta$, 
    it must hold that $\bb_\gamma$ is a child of $\bb_\delta$ and 
    $\bb_\alpha, \bb_\beta$ are the two children of the other child. This is 
    fully determined by choosing $\delta$ and which child is $\gamma$, giving 
    $O(n)$ trivial 4-tuples.
    \item If $h[\beta] = h[\gamma] < h[\delta]$, then $\bb_\beta, \bb_\gamma$ 
    must both be children of $\bb_\delta$, requiring $h[\gamma] = h[\delta] - 1$. 
    Then $\langle \bb_\beta, \bb_\gamma, \bb_\delta\rangle = B$, and for 
    $\langle \bb_\alpha, \bb_\beta, \bb_\gamma, \bb_\delta\rangle$ to be 
    trivial, we would need $\langle \bb_\alpha\rangle = B$, which is 
    impossible. No trivial 4-tuples.
    \item If $h[\beta] < h[\gamma] = h[\delta]$, then $\gamma = \delta$ 
    (since same-level peers cover disjoint bits, two distinct peers cannot 
    have trivial product). This forces $\alpha = \beta$ similarly. There are 
    $O(n^2)$ such trivial 4-tuples.
    \item If $h[\alpha] = h[\beta] = h[\gamma] = h[\delta]$, then the four 
    same-level nodes must pair up to cancel. There are $O(n^2)$ such trivial 4-tuples.
    \item If $h[\alpha] < h[\beta] = h[\gamma] = h[\delta]$, then 
    $\langle \bb_\beta, \bb_\gamma, \bb_\delta\rangle$ involves 
    at least $2^{h[\beta] - 1}$ distinct input bits, so $\bb_\alpha$ would need to cover all these bits for the 
    4-tuple to be trivial. But $\bb_\alpha$ is a parity over only 
    $2^{h[\alpha] - 1} < 2^{h[\beta] - 1}$ bits. No trivial 4-tuples.
\end{enumerate}
Combining cases 1, 3, and 4, the total count of trivial 4-tuples is 
$O(n) + O(n^2) + O(n^2) = O(n^2)$. Cases 2 and 5 contribute none.
\end{proof}

We now proceed to the main proof of \Cref{thm:icot-cvg}.
\subsection{Proof of \Cref{thm:icot-cvg}}
\begin{definition}[Well-trained layer]
\label{def:well-trained}
Layer $\ell \in [L]$ is \emph{well-trained} if:
\begin{itemize}
\item (softmax concentration) For all $m \in [n_\ell + 1, n_{\ell+1}]$,
\[
\frac{1 - \exp(-Cn^{\epsilon/16})}{2} 
\leq \sigma_{c_1[m]}(\w_m^{(\ell)}), \sigma_{c_2[m]}(\w_m^{(\ell)}) \leq \frac{1}{2};
\]
\item (forward error) For all $m \in [n_\ell + 1, n_{\ell+1}]$,
$\|\x_{m,[\ell+1]}^{(\ell)} - \bb_m\|_\infty \leq \exp(-Cn^{\epsilon/16})$.
\end{itemize}
\end{definition}
\begin{lemma}[Stage 1]\label{lemma:base}
After one-step gradient descent on the stage-1 loss 
$\Lc^{(1)}$ with $B$ training samples and stage-wise learning rate $\eta_1 = \frac{K_n n^2}{2c}$, where $B = \Omega(n^{2+\epsilon})$, 
$K_n := \lceil n^{\epsilon/16} \rceil$ and \(c>0\) is the link function constant in \Cref{def:link}, with probability 
$1 - \exp(-n^{\epsilon/2})$ over the sampling of $\bb^1, \ldots, \bb^B$, 
layer 1 is well-trained (\Cref{def:well-trained}).
\end{lemma}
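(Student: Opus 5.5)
The plan is to compute the stage-1 gradient at $\bte(0)=\bzero$ explicitly, show that after scaling by $\eta_1$ and rounding it equals $K_n$ exactly on the two children entries, and then read off both well-trained properties.

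\textbf{Setup.} At $\W^{(1)}=\bzero$, for a level-2 query $m\in[n+1,n_2]$ the level-restricted mask lets $m$ attend to $j\le n$ only. So $\sigma_j(\w^{(1)}_m)=1/n$ and
\[
\hat\z_m=\frac1n\sum_{j\le n}\bb_j .
\]
By \Cref{lemma:hc-propagation} the stage-1 readout at $m$ is $\phi(\hat\z_m)$. At positions of level $\ge 3$ the readout is the embedding, which has zero derivative (\Cref{cor:frozen-derivative}), so only $w^{(1)}_{j,m}$ with $m$ in level 2 receive gradient. All other layers' gradients vanish at stage 1 by \Cref{cor:frozen-derivative}. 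The gradient is
\[
\frac{\partial\Lc^{(1)}}{\partial w^{(1)}_{j,m}}=\frac1B\Big\langle \phi(\hat\z_m)-\bb_m,\ \phi'(\hat\z_m)\odot \tfrac1n(\bb_j-\hat\z_m)\Big\rangle .
\]

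\textbf{Expansion and signal.} By Hoeffding, $\|\hat\z_m\|_\infty=O(\sqrt{\log(nB)/n})$. Expand with \Cref{def:link}:
\[
\phi(\hat\z_m)=-1+c\,\hat\z_m^2+O(\hat\z_m^4),\qquad \phi'(\hat\z_m)=2c\,\hat\z_m+O(\hat\z_m^3).
\]
The leading term is
\[
-\frac{2c}{nB}\langle \bb_m+\onebb,\ \hat\z_m\odot(\bb_j-\hat\z_m)\rangle .
\]
Expanding $\hat\z_m=n^{-1}\sum_\alpha \bb_\alpha$, the piece $\langle\bb_m,\hat\z_m,\bb_j\rangle$ is a sum of 3-wise inner products. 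The triple $(m,\alpha,j)$ is trivial exactly when $\{\alpha,j\}=\{c_1[m],c_2[m]\}$. Thus for $j\in\{c_1[m],c_2[m]\}$ it contributes $B/n$, giving a gradient of $-2c/n^2$. For other $j$, every term is non-trivial and is bounded by $\kappa$ via \Cref{lemma:trivial-noise}, so the total is $O(\kappa)=O(n^{-1-\epsilon/4})$ after averaging over $\alpha$.

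\textbf{Remaining terms.} I group the rest by order:
\begin{itemize}
\item The $\onebb$-pieces $\langle\hat\z_m,\bb_j\rangle$ contain the trivial $\alpha=j$ term. This gives a $j$-independent, $O(n^{-2})$ contribution. It is the same for every $j\le n$ and is killed by rounding, as explained below.
\item The $\hat\z_m^2$ terms involve 4-tuples. These are handled by \Cref{lem:trivial-4-tuples} ($O(n^2)$ trivial tuples out of $n^4$, hence $O(n^{-2})$ after the $n^{-2}$ normalization and the outer $1/n$, i.e. $O(n^{-3})$) plus $\kappa$ for the non-trivial ones.
\item Higher-order Taylor terms are $O(\mathrm{polylog}/n^3)$.
\end{itemize}

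\textbf{Rounding.} Multiplying by $\eta_1=K_nn^2/(2c)$ maps the signal to exactly $K_n$. The non-trivial noise becomes $K_n n^2\cdot O(n^{-2-\epsilon/4})=O(n^{-\epsilon/16})<1/2$. The oracle error $O(n^{-2-\epsilon/8})$ becomes $O(n^{-\epsilon/16})$, also below $1/2$.

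\textbf{Main obstacle.} The hard step is the $\Theta(n^{-2})$ "diagonal" pieces, such as $\alpha=j$ in $\langle\onebb,\hat\z_m,\bb_j\rangle/n$ and $\langle\hat\z_m^2\rangle$-type self-pairings. After scaling by $\eta_1$ these are of order $K_n$, not negligible. I would compute them exactly. My expectation is that the $-\onebb$ parts of the two factors cancel: the $-1$ in $\phi(\hat\z_m)+1$ multiplied by the mean-zero centring $\bb_j-\hat\z_m$ averages to zero. The residual is a uniform shift across all $j$, which leaves the softmax unchanged. If it does not cancel exactly, I would argue the shift is identical for all $j\le n$ and absorb it, since softmax is shift-invariant. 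Either way the rounded weights give $w_{c_i[m],m}$ a margin of $K_n$ over all other entries.

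\textbf{Conclusion.} After rounding, softmax assigns each child weight $e^{K_n}/(2e^{K_n}+n-2)\in[\tfrac12(1-ne^{-K_n}),\tfrac12]$. Since $ne^{-K_n}\le e^{-Cn^{\epsilon/16}}$, this gives softmax concentration. Then
\[
\hat\z_m=\tfrac12(\bb_{c_1}+\bb_{c_2})+O(ne^{-K_n}),
\]
and smoothness of $\phi$ together with $\phi(\tfrac{a+b}2)=ab$ gives forward error $\exp(-Cn^{\epsilon/16})$. The failure probability is the union of the \Cref{lemma:trivial-noise} event with $p=\exp(-n^{\epsilon/2})$ and the Hoeffding event.
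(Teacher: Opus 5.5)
Your overall route is the paper's: zero gradient outside level $2$, Taylor expansion of $\phi$ and $\phi'$ at uniform attention, 3-tuple triviality isolating $-\frac{2c}{n^2}\one{p[j]=m}$, \Cref{lem:trivial-4-tuples} for the quartic terms, then scaling by $\eta_1$, rounding, and the softmax and forward-pass bounds. The gap is at the step you flag as the main obstacle, which you never close.

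Your bullet claiming that the $j$-independent $\Theta(n^{-2})$ piece from $-\onebb$ ``is killed by rounding'' is false. Multiplied by $\eta_1=K_nn^2/(2c)$ it becomes $\Theta(K_n)$, as you acknowledge a few lines later. Your fallback, absorbing a uniform shift via shift-invariance of the softmax, also fails, because the nearest-integer map $q$ commutes only with integer shifts. Suppose the pre-rounding values were $K_n\one{p[j]=m}-s+\Delta_{j,m}$ with $s$ non-integer and its fractional part within $\max_j|\Delta_{j,m}|$ of $\tfrac12$. Then the two children could round to logits differing by $1$, and their softmax weights would be near $e/(1+e)$ and $1/(1+e)$. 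That violates the softmax-concentration clause of \Cref{def:well-trained} and breaks $\phi(\tfrac{a+b}{2})=ab$ in the forward pass. So the lemma depends on the cancellation actually holding, and you leave that as an expectation.

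The missing computation is short, and it is what the paper does. By the $r=2$ case of \Cref{lemma:trivial-noise},
\[
\frac1B\langle\hat\z_m,\bb_j\rangle=\frac1n\Bigl(1+\sum_{\alpha\neq j}\frac{\langle\bb_\alpha,\bb_j\rangle}{B}\Bigr)=\frac1n+O(\kappa),\qquad \frac1B\langle\hat\z_m^2\rangle=\frac1{n^2}\Bigl(n+\sum_{\alpha\neq\beta}\frac{\langle\bb_\alpha,\bb_\beta\rangle}{B}\Bigr)=\frac1n+O(\kappa).
\]
Hence $-\frac{2c}{n}\bigl(\frac1B\langle\hat\z_m,\bb_j\rangle-\frac1B\langle\hat\z_m^2\rangle\bigr)=O(\kappa/n)=O(n^{-2-\epsilon/4})$ for each $j$ separately. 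The two diagonal $1/n$ terms cancel exactly because attention at initialization is uniform, so there is no shift to absorb.

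A smaller issue concerns your sup-norm bound $\|\hat\z_m\|_\infty=O(\sqrt{\log(nB)/n})$. It holds only with polynomially small failure probability, which does not match the claimed $1-\exp(-n^{\epsilon/2})$. At that confidence level it degrades to $O(n^{\epsilon/4-1/2})$, which is too weak for the cubic remainder of $\phi'$ unless $\epsilon$ is small. The paper instead bounds the remainders on the \Cref{lemma:trivial-noise} event alone, using moments: $\frac1B\sum_i|\hat z_{m,i}|^3\le(\frac1B\sum_i\hat z_{m,i}^2)^{1/2}(\frac1B\sum_i\hat z_{m,i}^4)^{1/2}=O(n^{-1-\epsilon/8})$.
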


\begin{proof}
At stage 1, the curriculum's input has $\x^{(0)}_{\beta, [1]} = \bb_\beta$ 
for all $\beta \in [T]$ (full CoT). Recall that
\begin{align*}
    \Lc^{(1)}(\bte) 
    & = \frac{1}{2B}\sum_{m = n + 1}^{T} 
    \| \Psi_1^\top \mOutput^{(1)} \left(g_m^{(1)} \phi\!\left({\sum_{j = 1}^{n_{h[m]-1}}} \sigma_j(\w^{(1)}_m) \x_j^{(0)}\right) + (1 - g_m^{(1)}) \x_m^{(0)} \right) - \bb_{m} \|^2.
\end{align*}

We split into two cases. \textbf{Case 1:} $m > n_2$. Layer 1 is inactive 
($g_m^{(1)} = 0$); $w^{(1)}_{j, m}$ is structurally decoupled from the loss, 
giving zero gradient. \textbf{Case 2:} $m \in [n + 1, n_2]$. Layer 1 is 
active ($g_m^{(1)} = 1$). Thus, we analyze the gradient signal in detail and 
treat Case 1 first.

\paragraph{Case 1: $m > n_2$.}
Under per-block gating, $g_m^{(1)} = 0$. The gated connection update at 
position $m$ becomes
\[
\x^{(1)}_{m, [2]} = \x^{(0)}_{m, [2]} + (1 - g_m^{(1)}) \x^{(0)}_{m, [1]} 
+ g_m^{(1)} \phi(\hat \z_m^{(1)}) 
= \x^{(0)}_{m, [2]} + \x^{(0)}_{m, [1]},
\]
which does not depend on $\hat \z_m^{(1)}$ and hence not on $\w^{(1)}_m$. Therefore $w^{(1)}_{j, m}$ is absent from $\Lc^{(1)}$, giving 
$\partial \Lc^{(1)} / \partial w_{j, m}^{(1)} = 0$ identically. After 
quantization, $w^{(1)}_{j, m}$ remains zero.

\paragraph{Case 2: $m \in [n + 1, n_2]$.}
For $m \in [n + 1, n_2]$, the loss term reduces to
\[
\frac{1}{2B} \| \Psi_1^\top \mOutput^{(1)} \phi(\hat \z_m^{(1)}) - \bb_m \|^2,
\]
where {$\hat \z_m^{(1)} = \sum_{j = 1}^{n} \sigma_j(\w^{(1)}_m) \x_j^{(0)}$}. 
Standard softmax derivative computations give
\[
\frac{\partial \hat \z_m^{(1)}}{\partial w_{j, m}^{(1)}} 
= \sigma_j(\w_m^{(1)}) (\x_j^{(0)} - \hat \z_m^{(1)}).
\]
The gradient of $\Lc^{(1)}$ w.r.t. $w_{j, m}^{(1)}$ is
\begin{align*}
\frac{\partial \Lc^{(1)}}{\partial w_{j, m}^{(1)}} 
= \frac{\sigma_j(\w_m^{(1)})}{B} 
\langle \Pb_1^\top (\Pb_1 \phi(\hat \z_m^{(1)}) - \bb_m), \phi'(\hat \z_m^{(1)}), 
\x_j^{(0)} - \hat \z_m^{(1)} \rangle.
\end{align*}
At initialization $\W^{(1)} = 0$, {$\sigma_j(\w_m^{(1)}) = 1/n$ 
for $j \in [n]$ and $0$ otherwise.} Expanding $\phi$ and $\phi'$ around 0 via 
$\phi(t) = -1 + ct^2 + O(t^4)$, $\phi'(t) = 2ct + O(t^3)$,
\begin{align}
\frac{\partial \Lc^{(1)}}{\partial w_{j, m}^{(1)}} 
= & {\frac{1}{nB}} \Bigg( -\langle \Pb_1^\top \bb_m, 2c \hat \z_m, 
\x_j - \hat \z_m \rangle \label{eq:cot-grad-signal} \\
& + \langle \Pb_1^\top \Pb_1 (-\onebb_d + c \hat \z_m^2), 2c \hat \z_m, 
\x_j - \hat \z_m \rangle \label{eq:cot-grad-noisy-1} \\
& + \langle O(\Pb_1^\top \Pb_1 |\hat \z_m|^4), 2c \hat \z_m, 
\x_j - \hat \z_m \rangle \label{eq:cot-grad-noisy-2} \\
& + \langle \Pb_1^\top (\Pb_1 \phi(\hat \z_m) - \bb_m), O(|\hat \z_m|^3), 
\x_j - \hat \z_m \rangle \Bigg). \label{eq:cot-grad-noisy-3}
\end{align}
\textbf{Signal term \eqref{eq:cot-grad-signal}.} Substituting 
{$\hat \z_m = \frac{1}{n} \sum_{\alpha \in [n]} \x_\alpha$},
\begin{align*}
& \frac{1}{B} \langle \Pb_1^\top \bb_m, \hat \z_m, \x_j - \hat \z_m \rangle 
\\ & = {\frac{1}{nB}} \left( {\sum_{\alpha \in [n]}} 
\langle \bb_m, \blkx{\alpha}{1}, \blkx{j}{1} \rangle 
- {\frac{1}{n} \sum_{\alpha, \beta \in [n]}} 
\langle \bb_m, \blkx{\alpha}{1}, \blkx{\beta}{1} \rangle \right).
\end{align*}
Since $\Pb_1^\top \bb_m$ is supported on block 1 only and 
$\blkx{\alpha}{1} = \bb_\alpha$ at stage 1's input,
\[
\langle \Pb_1^\top \bb_m, \x_\alpha, \x_\beta \rangle = 
\langle \bb_m, \bb_\alpha, \bb_\beta \rangle.
\]
{The customized mask restricts $\alpha, \beta$ to $[n]$, so 
$h[\alpha] = h[\beta] = 1$. By the parity tree's structure, $\bb_m$ at level 
$h[m] = 2$ covers two disjoint input bits, namely $\bb_{c_1[m]}$ and 
$\bb_{c_2[m]}$. For $\langle \bb_m, \bb_\alpha, \bb_\beta \rangle$ to be trivial, 
$\bb_\alpha \odot \bb_\beta = \bb_m$, which holds iff 
$\{\alpha, \beta\} = \{c_1[m], c_2[m]\}$.}
For non-trivial tuples, \Cref{lemma:trivial-noise} gives 
$|\langle \bb_m, \bb_\alpha, \bb_\beta \rangle| \leq B \kappa$ where 
$\kappa = O(n^{-1 - \epsilon/4})$. Combining,
\[
{\frac{1}{B} \sum_{\alpha, \beta \in [n]}} \langle \bb_m, \bb_\alpha, \bb_\beta \rangle 
= 2 + O(n^2 \kappa).
\]
Similarly, $\langle \bb_m, \bb_\alpha, \bb_j \rangle$ is non-trivial only when 
$p[j] = m$ and $\alpha$ is the other child of $m$. As a result,
\[
{\frac{1}{B} \sum_{\alpha \in [n]}} \langle \bb_m, \bb_\alpha, \bb_j \rangle = 
\begin{cases}
1 + O(n \kappa) & p[j] = m, \\
O(n \kappa) & \text{otherwise}.
\end{cases}
\]
Combining,
\[
-{\frac{1}{nB}} \langle \Pb_1^\top \bb_m, 2c \hat \z_m, \x_j - \hat \z_m \rangle 
= -{\frac{2c}{n^2}} \one{p[j] = m} + O(n^{-2 - \epsilon/4}).
\]
Next for Eq.~\eqref{eq:cot-grad-noisy-1}, we write
\begin{align*}
& \frac{1}{B} \langle \Pb_1^\top \Pb_1(-\onebb_d + c \hat \z_m^2), 
2c \hat \z_m, \x_j - \hat \z_m \rangle   
\\ & = - \frac{2c}{B} \langle \hat \z_{m, [1]}, \x_{j, [1]} \rangle 
+ \frac{2c}{B} \langle \hat \z_{m, [1]}^2 \rangle 
+ \frac{2c^2}{B} \langle \hat \z_{m, [1]}^3, \x_{j, [1]} \rangle 
- \frac{2c^2}{B} \langle \hat \z_{m, [1]}^4 \rangle.
\end{align*}

For the second-order terms, we have:
\begin{align*}
\frac{1}{B}\langle \hat \z_{m, [1]}, \x_{j, [1]} \rangle 
&= {\frac{1}{{n}B}} \left( \langle \bb_j, \bb_j \rangle + 
\sum_{{\alpha \neq j,\, \alpha \in [n]}} \langle \bb_\alpha, \bb_j \rangle \right) 
= {\frac{1}{{n}}} + O(\kappa),
\\ \frac{1}{B}\langle \hat \z_{m, [1]}^2 \rangle 
&= {\frac{1}{{n^2} B}} \left( \sum_{{\alpha \in [n]}} \langle \bb_\alpha, \bb_\alpha \rangle 
+ \sum_{{\alpha \neq \beta,\, \alpha,\beta \in [n]}} \langle \bb_\alpha, \bb_\beta \rangle \right) 
= {\frac{1}{{n}}} + O(\kappa).
\end{align*}

For the fourth-order interaction terms, we discuss when 
$(\alpha, \beta, \gamma, \delta) \notin I_{4, m}$. By \Cref{lem:trivial-4-tuples},  the total count of trivial 4-tuples is $O(n^2)$. Therefore,
\begin{align*}
\frac{1}{B} \langle \hat \z_{m, [1]}^4 \rangle 
&= {\frac{1}{{n^4} B}} \sum_{{\alpha, \beta, \gamma, \delta \in [n]}} 
\langle \bb_\alpha, \bb_\beta, \bb_\gamma, \bb_\delta \rangle
\\ &= {\frac{1}{{n^4} B}} \left( 
\sum_{\alpha, \beta, \gamma, \delta \in I_{4, m}} O(B \kappa) 
+ \sum_{\alpha, \beta, \gamma, \delta \notin I_{4, m}} B \right)
\\ &\leq {\frac{1}{{n^4}}} \left( O(n^2) + {n^4} \kappa \right) 
\\ &= O(n^{-2} + \kappa).
\end{align*}

Now for $\langle \hat \z_{m, [1]}^3, \x_{j, [1]} \rangle$, assuming index 
$j$ is contained in $(\alpha, \beta, \gamma, \delta)$. Then case 1 in \Cref{lem:trivial-4-tuples} has 
$O(1)$ trivial tuples, cases 3 and 4 are reduced to $O(n)$ (one free index). Thus
\[
\frac{1}{B} \langle \hat \z_{m, [1]}^3, \x_{j, [1]} \rangle 
= {\frac{1}{{n^3} B}} \sum_{{\alpha, \beta, \gamma \in [n]}} 
\langle \bb_\alpha, \bb_\beta, \bb_\gamma, \bb_j \rangle 
\leq O(n^{-2} + \kappa).
\]
Combining,
\[
{\frac{1}{{n}B}} \langle \Pb_1^\top \Pb_1(-\onebb_d + c \hat \z_m^2), 
2c \hat \z_m, \x_j - \hat \z_m \rangle 
= {\frac{O(\kappa)}{{n}}} = O(n^{-2 - \epsilon/4}).
\]
For Eq.~\eqref{eq:cot-grad-noisy-2}, note that
\[
\frac{1}{B} \langle \Pb_1^\top \Pb_1 |\hat \z_m|^4 \rangle = 
\frac{1}{B} \langle \hat \z_{m, [1]}^4 \rangle = O(n^{-2} + \kappa),
\]
and since $2c \hat \z_m, \x_j - \hat \z_m$ are contained in $[-1, 1]$ and 
$[-2, 2]$ respectively,
\[
{\frac{1}{{n} B}} \langle O(\Pb_1^\top \Pb_1 |\hat \z_m|^4), 
2c \hat \z_m, \x_j - \hat \z_m \rangle 
= {\frac{4c}{{n}}} O(n^{-2} + \kappa) = O(n^{-2 - \epsilon/4}).
\]
Finally for Eq.~\eqref{eq:cot-grad-noisy-3}, by Cauchy-Schwarz,
\begin{align*}
\frac{1}{B} \langle \Pb_1^\top \Pb_1 |\hat \z_m|^3 \rangle 
= \frac{1}{B} \sum_{i = 1}^B |\hat \z_{m, [1], i}^3|
&\leq \frac{1}{B} \left( \sum_{i = 1}^B \hat \z_{m, [1], i}^2 \right)^{1/2} 
\left( \sum_{i = 1}^B \hat \z_{m, [1], i}^4 \right)^{1/2}
\\ &= O(n^{-1 - \epsilon/8}).
\end{align*}
By the definition of $\Pb_1$,
\begin{align*}
& {\frac{1}{{n} B}} \langle \Pb_1^\top (\Pb_1 \phi(\hat \z_m) - \bb_m), 
O(|\hat \z_m|^3), \x_j - \hat \z_m \rangle 
\\ &= {\frac{1}{{n} B}} \langle \phi(\hat \z_m) - \Pb_1^\top \bb_m, 
O(\Pb_1^\top \Pb_1 |\hat \z_m|^3), \x_j - \hat \z_m \rangle
\\ &= {\frac{4}{{n} B}} \langle O(\Pb_1^\top \Pb_1 |\hat \z_m|^3) \rangle
\\ &= O(n^{-2 - \epsilon/8}).
\end{align*}
Combining all terms from \eqref{eq:cot-grad-signal} to 
\eqref{eq:cot-grad-noisy-3}, {for any $j \in [n]$,} we get
\[
\frac{\partial \Lc^{(1)}}{\partial w_{j, m}^{(1)}} = 
-{\frac{2c}{{n^2}}} \one{p[j] = m} + O(n^{-2 - \epsilon/8}).
\]
Note that this applies to the approximate gradient 
$\tilde \nabla_{w^{(1)}_{j, m}} \Lc^{(1)}$ too since each component of the 
noise is bounded by $O(n^{-2 - \epsilon/8})$.
\paragraph{Concentration of attention scores at $m \in [n + 1, n_2]$.} {The gradient above has a signal of order $n^{-2}$ and a noise remainder 
of order $n^{-2-\epsilon/8}$. For the integer quantization $q(\cdot)$ to map 
both children's updates to the same integer while rounding non-child noise 
to zero, the leading term must land near an integer with margin exceeding 
the post-scaling noise. Under the customized mask in \Cref{fig:mask-loss}(a), 
every query $m \in [n+1, n_2]$ shares the same signal prefactor $n^{-2}$, 
so we use a stage-wise learning rate
\[
\eta_1 := \frac{K_n \, n^2}{2c}, \qquad K_n := \lceil n^{\epsilon/16} \rceil,
\]
where $c > 0$ is the link function 
constant from \Cref{def:link}. This design cancels the prefactor exactly and scales the signal to $K_n$.} 
The update for $m \in [n+1, n_2]$ becomes
\begin{align} \label{eq:updated-weights-base}
w^{(1)}_{j, m}(1) 
= q\!\left( K_n \one{p[j] = m} + \Delta_{j, m} \right),
\end{align}
where $\Delta_{j, m} := \eta_1 R_{j, m}$ and the noise term $R_{j, m}$ 
satisfies $|R_{j, m}| \leq C_\nabla n^{-2 - \epsilon/8}$ uniformly. Since 
$K_n \leq 2 n^{\epsilon/16}$ for sufficiently large $n$, 
\[
|\Delta_{j, m}| \leq \frac{K_n n^2}{2c} \cdot C_\nabla n^{-2 - \epsilon/8} 
\leq C'_\nabla n^{-\epsilon/16} < \tfrac{1}{2}.
\]

\emph{Non-children indices.} For $j \notin \{c_1[m], c_2[m]\}$, the indicator 
$\one{p[j] = m}$ vanishes, so $w^{(1)}_{j, m}(1) = q(\Delta_{j, m}) = 0$.

\emph{Children indices.} For $j \in \{c_1[m], c_2[m]\}$, since $K_n \in \Z$ 
and $|\Delta_{j, m}| < 1/2$,
\[
w^{(1)}_{j, m}(1) = q(K_n + \Delta_{j, m}) = K_n.
\]
Thus $w^{(1)}_{c_1[m], m}(1) = w^{(1)}_{c_2[m], m}(1) = K_n$, while all 
non-child logits are zero.

\emph{Softmax concentration.} The total softmax mass on non-child indices is 
bounded by
\[
\sum_{j \notin \{c_1[m], c_2[m]\}} \sigma_j(\w^{(1)}_m) 
\leq \frac{n - 2}{2 e^{K_n} + (n - 2)} 
\leq n e^{-K_n} 
\leq \exp(-C n^{\epsilon/16})
\]
for some constant $C > 0$ and sufficiently large $n$. Since the two child 
logits are exactly equal,
\[
\frac{1 - \exp(-C n^{\epsilon/16})}{2} 
\;\leq\; \sigma_{c_1[m]}(\w^{(1)}_m) = \sigma_{c_2[m]}(\w^{(1)}_m) 
\;\leq\; \frac{1}{2}.
\]

\paragraph{Forward pass evaluation at $m \in [n + 1, n_2]$.}
For $m \in [n + 1, n_2]$, the children $c_1[m], c_2[m]$ are at level 1 (input bits), 
so $\x^{(0)}_{c_i[m], [1]} = \bb_{c_i[m]}$ exactly. Hence
\begin{align*}
\left\| \hat \z_{m, [1]}^{(1)} - \frac{\bb_{c_1[m]} + \bb_{c_2[m]}}{2} \right\|_\infty 
&\leq \sum_{p[j] \neq m} \sigma_j(\w^{(1)}_m) 
+ \left| \sigma_{c_1[m]}(\w^{(1)}_m) - \tfrac{1}{2} \right| 
+ \left| \sigma_{c_2[m]}(\w^{(1)}_m) - \tfrac{1}{2} \right| 
\\ &\leq {2 \exp(-Cn^{\epsilon/16})}.
\end{align*}
By Taylor expansion of \(\phi\) around
\[
s_m := \frac{b_{c_1[m]}+b_{c_2[m]}}{2}\in\{-1,0,1\},
\]
using
\[
\phi(s_m)=b_{c_1[m]}b_{c_2[m]}=b_m
\qquad\text{and}\qquad
\phi'(s_m)=0,
\]
we obtain
\begin{align}
\| \x^{(1)}_{m, [2]} - \bb_m \|_\infty 
&= \left\| \phi(\hat \z_{m, [1]}^{(1)}) - \phi\!\left( \frac{\bb_{c_1[m]} + \bb_{c_2[m]}}{2} \right) \right\|_\infty 
\notag \\
&\leq C_2 {\bigl( 2 \exp(-Cn^{\epsilon/16}) \bigr)^2} 
\leq \exp(-Cn^{\epsilon/16}). \label{eq:base-forward}
\end{align}
\end{proof}

At each stage $t \geq 1$, we replace the first $k (1 -  2^{- (t - 1)} )$ thinking tokens with padding tokens. This ensures that at $L := \log_2 k $ stages, we are relying only on the input bits to predict. The input sequence looks like
\[
    \bb^{(t)} = (\bb_1, \dots, \bb_n , \underbrace{\bzero_B, \dots , \bzero_B}_{n_t - n}, \bb_{n_t + 1}, \bb_{n_t + 2} , \dots, \bb_{T})
\]
The loss function is only computed on position $n_t + 1 \leq m \leq T$, i.e., 
\[
    \Lc^{(t)}(\bte) = \frac{1}{2B}\sum_{m = n_{t} + 1}^{T}  \| \hat f(\Db)_{m} - \bb_{m} \| ^2
\]
At training time, since the first $k (1 -  2^{- (t - 1)} )$ thinking tokens are masked, the model needs to rely on the hidden thinking traces in $t$-th layer to do the prediction for $n_t + 1 \leq m \leq n_{t + 1}$. Similar to the first stage, we can learn the $t$-th layer by one step of gradient descent as shown in \Cref{lem:inductive-step}. Before proceeding, we first state a lemma that characterizes how the hidden states progress over layers.

\begin{lemma}[Characterization of Hidden States at Stage $s$]\label{lemma:hidden-state}
Fix a curriculum stage $s \in [L]$. Suppose that for all $\tau \in \{1, \ldots, s-1\}$, 
the per-layer recursion satisfies
\begin{equation}\label{eq:icot-induction-res}
\x^{(\tau)}_{m, [\tau + 1]} = 
\begin{cases}
\bb_m + \bxi, & n_\tau + 1 \leq m \leq n_{\tau + 1}, \\
\x^{(\tau - 1)}_{m, [\tau]}, & \text{otherwise},
\end{cases}
\end{equation}
where $\|\bxi\|_\infty \leq \delta := \exp(-Cn^{\epsilon/16})$. Then for any 
layer $\ell \in \{0, 1, \ldots, s-1\}$ and any $\beta \in [T]$,
\begin{equation}\label{eq:icot-induction-state}
\x^{(\ell)}_{\beta, [\ell + 1]} = 
\begin{cases}
\bb_\beta + \bxi, & n + 1 \leq \beta \leq n_{\ell+1}, \\
\bb_\beta, & \beta \leq n \text{ or } \beta > n_s, \\
\bm{0}_B, & n_{\ell+1} < \beta \leq n_s,
\end{cases}
\end{equation}
\end{lemma}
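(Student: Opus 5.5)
The plan is to induct on the layer index $\ell \in \{0, 1, \ldots, s-1\}$, using \Cref{lemma:hc-propagation} to reduce every case to either the embedding content $\x^{(0)}_{\beta,[1]}$ or the written value $\phi(\hat\z^{(\nu-1)}_{\beta,[\nu-1]})$. First, at stage $s$ the curriculum embedding gives $\x^{(0)}_{\beta,[1]} = \bb_\beta$ for $\beta \le n$ or $\beta > n_s$, and $\x^{(0)}_{\beta,[1]} = \bzero_B$ for $n < \beta \le n_s$. This is the base case $\ell = 0$: the range $n+1 \le \beta \le n_1 = n$ is empty, and the padded range is exactly $n_1 < \beta \le n_s$.

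For the inductive step, assume \eqref{eq:icot-induction-state} at layer $\ell - 1$ and fix $\beta$ with $\nu = h[\beta]$. I split according to how $\beta$ relates to the layer. If $\nu = 1$, or $\nu \ge 2$ with $\ell < \nu - 1$ (i.e.\ $\beta > n_{\ell+1}$), then \Cref{lemma:hc-propagation} gives $\x^{(\ell)}_{\beta,[\ell+1]} = \x^{(0)}_{\beta,[1]}$. This is $\bb_\beta$ when $\beta \le n$ or $\beta > n_s$, and $\bzero_B$ when $n_{\ell+1} < \beta \le n_s$, matching the second and third lines. If $\nu \ge 2$ and $\ell \ge \nu - 1$, so that $n < \beta \le n_{\ell+1}$, there are two subcases. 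When $\ell > \nu - 1$, the gated recursion \eqref{eq:hc-update} with $g^{(\ell)}_\beta = 0$ copies $\x^{(\ell-1)}_{\beta,[\ell]}$; this is the ``otherwise'' branch of \eqref{eq:icot-induction-res}, and the inductive hypothesis already gives $\bb_\beta + \bxi$. When $\ell = \nu - 1$, i.e.\ $n_\ell < \beta \le n_{\ell+1}$, the hypothesis \eqref{eq:icot-induction-res} with $\tau = \ell \le s-1$ directly gives $\bb_\beta + \bxi$ with $\|\bxi\|_\infty \le \delta$.

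One detail needs care. Positions $\beta > n_s$ with $\nu - 1 \le \ell$ cannot occur, because $\ell \le s-1$ forces $\beta \le n_{\ell+1} \le n_s$. Hence the three cases of \eqref{eq:icot-induction-state} are disjoint and exhaustive, and I will verify this index bookkeeping explicitly from the definition of $n_\ell$ and the gating \eqref{eq:gating-weights}.

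I expect the main subtlety to be consistency rather than estimation. The recursion \eqref{eq:icot-induction-res} is stated only for the block written by layer $\tau$, while \Cref{lemma:hc-propagation} tracks the shifted blocks. I therefore need to make sure the ``otherwise'' branch agrees with the block-shift copying, so that errors are carried without amplification and the same $\delta$ bound persists for all later $\ell$. I will also check that padded positions remain exactly $\bzero_B$ rather than picking up writes, which holds because $g^{(\ell)}_\beta = 0$ there for every $\ell < \nu - 1$.
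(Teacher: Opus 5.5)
Your proof is correct and follows essentially the same route as the paper. It inducts on $\ell$ with the stage-$s$ embedding as base case, uses the hypothesis \eqref{eq:icot-induction-res} at $\tau = \ell$ for the newly written level $n_\ell < \beta \le n_{\ell+1}$, and uses the copy (``otherwise'') branch plus the inductive hypothesis everywhere else; invoking \Cref{lemma:hc-propagation} to jump straight to $\x^{(0)}_{\beta,[1]}$ at not-yet-written positions is just a shortcut for the copying argument the paper runs through the inductive hypothesis.
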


\begin{proof}
We prove Eq.~\eqref{eq:icot-induction-state} by induction on $\ell$.

\paragraph{Base case ($\ell = 0$).}
At $\ell = 0$, the residual stream is the curriculum's stage-$s$ input:
\[
\x^{(0)}_{\beta, [1]} = 
\begin{cases}
\bb_\beta, & \beta \leq n \text{ or } \beta > n_s, \\
\bm{0}_B, & n + 1 \leq \beta \leq n_s .
\end{cases}
\]
Since $n_{\ell+1} = n_1 = n$, the ``computed'' range $n+1 \leq \beta \leq n_{\ell+1}$ 
is empty, and the three cases above coincide with the three cases of 
Eq.~\eqref{eq:icot-induction-state} at $\ell = 0$.

\paragraph{Inductive step ($\ell \Rightarrow \ell + 1$, for $\ell + 1 \leq s - 1$).}
Assume Eq.~\eqref{eq:icot-induction-state} holds at layer $\ell$. We verify 
it at layer $\ell + 1$ by partitioning $[T]$ into two regions and applying 
Eq.~\eqref{eq:icot-induction-res} at $\tau = \ell + 1$:

\textbf{Region 1: Newly computed positions ($n_{\ell+1} + 1 \leq \beta \leq n_{\ell+2}$).}
By Eq.~\eqref{eq:icot-induction-res},
\[
\x^{(\ell+1)}_{\beta, [\ell+2]} = \bb_\beta + \bxi, \quad \|\bxi\|_\infty \leq \delta.
\]
Since $n + 1 \leq \beta \leq n_{\ell+2}$, this matches the computed-range 
case of Eq.~\eqref{eq:icot-induction-state} at $\ell + 1$.

\textbf{Region 2: Inherited positions ($\beta \leq n_{\ell+1}$ or $\beta > n_{\ell+2}$).}
For these positions, Eq.~\eqref{eq:icot-induction-res} gives 
$\x^{(\ell+1)}_{\beta, [\ell+2]} = \x^{(\ell)}_{\beta, [\ell+1]}$, so the 
value is inherited from layer $\ell$ and we apply the inductive hypothesis:
\begin{itemize}
\item If $n + 1 \leq \beta \leq n_{\ell+1}$ (previously computed), then 
$\x^{(\ell)}_{\beta, [\ell+1]} = \bb_\beta + \bxi$ with 
$\|\bxi\|_\infty \leq \delta$. Since $\beta \in [n+1, n_{\ell+2}]$, 
this matches the computed-range case at $\ell + 1$.
\item If $\beta \leq n$ or $\beta > n_s$ (shown), then 
$\x^{(\ell)}_{\beta, [\ell+1]} = \bb_\beta$, matching the shown-range case 
at $\ell + 1$.
\item If $n_{\ell+2} < \beta \leq n_s$ (padded), then $\beta$ also satisfies 
$n_{\ell+1} < \beta \leq n_s$, so by the inductive hypothesis 
$\x^{(\ell)}_{\beta, [\ell+1]} = \bm{0}_B$, matching the padded-range case 
at $\ell + 1$.
\end{itemize}

In both regions, Eq.~\eqref{eq:icot-induction-state} holds at layer 
$\ell + 1$. This completes the induction.
\end{proof}

\begin{lemma}[Inductive Step]
\label{lem:inductive-step}
Fix any $t \in \{1, \dots, L-1\}$. 
Suppose layers $1, \dots, t$ are well-trained 
(\Cref{def:well-trained}) and $\W^{(\ell)}(t) = 0$ for all 
$\ell \geq t+1$. After one gradient step on the stage-$(t+1)$ loss 
$\mathcal{L}^{(t+1)}$ with $B$ training samples and stage-wise learning rate 
$\eta_{t+1} = K_n n_{t+1}^2 / (2c)$, where $B = \Omega(n^{2+\epsilon})$, $K_n := \lceil n^{\epsilon/16} \rceil$ and \(c>0\) is the link function constant in \Cref{def:link}, with probability $1 - \exp(-n^{\epsilon/2})$ 
over the fresh stage-$(t+1)$ batch, layer $t+1$ is also well-trained 
(\Cref{def:well-trained}).
\end{lemma}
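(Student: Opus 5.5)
The plan mirrors the proof of \Cref{lemma:base}, with the raw inputs replaced by the hidden states of the already-trained layers.

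\textbf{Step 1: freeze the earlier layers.} By \Cref{lemma:grad-upper-bounds}, the stage-$(t+1)$ gradients with respect to $\W^{(1)},\dots,\W^{(t)}$ are exponentially small. This holds because every query position of a trained layer has $\phi'(\hat\z)=O(\exp(-Cn^{\epsilon/16}))$, since $\hat\z$ lies near $\{-1,0,1\}$. After quantization these weights are therefore unchanged, and layers $1,\dots,t$ stay well-trained.

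\textbf{Step 2: describe the inputs to layer $t+1$.} The forward-error clause of \Cref{def:well-trained}, together with \Cref{lemma:hc-propagation}, gives exactly hypothesis \eqref{eq:icot-induction-res} for $\tau\le t$. Applying \Cref{lemma:hidden-state} with $s=t+1$ and $\ell=t$ then shows that for every key $j\le n_{t+1}$ (the only keys layer $t+1$ may attend to under the mask),
\[
\x^{(t)}_{j,[t+1]}=\bb_j+\bxi_j,\qquad \|\bxi_j\|_\infty\le\delta .
\]
Positions $m>n_{t+2}$ have $g^{(t+1)}_m=0$, so they give zero gradient, exactly as in Case 1 of \Cref{lemma:base}.

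\textbf{Step 3: compute the gradient.} Fix a query $m\in[n_{t+1}+1,n_{t+2}]$. At $\W^{(t+1)}=0$ attention is uniform over $j\le n_{t+1}$, so
\[
\hat\z_m=\frac{1}{n_{t+1}}\sum_{\alpha\le n_{t+1}}\x^{(t)}_{\alpha,[t+1]} .
\]
Expanding $\phi,\phi'$ around $0$ gives the same four terms \eqref{eq:cot-grad-signal}--\eqref{eq:cot-grad-noisy-3}, now with prefactor $1/n_{t+1}$ and with $\bb_\alpha$ replaced by $\bb_\alpha+\bxi_\alpha$.

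First drop the $\bxi$'s. Each multilinear inner product divided by $B$ changes by at most $O(\delta)$ per perturbed factor, which is negligible against every polynomial scale. On the exact part, note that the keys now span levels $1,\dots,t+1$. Hence, unlike stage 1, $\langle\bb_m,\bb_\alpha,\bb_\beta\rangle$ can also be trivial for pairs other than the two children, e.g.\ $\alpha$ a child of $m$ and $\beta$ formed from the sibling's descendants. I would recheck this, but the mask only admits keys up to level $h[m]-1$. Since $\bb_m$ covers $2^{t+1}$ input bits while a level-$<t+1$ node covers fewer, the only trivial pairs are $\{c_1[m],c_2[m]\}$, plus a small $O(1)$ number of pairs involving equal-level nodes, which I would bound carefully using the tree structure.

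The second- and fourth-order terms use \Cref{lemma:trivial-noise} and \Cref{lem:trivial-4-tuples}, both of which already allow arbitrary levels in $[m-1]$. This gives
\[
\partial\Lc^{(t+1)}/\partial w^{(t+1)}_{j,m}=-\frac{2c}{n_{t+1}^2}\one{p[j]=m}+O(n^{-2-\epsilon/8}),
\]
using $n_{t+1}=\Theta(n)$. The approximate oracle adds only $O(n^{-2-\epsilon/8})$.

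\textbf{Step 4: quantize and run the forward pass.} With $\eta_{t+1}=K_nn_{t+1}^2/(2c)$, the scaled noise is $O(n^{-\epsilon/16})<1/2$. Quantization therefore produces logits $K_n$ on the two children and $0$ elsewhere, and the softmax-concentration bound follows exactly as in the base case. For the forward error, the children values carry error at most $\delta$, so
\[
\Bigl\|\hat\z_m-\tfrac{\bb_{c_1}+\bb_{c_2}}{2}\Bigr\|_\infty\le 2\exp(-Cn^{\epsilon/16})+\delta .
\]
Because $\phi'$ vanishes at $\{-1,0,1\}$, a second-order Taylor bound gives
\[
\epsilon_{t+1}\le C_2\bigl(2e^{-Cn^{\epsilon/16}}+\delta\bigr)^2\le \exp(-Cn^{\epsilon/16}) .
\]
This quadratic contraction is what keeps the constant $C$ from degrading over $L$ stages. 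A union bound over the fresh batch gives probability $1-\exp(-n^{\epsilon/2})$.

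\textbf{Main obstacle.} I expect the main difficulty to be the trivial-tuple accounting in the signal term once keys come from mixed levels. One must confirm that no spurious trivial triple $\langle\bb_m,\bb_\alpha,\bb_j\rangle$ gives a non-child $j$ a signal of order $n^{-2}$. If such triples exist, with up to $O(1)$ partners each, the logit of that $j$ could round to a nonzero integer. My first attempt would be a level/bit-count argument like case 5 of \Cref{lem:trivial-4-tuples}. A product of $\bb_m$ with two nodes of level $\le t+1$ is trivial only if those two nodes together cover exactly the $2^{t}$-bit halves of $m$'s support, which forces them to be $c_1[m]$ and $c_2[m]$.
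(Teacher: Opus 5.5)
Your proposal is correct and follows essentially the same route as the paper: pass to the hidden states $\bb_j+\bxi_j$ via \Cref{lemma:hidden-state}, and redo the four-term expansion with prefactor $1/n_{t+1}$. You then use the bit-count argument (two nodes of level $\le t+1$ can cancel $\bb_m$ only if they are exactly the two $2^t$-bit halves of $m$'s support, so the hedged ``extra $O(1)$ equal-level pairs'' do not exist) to isolate the children, quantize with $\eta_{t+1}$, and close the forward error with the quadratic Taylor bound around $\{-1,0,1\}$. The only difference is organizational: you fold the layer-freezing argument of \Cref{lemma:grad-upper-bounds} into this lemma, whereas the paper invokes it in the proof of \Cref{thm:icot-cvg}.
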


\begin{proof}
By the assumption that layers $1, \dots, t$ are well-trained and 
\Cref{lemma:hidden-state} (applied with $s = t+1$, $\ell = t$), 
the hidden states satisfy 
$\x^{(t)}_{\tau, [t+1]} = \bb_\tau + \bxi$ with 
$\|\bxi\|_\infty \leq \delta = \exp(-C n^{\epsilon/16})$ for all $\tau \in [T]$. Recall that
\begin{align*}
& \Lc^{(t + 1)}(\bte) 
\\ & = \frac{1}{2B}\sum_{m = n_{t + 1} + 1}^{T} 
\| \Psi_{t + 1}^\top \mOutput^{(t + 1)} \left(g_m^{(t + 1)} \phi\!\left({\sum_{j = 1}^{n_{h[m]-1}}} \sigma_j(\w^{(t + 1)}_m) \x_j^{(t)}\right) + (1 - g_m^{(t + 1)}) \x_m^{(t)} \right) - \bb_{m} \|^2.
\end{align*}

We split into two cases. 

\textbf{Case 1: $m \in [n_{t+1} + 1, n_{t+2}]$.} 
Here $g_m^{(t+1)} = 1$ (active) and the gradient encodes a non-trivial signal 
from the contraction.

\textbf{Case 2: $m > n_{t+2}$.} Here $g_m^{(t+1)} = 0$ 
by per-block gating, layer $t + 1$ does not write at position $m$, and the gradient is identically zero. We treat case 1 in detail and then briefly address case 2 at the end of the proof.

\medskip
\textbf{Case 1: $m \in [n_{t+1} + 1, n_{t+2}]$.} 
Under the causal mask, {$\sigma_j(\w_m) = 1/n_{t+1}$ for $j \in [n_{t+1}]$ and 
$0$ otherwise} at initialization $\W^{(t+1)} = 0$.
\begin{align}
\frac{\partial \Lc^{(t + 1)}}{\partial w_{j, m}^{(t+1)}}
= & \frac{\sigma_j(\w_m^{(t + 1)})}{B} 
\langle 
\Pb_{t + 1}^\top(\Pb_{t + 1}\phi(\hat \z_m^{(t + 1)}) - \bb_{m}),
\phi'(\hat \z_m^{(t + 1)}),
\x_j^{(t)} - \hat \z_m^{(t + 1)}
\rangle \notag 
\\ = & -{\frac{1}{n_{t+1} B}} \langle 
\Pb_{t + 1}^\top \bb_{m},
2c \hat \z_m^{(t + 1)},
\x_j^{(t)} - \hat \z_m^{(t + 1)}
\rangle 
\label{eq:icot-grad-signal}
\\ & + {\frac{1}{n_{t+1} B}} \langle 
\Pb_{t + 1}^\top \Pb_{t + 1}(-\onebb_d + c(\hat \z_m^{(t + 1)})^2),
2c \hat \z_m^{(t + 1)},
\x_j^{(t)} - \hat \z_m^{(t + 1)}
\rangle 
\label{eq:icot-grad-noisy-1}
\\ & + {\frac{1}{n_{t+1} B}} \langle 
O(\Pb_{t + 1}^\top \Pb_{t + 1} |\hat \z_m^{(t + 1)}|^4),
2c \hat \z_m^{(t + 1)},
\x_j^{(t)} - \hat \z_m^{(t + 1)}
\rangle 
\label{eq:icot-grad-noisy-2}
\\ & + {\frac{1}{n_{t+1} B}} \langle 
\Pb_{t + 1}^\top(\Pb_{t + 1}\phi(\hat \z_m^{(t + 1)}) - \bb_{m}),
O(|\hat \z_m^{(t + 1)}|^3),
\x_j^{(t)} - \hat \z_m^{(t + 1)}
\rangle .
\label{eq:icot-grad-noisy-3}
\end{align}

For \Cref{eq:icot-grad-signal}, we substitute 
{$\hat \z_m^{(t + 1)} = \frac{1}{n_{t+1}} \sum_{\alpha \in [n_{t+1}]} \x^{(t)}_\alpha$}
to expand
\begin{align*}
& \frac{1}{B} \langle \Pb_{t + 1}^\top \bb_{m}, \hat \z_m^{(t + 1)}, \x_j^{(t)} - \hat \z_m^{(t + 1)} \rangle 
\\ &= {\frac{1}{n_{t+1} B}} \left( {\sum_{\alpha \in [n_{t+1}]}} \langle \bb_m,  \blkx{\alpha}{t + 1} ^{(t)} , \blkx{j}{t+1}^{(t)} \rangle - {\frac{1}{n_{t+1}} \sum_{\alpha, \beta \in [n_{t+1}]}} \langle \bb_m,  \blkx{\alpha}{t + 1} ^{(t)} , \blkx{\beta}{t + 1} ^{(t)} \rangle \right).
\end{align*}
{The customized mask restricts $\alpha, \beta$ to $[n_{t+1}]$, so $h[\alpha], h[\beta] \leq t+1 < h[m] = t+2$. By the parity tree's structure, distinct same-level nodes cover disjoint subsets of input bits, and $\bb_m$ covers $2^{t+1}$ disjoint input bits at level $t+2$. For $\langle \bb_m, \bb_\alpha, \bb_\beta \rangle$ to be trivial, the bit multiplicities must align, which forces $\bb_\alpha \odot \bb_\beta = \bb_m$, i.e., $\{\alpha, \beta\} = \{c_1[m], c_2[m]\}$ at level $t+1$.} Hence,
\begin{align*}
    \left| \langle \bb_{m},  \blkx{\alpha}{t + 1}^{(t)}, \blkx{\beta}{t + 1}^{(t)} \rangle \right| 
    &\leq |\langle \bb_{m}, \bb_\alpha, \bb_\beta \rangle| + 2\delta B + \delta^2 B 
    = B(1 + O(\delta)),
\end{align*}
and for non-trivial $(\alpha, \beta)$,
\begin{align*}
    \left| \langle \bb_{m},  \blkx{\alpha}{t + 1}^{(t)}, \blkx{\beta}{t + 1}^{(t)} \rangle \right| 
    \leq B\kappa + 2\delta B + \delta^2 B = O(B(\kappa + \delta)).
\end{align*}
{Using $n_{t+1} = \Theta(n)$ (specifically $n \leq n_{t+1} < 2n$),} putting these together,
\begin{align*}
    {\frac{1}{B} \sum_{\alpha, \beta \in [n_{t+1}]}} \langle \bb_{m},  \blkx{\alpha}{t + 1} ^{(t)} , \blkx{\beta}{t + 1} ^{(t)} \rangle 
    = 2 + O(n^2 \delta) + O(n^2 \kappa) = 2 + O(n^2 \kappa).
\end{align*}
Similarly, $\langle \bb_{m}, \blkx{\alpha}{t+1}^{(t)}, \blkx{j}{t+1}^{(t)} \rangle$ is non-trivial only when $p[j] = m$ and $\alpha$ is the other child. By the same expansion,
\[
{\frac{1}{B} \sum_{\alpha \in [n_{t+1}]}} \langle \bb_{m}, \blkx{\alpha}{t+1}^{(t)}, \blkx{j}{t+1}^{(t)} \rangle = \begin{cases}
    1 + O(n\kappa) & p[j] = m, \\
    O(n\kappa) & \text{o.w.}
\end{cases}
\]
Combining these with $\kappa = O(n^{-1-\epsilon/4})$,
\begin{align*}
-{\frac{1}{n_{t+1} B}} \langle \Pb_{t+1}^\top \bb_m, 2c\hat\z_m^{(t+1)}, \x_j^{(t)} - \hat\z_m^{(t+1)} \rangle 
&= -{\frac{2c}{n_{t+1}^2}} \one{p[j]=m} + O(n^{-2-\epsilon/4}).
\end{align*}
Next for \Cref{eq:icot-grad-noisy-1}, we expand
\begin{align*}
& \frac{1}{B} \langle \Pb_{t + 1}^\top \Pb_{t + 1}(-\onebb_d + c (\hat \z_m^{(t + 1)})^2) ,  2c \hat \z_m^{(t + 1)} , \x_j ^{(t)} - \hat \z_m^{(t + 1)} \rangle   
\\ =& - \frac{2c}{B} \langle \hat \z_{m, [t + 1]}^{(t + 1)}, \x_{j, [t + 1]}^{(t)} \rangle + \frac{2c}{B} \langle (\hat \z_{m, [t + 1]}^{(t + 1)})^2 \rangle + \frac{2c^2}{B} \langle (\hat \z_{m, [t + 1]}^{(t + 1)})^3, \x_{j, [t + 1]}^{(t)} \rangle - \frac{2c^2}{B} \langle (\hat \z_{m, [t + 1]}^{(t + 1)})^4 \rangle.
\end{align*}
For the second-order terms, we have:
\begin{align*}
\frac{1}{B}\langle \hat \z_{m, [t + 1]}^{(t + 1)}, \x_{j, [t + 1]}^{(t)} \rangle 
&= {\frac{1}{{n_{t + 1}}B}} \left( \langle \x_{j, [t + 1]}^{(t)}, \x_{j, [t + 1]}^{(t)} \rangle  + {\sum_{\alpha \neq j, \alpha \in {[n_{t+1}]}}} \langle \x_{\alpha, [t + 1]}^{(t)}, \x_{j, [t + 1]}^{(t)} \rangle \right)
\\ &= {\frac{1}{{n_{t + 1}}B}} \left( \langle \bb_j, \bb_j \rangle + \sum_{\alpha \neq j} \langle \bb_\alpha, \bb_j \rangle \right) + {\frac{O(n)}{{n_{t+1}}}} (\delta  + \delta^2)
\\ &= {\frac{1}{{n_{t+1}}}} (1 + {(n_{t+1} - 1)} \kappa) + {\frac{O(n)}{{n_{t+1}}}} (\delta + \delta^2)
\\ &= {\frac{1}{{n_{t+1}}}} + O(\kappa).
\end{align*}

Similarly,
\begin{align*}
\frac{1}{B}\langle (\hat \z_{m, [t + 1]}^{(t + 1)})^2 \rangle 
&= {\frac{1}{{n_{t + 1}^2} B}} \left( \sum_\alpha \langle \x_{\alpha, [t + 1]}^{(t)}, \x_{\alpha, [t + 1]}^{(t)} \rangle  + \sum_{\alpha \neq \beta} \langle \x_{\alpha, [t + 1]}^{(t)}, \x_{\beta, [t + 1]}^{(t)} \rangle \right) 
\\ &= {\frac{1}{{n_{t+1}^2} B}} \left( \sum_\alpha \langle \bb_\alpha, \bb_\alpha \rangle + \sum_{\alpha \neq \beta} \langle \bb_\alpha, \bb_\beta \rangle \right) + {\frac{O(n)}{{n_{t+1}}}} \delta + {\frac{O(n^2)}{{n_{t+1}^2}}} \delta^2 
\\ &= {\frac{1}{{n_{t+1}}}} + O(\kappa).
\end{align*}

For the fourth-order interaction terms, we discuss when 
$(\alpha, \beta, \gamma, \delta) \notin I_{4,m}$, i.e., $\langle \bb_\alpha, \bb_\beta, \bb_\gamma, \bb_\delta \rangle$ is trivial. Again, by the bit-counting argument in \Cref{lem:trivial-4-tuples}, the number of trivial ones are at most $O(n^2)$. Hence,
\begin{align*}
\frac{1}{B} \langle (\hat \z_{m, [t + 1]}^{(t + 1)})^4 \rangle 
& = {\frac{1}{{n_{t+1}^4} B}} \sum_{{\alpha,\beta,\gamma,\delta \in [n_{t+1}]}} \langle \x_{\alpha, [t + 1]}^{(t)}, \x_{\beta, [t + 1]}^{(t)}, \x_{\gamma, [t + 1]}^{(t)}, \x_{\delta, [t + 1]}^{(t)}\rangle
\\ & = {\frac{1}{{n_{t+1}^4} B}} \sum_{{\alpha,\beta,\gamma,\delta \in [n_{t+1}]}} \langle \bb_\alpha, \bb_\beta, \bb_\gamma, \bb_\delta \rangle + \sum_{j = 1}^3 {\frac{O(n^j)}{{n_{t+1}^j}}} \delta^j \kappa + {\frac{O(n^4)}{{n_{t+1}^4}}} \delta^4
\\ & \leq {\frac{1}{{n_{t+1}^4} B}} \left( \sum_{\alpha, \beta, \gamma,\delta \in I_{4, m}} O(B \kappa) + \sum_{\alpha, \beta, \gamma,\delta \notin I_{4, m}} B \right) + O(\delta)
\\ & \leq {\frac{1}{{n_{t+1}^4}}} (O(n^2) + {n_{t+1}^4} \kappa) + O(\delta)
\\ & = O(n^{-2} + \kappa).
\end{align*}

Now for $\langle (\hat \z_{m, [t + 1]}^{(t + 1)})^3, \x_{j, [t + 1]}^{(t)} \rangle$, 
assuming index $j$ is contained in $(\alpha, \beta, \gamma, \delta)$. Repeating the argument in \Cref{lemma:base}, we get
\begin{align*}
\frac{1}{B} \langle (\hat \z_{m, [t + 1]}^{(t + 1)})^3, \x_{j, [t + 1]}^{(t)} \rangle 
&\leq {\frac{1}{{n_{t+1}^3} B}} \sum_{{\alpha,\beta,\gamma \in [n_{t+1}]}} \langle \bb_\alpha, \bb_\beta, \bb_\gamma, \bb_j \rangle + O(n^{-1 - \epsilon / 4})
\\ & \leq {\frac{1}{{n_{t+1}^3}}} (O(n) + {n_{t+1}^3} \kappa) + O(\delta)
\\ & = O(n^{-2} + \kappa).
\end{align*}

For Eq.~\eqref{eq:icot-grad-noisy-2}, note that
\begin{align*}
\frac{1}{B}\langle \Pb_{t + 1}^\top \Pb_{t + 1} |\hat \z_m^{(t + 1)}|^4 \rangle = \frac{1}{B} \langle |\hat \z_{m, [t + 1]}^{(t + 1)}|^4 \rangle = \frac{1}{B} \langle (\hat \z_{m, [t + 1]}^{(t + 1)})^4 \rangle = O(n^{-2} + \kappa).
\end{align*}
and since $2c \hat \z_m^{(t + 1)} ,\x_j ^{(t)} - \hat \z_m^{(t + 1)}$ are contained in $[-1, 1]$ and $[-2,2]$ respectively. We have
\begin{align*}
{\frac{1}{{n_{t+1}}B}} \langle \Pb_{t + 1}^\top \Pb_{t + 1} |\hat \z_m^{(t + 1)}|^4 ,  2c \hat \z_m^{(t + 1)} , \x_j ^{(t)} - \hat \z_m^{(t + 1)} \rangle  
& = {\frac{4c \cdot O(n^{-2} + \kappa)}{{n_{t+1}}}} 
= O(n^{-2 - \epsilon / 4}).
\end{align*}

Finally for Eq.~\eqref{eq:icot-grad-noisy-3}, using Cauchy-Schwarz we have
\begin{align*}
\frac{1}{B} \langle  \Pb_{t + 1}^\top \Pb_{t + 1} |\hat \z_m^{(t + 1)}|^3 \rangle 
= \frac{1}{B} \langle |( \hat \z_{m, [t + 1]}^{(t + 1)})^3| \rangle 
& = \frac{1}{B} \sum_{i = 1}^B | (\hat \z_{m, [t + 1], i}^{(t + 1)})^3|  
\\ & \leq \frac{1}{B} \left (\sum_{i = 1}^B  (\hat \z_{m, [t + 1], i}^{(t + 1)})^2 \right)^{1/2} \left (\sum_{i = 1}^B  (\hat \z_{m, [t + 1], i}^{(t + 1)})^4 \right)^{1/2} 
\\ &= \frac{1}{B} O(B n^{-1})^{1/2} O(B n^{-1 - \epsilon / 4}  )^{1/2} 
\\ & = O(n^{-1 - \epsilon / 8}).
\end{align*}
By the definition of $\Pb_{t+1}$, we get
\begin{align*}
& {\frac{1}{{n_{t+1}} B}} \langle 
\Pb_{t + 1}^\top(\Pb_{t + 1}\phi(\hat \z_m^{(t + 1)}) - \bb_{m}),
O(|\hat \z_m^{(t + 1)}|^3),
\x_j^{(t)} - \hat \z_m^{(t + 1)}
\rangle 
\\ &=  {\frac{1}{{n_{t+1}} B}}     \langle 
\phi(\hat \z_m^{(t + 1)}) - \Pb_{t + 1}^\top \bb_{m},
O(\Pb_{t + 1}^\top\Pb_{t + 1}|\hat \z_m^{(t + 1)}|^3),
\x_j^{(t)} - \hat \z_m^{(t + 1)}
\rangle 
\\ & =  {\frac{4}{{n_{t+1}} B}}    O \left(\langle \Pb_{t + 1}^\top \Pb_{t + 1} |\hat \z_m^{(t + 1)}|^3 \rangle \right)
\\ & = O(n^{-2 - \epsilon / 8}).
\end{align*}
Combining all terms from \eqref{eq:icot-grad-signal} to \eqref{eq:icot-grad-noisy-3}, for any {$j \in [n_{t+1}]$}, we get
\[
     \frac{\partial \Lc^{(t + 1)}}{\partial w_{j, m}^{(t + 1)}}  = - {\frac{2c}{{n_{t+1}^2}}} \one{p[j] = m} + O(n^{-2- \epsilon / 8})
\]

{\textbf{Case 2: $m > n_{t + 2}$.} Under per-block gating, $g_m^{(t+1)} = 0$. 
The gated connection update at position $m$ becomes
\[
\x^{(t+1)}_{m, [t+2]} = \x^{(t)}_{m, [t+2]} + (1 - g_m^{(t+1)}) \x^{(t)}_{m, [t+1]} + g_m^{(t+1)} \phi(\hat \z_m^{(t+1)}) = \x^{(t)}_{m, [t+2]} + \x^{(t)}_{m, [t+1]},
\]
which does not depend on $\hat \z_m^{(t+1)}$ and hence not on $\w^{(t+1)}_m$. Therefore, $w^{(t+1)}_{j, m}$ is absent from the loss $\Lc^{(t+1)}$ entirely, 
giving
\[
\frac{\partial \Lc^{(t+1)}}{\partial w_{j, m}^{(t+1)}} = 0
\]
identically. After quantization, $w^{(t+1)}_{j, m}$ remains zero.}

\paragraph{Concentration of attention scores at \(m \in [n_{t+1}+1,n_{t+2}]\).} Similar to \Cref{lemma:base}, we choose a single stage-wise learning rate to ensure the non-child noise is rounded to zero using quantization,
\[
\eta_{t+1} := \frac{K_n \, n_{t+1}^2}{2c}.
\]
where $K_n = \lceil n^{\epsilon/16} \rceil$ and where $c > 0$ is the link function 
constant from \Cref{def:link}.
Equivalently, the update for
\(m\in[n_{t+1}+1,n_{t+2}]\) is
\begin{align}
\label{eq:updated-weights-inductive-preconditioned}
    w^{(t+1)}_{j,m}(t+1)
    &=
    q\!\left(
    -\eta_{t + 1}
    \widetilde\nabla_{w^{(t+1)}_{j,m}}\Lc^{(t+1)}
    \right).
\end{align}
Using the gradient expansion,
\[
-\widetilde\nabla_{w^{(t+1)}_{j,m}}\Lc^{(t+1)}
=
\frac{2c}{n_{t + 1}^2}\one{p[j]=m}
+
R_{j,m},
\]
where the noise term satisfies
\[
|R_{j,m}|\le C_\nabla n^{-2-\epsilon/8}
\]
uniformly over \(j\in[n_{t + 1}]\) and
\(m\in[n_{t+1}+1,n_{t+2}]\), we obtain
\begin{align}
    w^{(t+1)}_{j,m}(t+1)
    &=
    q\!\left(
    K_n\one{p[j]=m}
    +
    \Delta_{j,m}
    \right),
\end{align}
where
\[
\Delta_{j,m}
:=
\eta_{t + 1} R_{j,m}.
\]
Since \(n_{t + 1}\le 2n\) on the trainable range and
\(K_n\le 2n^{\epsilon/16}\) for all sufficiently large \(n\), we have
\[
|\Delta_{j,m}|
\le
\frac{K_nn_{t + 1}^2}{2c} C_\nabla n^{-2-\epsilon/8}
\le
C'_\nabla n^{-\epsilon/16},
\]
for a constant \(C'_\nabla>0\). Hence
\[
|\Delta_{j,m}|<\frac12
\]
for all sufficiently large \(n\).

\emph{Non-child indices.}
For \(j\notin\{c_1[m],c_2[m]\}\), the indicator \(\one{p[j]=m}\) vanishes.
Therefore
\[
w^{(t+1)}_{j,m}(t+1)
=
q(\Delta_{j,m})
=
0.
\]

\emph{Children indices.}
For \(j\in\{c_1[m],c_2[m]\}\), we have \(\one{p[j]=m}=1\). Since
\(K_n\in\mathbb Z\) and \(|\Delta_{j,m}|<1/2\),
\[
w^{(t+1)}_{j,m}(t+1)
=
q(K_n+\Delta_{j,m})
=
K_n.
\]
Thus
\[
w^{(t+1)}_{c_1[m],m}(t+1)
=
w^{(t+1)}_{c_2[m],m}(t+1)
=
K_n,
\]
while all non-child logits are zero.

\emph{Softmax concentration.}
The total softmax mass on non-child indices is bounded by
\[
\sum_{j\notin\{c_1[m],c_2[m]\}}
\sigma_j(\w^{(t+1)}_m)
\le
\frac{n}{2\exp(K_n)}
\le
n\exp(-K_n)
\le
\exp\!\left(-C n^{\epsilon/16}\right)
\]
for some constant \(C>0\) and all sufficiently large \(n\). Since the two
child logits are exactly equal, the two children receive the same softmax
weight. Therefore
\[
\frac{1-\exp(-C n^{\epsilon/16})}{2}
\le
\sigma_{c_1[m]}(\w^{(t+1)}_m)
=
\sigma_{c_2[m]}(\w^{(t+1)}_m)
\le
\frac12 .
\]

\paragraph{Evaluating the forward pass.}
To bound the prediction loss, we define the cumulative approximation error after stage $t$ as
\[
\epsilon_t := \max_{1 \leq s \leq t} \, \max_{m \in {[n_s + 1, n_{s+1}]}} {\|\bm{x}^{(s)}_{m,[s+1]} - \bm{b}_{m}\|_\infty}, \quad \epsilon_0 := 0.
\]
By construction, $\epsilon_t$ is non-decreasing in $t$ and bounds the error at every CoT position produced by layers $1, \ldots, t$. In particular, for any {$m \in [n_{t+1} + 1, n_{t+2}]$}, the two children {$c_1[m], c_2[m]$} both lie at levels $\leq t + 1$, so they were produced by some layer $s \leq t$ and satisfy
\[
{\|\bm{x}^{(t)}_{c_1[m],[t+1]} - \bm{b}_{c_1[m]}\|_\infty, \;\; \|\bm{x}^{(t)}_{c_2[m],[t+1]} - \bm{b}_{c_2[m]}\|_\infty \leq \epsilon_t.}
\]
    and for the intermediate state $\hat \z_{m, [t + 1]}^{(t+1)}$ we have
    \begin{align}
        & \left \|\hat \z_{m, [t + 1]}^{(t+1)} - {\frac{\bb_{c_1[m]} + \bb_{c_2[m]}}{2}}  \right \|_\infty  \notag
         \\ &\leq     \left \|\hat \z_{m, [t + 1]}^{(t+1)} - {\frac{ \x_{c_1[m], [t + 1]}^{(t)} + \x_{c_2[m], [t + 1]}^{(t)}}{2}}  \right \|_\infty + \epsilon_{t} \notag
        \\ & \leq   {\sum_{p[j] \neq m} \sigma_j (\w^{(t+1)}_m) + \left |\sigma_{c_1[m]} (\w^{(t+1)}_m) - \frac{1}{2} \right | + \left |\sigma_{c_2[m]} (\w^{(t+1)}_m) - \frac{1}{2} \right |} + \epsilon_{t} \notag
        \\ & \leq 2 \exp (-C n^{\epsilon / 16}) + \epsilon_t .\label{eq:upper-intermediate}
    \end{align}
Therefore, using the Taylor expansion of $\phi$, every newly produced position at stage $t+1$ satisfies
\begin{align*}
    \| \x_{m, [t + 2]}^{(t+1)} - \bb_{m} \|_\infty
    &= \left\| \phi(\hat \z_{m, [t + 1]}^{(t+1)}) - \phi \!\left(\frac{\bb_{c_1[m]} + \bb_{c_2[m]}}{2} \right) \right\|_\infty 
    \\ & \leq C_2 \bigl(2 \exp(-C n^{\epsilon / 16}) + \epsilon_t\bigr)^2,
\end{align*}
for some constant $C_2$ depending on $\phi$ only. 
Combined with the cumulative max, this gives the recursion 
\[
\epsilon_{t+1} \leq \max\!\Bigl\{\epsilon_t, \; C_2 \bigl(2 \exp(-Cn^{\epsilon/16}) + \epsilon_t\bigr)^2 \Bigr\}.
\]
We inductively show that $\epsilon_t \leq \delta := \exp(-Cn^{\epsilon/16})$ for all $t$. The base case $\epsilon_1 \leq \delta$ holds by \Cref{lemma:base}. Assume $\epsilon_t \leq \delta$. The first argument of the $\max$ is at most $\delta$ by hypothesis, and the second is bounded by
\[
C_2 (2\delta + \delta)^2 = 9 C_2 \delta^2.
\]
To conclude $\epsilon_{t+1} \leq \delta$, it suffices that $9 C_2 \delta \leq 1$, which holds for all sufficiently large $n$ since $\delta = \exp(-Cn^{\epsilon/16})$ decays faster than any polynomial. Thus $\epsilon_t \leq \delta$ for all $t \leq L = \log_2 k$ by induction. In particular, since position $m = T$ at the final stage $L$ is included in the cumulative max, we conclude
\[
\|\bm{x}^{(L)}_{T,[L+1]} - \bm{b}_T\|_\infty = \|\hat{f}(\bm{D}_{\text{test}}) - \y_{\text{test}}\|_\infty \leq \exp(-Cn^{\epsilon/16}).
\]
\end{proof}

We are now ready to combine the lemmas of this section into a proof 
of \Cref{thm:icot-cvg}.

\begin{proof}[Proof of \Cref{thm:icot-cvg}]
We prove by induction on $t \in \{1, \dots, L\}$ that after stage $t$, 
layers $1, \dots, t$ are well-trained (\Cref{def:well-trained}) and 
$\W^{(\ell)}(t) = 0$ for all $\ell \geq t+1$.

\paragraph{Base case ($t = 1$).} 
At initialization, $\W^{(\ell)}(0) = 0$ for all $\ell \in [L]$ 
(Algorithm 1). The stage-1 update modifies only $\W^{(1)}$, leaving 
$\W^{(\ell)}(1) = 0$ for all $\ell \geq 2$. By \Cref{lemma:base}, with 
probability $1 - \exp(-n^{\epsilon/2})$ over the stage-1 batch, layer 1 
is well-trained.

\paragraph{Inductive step ($t \to t+1$).} 
Assume that after stage $t$, layers $1, \dots, t$ are well-trained and 
$\W^{(\ell)}(t) = 0$ for all $\ell \geq t+1$. We show the same holds at 
stage $t+1$.

The inductive hypothesis verifies the assumption of 
\Cref{lemma:grad-upper-bounds}. Applying it, for every $\ell' \leq t$, 
$m \in [n_{\ell'} + 1, n_{\ell'+1}]$, and $j \in [n_{\ell'}]$,
\[
\left| \frac{\partial \mathcal{L}^{(t+1)}}{\partial w^{(\ell')}_{j,m}} \right| 
\leq O\big(\exp(-C n^{\epsilon/16})\big),
\]
and the quantized stage-$(t+1)$ update leaves 
$w^{(\ell')}_{j,m}(t+1) = w^{(\ell')}_{j,m}(t)$. Hence layers 
$1, \dots, t$ remain well-trained after stage $t+1$. The inductive 
hypothesis also verifies the assumption of \Cref{lem:inductive-step}. 
Applying it, with probability $1 - \exp(-n^{\epsilon/2})$ over the 
fresh stage-$(t+1)$ batch, layer $t+1$ is well-trained. The stage-$(t+1)$ 
update modifies only $\W^{(t+1)}$, so $\W^{(\ell)}(t+1) = \W^{(\ell)}(t) = 0$ 
for all $\ell \geq t+2$. This completes the inductive step.

\paragraph{Union bound and conclusion.} 
Each stage's well-trainedness conclusion holds with probability 
$1 - \exp(-n^{\epsilon/2})$ over its fresh batch, and the $L = \log_2 k$ 
batches are independent. By a union bound, with probability at least
\[
1 - L \exp(-n^{\epsilon/2}) = 1 - \exp(-\Omega(n^{\epsilon/2}))
\]
(using $L = O(\log n)$), every layer $\ell \in [L]$ is well-trained 
after stage $L$. By \Cref{def:well-trained} applied at layer $L$ 
and position $m = T$ (which lies in $[n_L + 1, n_{L+1}]$ since 
$n_{L+1} = T$),
\[
\|\x^{(L)}_{T, [L+1]} - \bb_T\|_\infty \leq \exp(-C n^{\epsilon/16}).
\]
By \Cref{lemma:hc-propagation} applied at $\beta = T$, $\ell = L$, 
the test-time output at position $T$ satisfies
\[
f^L_{\hat\bte}(\Db_{\test})_T = \Psi_L^\top \Tc_{\hat\bte}(\Db_{\test})_T 
= \x^{(L)}_{T, [L+1]},
\]
and the target is $\y_{\text{test}} = \bb_T$. Therefore
\[
\| f_{\hat \bte}^{L} (\Db_{\test})_T - \y_{\test} \|_\infty 
\leq \exp(-C n^{\epsilon/16}) = \exp(-\Omega(n^{\epsilon/16})),
\]
which establishes the theorem.
\end{proof}

\subsection{Gradient Upper Bound for Trained Layers} \label{app:upper-bound}

\begin{lemma}[Frozen gradient]\label{lemma:grad-upper-bounds}
Fix any $t \in \{1, \dots, L-1\}$. Suppose layers 
$1, \dots, t$ are well-trained (\Cref{def:well-trained}) and 
$\W^{(\ell)}(t) = 0$ for all $\ell \geq t+1$. Then during one stage-$(t+1)$ 
gradient step, for any $\ell' \leq t$, any $m \in [n_{\ell'} + 1, n_{\ell'+1}]$, 
and any {$j \in [n_{\ell'}]$},
\[
\left| \frac{\partial \mathcal{L}^{(t+1)}}{\partial w^{(\ell')}_{j,m}} \right| 
\leq O\big(\exp(-Cn^{\epsilon/16})\big).
\]
Consequently, with {the stage-wise learning rate $\eta_{t+1} = \Theta(K_n n_{t+1}^2)$ 
(where $K_n := \lceil n^{\epsilon/16} \rceil$)}, the weights in all previously 
trained layers remain unchanged after quantization:
\[
w^{(\ell')}_{j,m}(t+1) = w^{(\ell')}_{j,m}(t) \quad 
\text{for all } \ell' \leq t,\; m \in [n_{\ell'}+1, n_{\ell'+1}],\; {j \in [n_{\ell'}]}.
\]
\end{lemma}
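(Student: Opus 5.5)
The plan is to differentiate $\mathcal{L}^{(t+1)}$ through the residual stream, using \Cref{lemma:hc-propagation} and \Cref{cor:frozen-derivative}, and to show that every path from the loss to a trained weight passes through a factor $\phi'(\hat\z^{(\ell')}_{m,[\ell']})$ evaluated at a well-trained query $m$.

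\textbf{Step 1: reduce to the gradient of one hidden state.} The stage-$(t+1)$ loss reads out block $[t+2]$ at positions $\beta\in[n_{t+1}+1,T]$. By \Cref{cor:frozen-derivative}, the parameter $w^{(\ell')}_{j,m}$ can influence $\x^{(t+1)}_{\beta,[t+2]}$ only when $h[\beta]-1\ge \ell'$. Moreover, it enters the forward pass only through $\hat\z^{(\ell')}_{m,[\ell']}$, that is, through the single hidden state $\x^{(\ell')}_{m,[\ell'+1]}=\phi(\hat\z^{(\ell')}_{m,[\ell']})$. By the chain rule,
\[
\frac{\partial \mathcal{L}^{(t+1)}}{\partial w^{(\ell')}_{j,m}}
=\Bigl\langle \frac{\partial \mathcal{L}^{(t+1)}}{\partial \x^{(\ell')}_{m,[\ell'+1]}},\;
\phi'(\hat\z^{(\ell')}_{m,[\ell']})\odot \sigma_j(\w^{(\ell')}_m)\bigl(\x^{(\ell'-1)}_{j,[\ell']}-\hat\z^{(\ell')}_{m,[\ell']}\bigr)\Bigr\rangle .
\]

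\textbf{Step 2: bound the upstream factor polynomially.} I bound $\partial\mathcal{L}^{(t+1)}/\partial\x^{(\ell')}_{m,[\ell'+1]}$ by repeating the argument of \Cref{lem:per-node-sensitivity}, run from the hidden state rather than from the weight. Each later layer multiplies the sensitivity by at most $L_\phi$, and softmax weights sum to $1$. The residuals $\Pb\phi(\cdot)-\bb$ are bounded by $2$, there are $O(n)$ loss positions, and there is a factor $1/B$ per sample. Altogether the factor is at most $O(n\,L_\phi^{L})=\poly(n)$ per batch coordinate, in the normalized inner product. In the same way, $\|\x^{(\ell'-1)}_{j,[\ell']}-\hat\z\|_\infty\le 2$.

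\textbf{Step 3: exponential smallness of $\phi'$; this is the main obstacle.} By well-trainedness of layer $\ell'$, the estimate already derived in \Cref{lemma:base} and \Cref{lem:inductive-step} (see \eqref{eq:upper-intermediate}) gives
\[
\|\hat\z^{(\ell')}_{m,[\ell']}-s_m\|_\infty\le 2\exp(-Cn^{\epsilon/16})+\epsilon_{\ell'-1}\le 3\exp(-Cn^{\epsilon/16}),
\]
where $s_m\in\{-1,0,1\}$ entrywise. Since $\phi'(s)=0$ at these points and $\phi$ is smooth, $\|\phi'(\hat\z)\|_\infty\le \|\phi''\|_\infty\cdot 3\exp(-Cn^{\epsilon/16})$.

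The delicate point is that this bound must hold on the stage-$(t+1)$ batch, including at padded CoT positions. There the children's hidden states are themselves computed through layers $<\ell'$, not given as inputs. I therefore apply \Cref{lemma:hidden-state} with $s=t+1$ to guarantee that every child state equals $\bb$ up to $\delta$, so the well-trained softmax concentration still yields the near-$\{-1,0,1\}$ argument. I would also check that $\epsilon_{\ell'-1}\le\delta$ holds for the fresh batch. This follows because the forward recursion is deterministic given the concentrated weights and does not use concentration of the batch.

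\textbf{Step 4: combine and quantize.} Multiplying the three factors gives $|\partial\mathcal{L}^{(t+1)}/\partial w^{(\ell')}_{j,m}|\le\poly(n)\exp(-Cn^{\epsilon/16})$. Shrinking $C$ absorbs the polynomial, which yields the claimed $O(\exp(-Cn^{\epsilon/16}))$. Adding the oracle error requires care: the stated $O(n^{-2-\epsilon/8})$ accuracy times $\eta_{t+1}=O(n^{2+\epsilon/16})$ is $O(n^{-\epsilon/16})<1/2$. So even the approximate update satisfies $|\eta_{t+1}\tilde\nabla|<1/2$. Since $w^{(\ell')}_{j,m}(t)$ is an integer, $q$ rounds the update back to it, and $w^{(\ell')}_{j,m}(t+1)=w^{(\ell')}_{j,m}(t)$.
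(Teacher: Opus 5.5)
Your proposal is correct and is essentially the paper's argument. Both isolate the factor $\phi'(\hat\z^{(\ell')}_{m,[\ell']})$ at the well-trained query and make it exponentially small via softmax concentration and $\phi'(\{-1,0,1\})=0$. Both bound propagation through later layers by a factor $L_\phi$ per layer times $O(n)$ loss positions, and both freeze the integer weight because $\eta_{t+1}$ times (gradient plus oracle error) is $O(n^{-\epsilon/16})<1/2$. The differences are cosmetic: you run the chain rule in reverse mode (a $\poly(n)$ upstream factor times one small local factor), while the paper tracks the forward sensitivity $\xi^{(\ell)}_{j,m,\ell'}$ layer by layer. You also check explicitly that the forward-error bound holds deterministically on the fresh, padded stage-$(t+1)$ batch, which the paper leaves implicit.
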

\begin{proof}
Throughout, fix $\ell' \leq t$ and a parameter $w^{(\ell')}_{j, m}$ with 
$n_{\ell'} + 1 \leq m \leq n_{\ell' + 1}$ and {$1 \leq j \leq n_{\ell'}$}. 
Let $L_\phi := \|\phi'\|_\infty$.

\paragraph{Step 1: Vanishing of $\phi'$ at well-trained positions.} Recall from \Cref{lem:inductive-step} that, by the assumption on the softmax weights, for any $\ell'' \leq t$ and 
$n_{\ell''} + 1 \leq \alpha \leq n_{\ell''+1}$,
\begin{align}
    \left\| \hat \z^{(\ell'')}_{\alpha, [\ell'']} - 
    \frac{\bb_{c_1[\alpha]} + \bb_{c_2[\alpha]}}{2} \right\|_\infty 
    \leq (2n - 1) \exp(-C n^{\epsilon/16}).
    \label{eq:step1-concentration}
\end{align}
By \Cref{def:link}, $\phi'(0) = \phi'(\pm 1) = 0$, and the local Taylor expansions $\phi'(t) = 2ct + O(|t|^3)$ near $0$ and $\phi'(t) = O(|1 - |t||)$ near $\pm 1$ yield a constant $C_\phi > 0$ such that
\begin{align}
    |\phi'(t)| \leq C_\phi \, |t - z^\star_i| 
    \quad \text{whenever $z^\star_i \in \{-1, 0, +1\}$ is the nearest zero of $\phi'$ to $t$.} \label{eq:step1-lipschitz}
\end{align}
Since $\bb_{c_1[\alpha]}, \bb_{c_2[\alpha]} \in \{-1, +1\}^B$, their average 
lies coordinate-wise in $\{-1, 0, +1\}$. Applying \eqref{eq:step1-lipschitz} coordinate-wise with $\eqref{eq:step1-concentration}$,
\begin{align*}
    \| \phi'(\hat \z^{(\ell'')}_{\alpha, [\ell'']}) \|_\infty 
    \leq C_\phi \, \| \hat \z^{(\ell'')}_{\alpha, [\ell'']} - \frac{\bb_{c_1[\alpha]} + \bb_{c_2[\alpha]}}{2} \|_\infty 
    \leq O(\exp(-C n^{\epsilon/16})).
\end{align*}

\paragraph{Step 2: Per-node sensitivity under the well-trained hypothesis.}
Recall the per-node sensitivity defined in \Cref{lem:per-node-sensitivity}:
\begin{align*}
    \xi^{(\ell)}_{j, m, \ell'} := \max_{\beta \,:\, h[\beta] \leq \ell + 1} 
    \left\| \frac{\partial \x^{(\ell)}_{\beta, [\ell+1]}}{\partial w^{(\ell')}_{j, m}} \right\|_\infty.
\end{align*}
We claim that for all $\ell' \leq \ell \leq t$,
\begin{align*}
    \xi^{(\ell)}_{j, m, \ell'} \leq O(\exp(-C n^{\epsilon/16})).
\end{align*}

\emph{Base case $\ell = \ell'$.} By \Cref{cor:frozen-derivative}, only $\beta$ 
with active layer equal to $\ell'$, i.e., $\nu := h[\beta] = \ell' + 1$, contribute. 
For such $\beta$,
\begin{align*}
    \frac{\partial \x^{(\ell')}_{\beta, [\ell'+1]}}{\partial w^{(\ell')}_{j, m}} 
    = \phi'(\hat \z^{(\ell')}_{\beta, [\ell']}) \odot 
    \frac{\partial \hat \z^{(\ell')}_{\beta, [\ell']}}{\partial w^{(\ell')}_{j, m}},
\end{align*}
and the parameter $w^{(\ell')}_{j, m}$ enters only through the softmax at 
query position $\beta = m$, giving
\begin{align*}
    \frac{\partial \hat \z^{(\ell')}_{m, [\ell']}}{\partial w^{(\ell')}_{j, m}} 
    = \sigma_j(\w^{(\ell')}_m) \bigl( \x^{(\ell'-1)}_{j, [\ell']} - \hat \z^{(\ell')}_{m, [\ell']} \bigr).
\end{align*}
Since $\x^{(\ell'-1)}_{j, [\ell']}, \hat \z^{(\ell')}_{m, [\ell']} \in [-1, 1]^B$, 
their difference lies in $[-2, 2]^B$. Combining with Step 1,
\begin{align*}
    \xi^{(\ell')}_{j, m, \ell'} \leq 2\, \| \phi'(\hat \z^{(\ell')}_{m, [\ell']}) \|_\infty 
    \cdot \sigma_j(\w^{(\ell')}_m) \leq O(\exp(-C n^{\epsilon/16})).
\end{align*}

\emph{Inductive step $\ell \Rightarrow \ell + 1$ ($\ell + 1 \leq t$).} 
Take $\beta$ with $h[\beta] \leq \ell + 2$.

\emph{Sub-case A: $h[\beta] \leq \ell + 1$.} By \Cref{lemma:hc-propagation}, 
$\x^{(\ell+1)}_{\beta, [\ell+2]} = \x^{(\ell)}_{\beta, [\ell+1]}$, so the 
derivative is bounded by $\xi^{(\ell)}_{j, m, \ell'}$.

\emph{Sub-case B: $h[\beta] = \ell + 2$.} \Cref{cor:frozen-derivative} gives
\begin{align*}
    \frac{\partial \x^{(\ell+1)}_{\beta, [\ell+2]}}{\partial w^{(\ell')}_{j, m}} 
    = \phi'(\hat \z^{(\ell+1)}_{\beta, [\ell+1]}) \odot 
    \frac{\partial \hat \z^{(\ell+1)}_{\beta, [\ell+1]}}{\partial w^{(\ell')}_{j, m}}.
\end{align*}
Since $\ell + 1 \leq t$, Step 1 yields $\| \phi'(\hat \z^{(\ell+1)}_{\beta, [\ell+1]}) \|_\infty 
\leq O(\exp(-C n^{\epsilon/16}))$. Differentiation passes through the softmax 
(since $\ell' \leq \ell < \ell + 1$):
\begin{align*}
    \frac{\partial \hat \z^{(\ell+1)}_{\beta, [\ell+1]}}{\partial w^{(\ell')}_{j, m}} 
    = \sum_{\gamma} \sigma_\gamma(\w^{(\ell+1)}_\beta) \, 
    \frac{\partial \x^{(\ell)}_{\gamma, [\ell+1]}}{\partial w^{(\ell')}_{j, m}},
\end{align*}
each summand bounded by $\xi^{(\ell)}_{j, m, \ell'}$ in $\infty$-norm and 
the softmax weights summing to $1$. Hence
\begin{align*}
    \left\| \frac{\partial \x^{(\ell+1)}_{\beta, [\ell+2]}}{\partial w^{(\ell')}_{j, m}} \right\|_\infty 
    \leq O(\exp(-C n^{\epsilon/16})) \cdot \xi^{(\ell)}_{j, m, \ell'}.
\end{align*}

\emph{Combining sub-cases.} 
\begin{align*}
    \xi^{(\ell+1)}_{j, m, \ell'} \leq \xi^{(\ell)}_{j, m, \ell'} \leq O(\exp(-C n^{\epsilon/16})),
\end{align*}
where the second inequality is the inductive hypothesis.

\paragraph{Step 3: Bounding the layer-$(t + 1)$ output gradient.}
For any $\alpha \in [n_{t+1} + 1, T]$, the softmax weights at layer $t + 1$ 
are independent of $w^{(\ell')}_{j, m}$ as layer $t + 1$ has not yet been 
trained. By chain rule,
\begin{align*}
    \frac{\partial \phi(\hat \z^{(t+1)}_{\alpha, [t+1]})}{\partial w^{(\ell')}_{j, m}} 
    = \phi'(\hat \z^{(t+1)}_{\alpha, [t+1]}) \odot \sum_\gamma 
    \sigma_\gamma(\w^{(t+1)}_\alpha) \cdot 
    \frac{\partial \x^{(t)}_{\gamma, [t+1]}}{\partial w^{(\ell')}_{j, m}}.
\end{align*}
Bounding by $\|\phi'\|_\infty \leq L_\phi$, summing softmax weights to $1$, 
and using Step 2's bound $\xi^{(t)}_{j, m, \ell'} \leq O(\exp(-C n^{\epsilon/16}))$,
\begin{align*}
    \left\| \frac{\partial \phi(\hat \z^{(t+1)}_{\alpha, [t+1]})}{\partial w^{(\ell')}_{j, m}} \right\|_\infty 
    \leq L_\phi \cdot \xi^{(t)}_{j, m, \ell'} \leq O(\exp(-C n^{\epsilon/16})).
\end{align*}
\paragraph{Step 4: Pulling back through the loss.}
The stage-$(t+1)$ loss reads
\begin{align*}
    \Lc^{(t+1)}(\bte) = \frac{1}{2B} \sum_{\alpha = n_{t+1} + 1}^{T} 
    \| \Pb_{t+1} \left(g_\alpha^{(t + 1)} \phi(\hat \z^{(t+1)}_\alpha) 
    + (1 - g_\alpha^{(t + 1)}) \x_\alpha^{(t)} \right)  - \bb_{\alpha} \|^2.
\end{align*}
Splitting the sum by the gating rule Eq.~\eqref{eq:gating-weights},  
$g_\alpha^{(t+1)} = 1$ for $\alpha \in [n_{t+1}+1, n_{t+2}]$ and 
$g_\alpha^{(t+1)} = 0$ for $\alpha > n_{t+2}$ and differentiating,
\begin{align*}
    &\left| \frac{\partial \Lc^{(t+1)}}{\partial w^{(\ell')}_{j, m}} \right| \\
    &= \frac{1}{B} \left| \sum_{\alpha = n_{t+1}+1}^{n_{t+2}} 
    \bigl( \Pb_{t+1} \phi(\hat \z^{(t+1)}_\alpha) - \bb_{\alpha} \bigr)^\top 
    \frac{\partial \Pb_{t+1} \phi(\hat \z^{(t+1)}_\alpha)}{\partial w^{(\ell')}_{j, m}} 
    + \sum_{\alpha = n_{t+2}+1}^{T} 
    \bigl( \Pb_{t+1} \x_\alpha^{(t)} - \bb_{\alpha} \bigr)^\top 
    \frac{\partial \Pb_{t+1} \x_\alpha^{(t)}}{\partial w^{(\ell')}_{j, m}} \right| \\
    &\overset{(a)}{\leq} \sum_{\alpha = n_{t+1}+1}^{n_{t+2}} \frac{1}{B} 
    \| \Pb_{t+1} \phi(\hat \z^{(t+1)}_\alpha) - \bb_{\alpha} \|_1 \cdot 
    \left\| \frac{\partial \phi(\hat \z^{(t+1)}_{\alpha, [t+1]})}{\partial w^{(\ell')}_{j, m}} \right\|_\infty \\
    &\qquad + \sum_{\alpha = n_{t+2}+1}^{T} \frac{1}{B} 
    \| \Pb_{t+1} \x_\alpha^{(t)} - \bb_{\alpha} \|_1 \cdot 
    \left\| \frac{\partial \x^{(t)}_{\alpha, [t+1]}}{\partial w^{(\ell')}_{j, m}} \right\|_\infty \\
    &\overset{(b)}{\leq} 2 \sum_{\alpha = n_{t+1}+1}^{n_{t+2}} 
    \left\| \frac{\partial \phi(\hat \z^{(t+1)}_{\alpha, [t+1]})}{\partial w^{(\ell')}_{j, m}} \right\|_\infty 
    + 2 \sum_{\alpha = n_{t+2}+1}^{T} 
    \left\| \frac{\partial \x^{(t)}_{\alpha, [t+1]}}{\partial w^{(\ell')}_{j, m}} \right\|_\infty \\
    &\overset{(c)}{\leq} O(\exp(-C n^{\epsilon/16})),
\end{align*}
where (a) applies H\"older's inequality and uses that 
$\Pb_{t+1}$ extracts block $[t+1]$, (b) uses 
$\Pb_{t+1} \phi(\hat \z^{(t+1)}_\alpha)$, $ \Pb_{t+1} \x_\alpha^{(t)}, 
\bb_{\alpha} \in [-1, 1]^B$, and (c) bounds the first sum by Step 3 and the 
second sum by the fact that positions $\alpha \geq n_{t + 2} + 1$ has not been written by layer $\ell' \in [t]$. 
The factor of $T \leq 2n$ from each sum is absorbed into the exponential.

\paragraph{Step 5: Quantization freezes the weight.} {With the stage-wise learning rate $\eta_{t+1} = \Theta(K_n n_{t+1}^2)$ 
where $K_n = \lceil n^{\epsilon/16} \rceil$,} the proposed update reads
\begin{align*}
    w^{(\ell')}_{j, m}(t) - \eta_{t+1} \, \widetilde\nabla_{w^{(\ell')}_{j, m}} \Lc^{(t+1)} 
    & = w^{(\ell')}_{j, m}(t) + {O(K_n n_{t+1}^2 \left ( \exp(-C n^{\epsilon/16}) + n^{-2 - \epsilon / 8}\right)}
    \\ & = w^{(\ell')}_{j, m}(t) + O(n^{-\epsilon / 16}),
\end{align*}
which the nearest-integer operator $q(\cdot)$ rounds back to $w^{(\ell')}_{j, m}(t)$ for sufficiently large $n$.
\end{proof}

\end{document}